\documentclass{article}
\PassOptionsToPackage{numbers, sort&compress}{natbib}
  \usepackage[preprint]{neurips_2026}

\usepackage[utf8]{inputenc} 
\usepackage[T1]{fontenc}    
\usepackage{hyperref}       
\usepackage{url}            
\usepackage{booktabs}       
\usepackage{amsfonts}       
\usepackage{nicefrac}       
\usepackage{microtype}      
\usepackage{xcolor}         
\usepackage[pdftex]{graphicx}

\usepackage{algorithm}
\usepackage{algpseudocode}
\usepackage{subcaption}
\usepackage{graphicx}

\algrenewcommand{\algorithmiccomment}[1]{\hfill$\triangleright$ #1}

\usepackage{amsmath}
\usepackage{amssymb}
\usepackage{bm}
\usepackage{amsthm}
\usepackage{wrapfig}
\usepackage{multirow}
\usepackage{multicol}
\usepackage{enumitem}
\usepackage[table]{xcolor}
\usepackage{booktabs}
\usepackage{multirow}
\usepackage{threeparttable}
\usepackage[symbol*]{footmisc}
\usepackage[skins,breakable]{tcolorbox}
\usepackage{fontawesome5}
\usepackage{pifont}
\newtcolorbox{takeawayA}{
    enhanced,
    arc=0pt, 
    boxrule=0pt, 
    borderline west={3pt}{0pt}{black!60}, 
    colback=black!4, 
    left=10pt, right=10pt, top=8pt, bottom=8pt,
    before skip=12pt, after skip=12pt 
}

\newtcolorbox{takeawayB}{
    enhanced,
    arc=3pt, 
    boxrule=0.5pt, 
    colframe=black!50,
    colback=black!2,
    left=10pt, right=10pt, top=8pt, bottom=8pt,
    before skip=12pt, after skip=12pt
}

\newtcolorbox{introquestion}{
    enhanced,
    arc=6pt, 
    boxrule=1.5pt, 
    colframe=black!65, 
    colback=black!3, 
    left=10pt, right=10pt, top=4pt, bottom=4pt, 
    halign=center, 
    fontupper=\itshape, 
    before skip=3pt, after skip=3pt
}

\newtcolorbox{fancyTakeawayA}{
    enhanced,
    arc=2pt, 
    boxrule=0.5pt,
    colframe=blue!30!black, 
    colback=blue!2, 
    borderline west={3pt}{0pt}{blue!40!black}, 
    left=12pt, right=12pt, top=4pt, bottom=4pt,
    before skip=5pt, after skip=5pt
}

\newtcolorbox{fancyTakeawayB}{
    enhanced,
    arc=4pt,
    boxrule=0.2pt, 
    colframe=black!15, 
    colback=white, 
    drop fuzzy shadow=black!10, 
    left=12pt, right=12pt, top=10pt, bottom=10pt,
    before skip=15pt, after skip=15pt
}

\newtcolorbox{fancyTakeawayC}{
    enhanced,
    frame hidden, 
    boxrule=0pt,
    colback=black!4, 
    borderline top={1.2pt}{0pt}{black!70}, 
    borderline bottom={1.2pt}{0pt}{black!70}, 
    left=12pt, right=12pt, top=10pt, bottom=10pt,
    before skip=15pt, after skip=15pt
}

\newtheorem{lemma}{Lemma}
\newtheorem{theorem}{Theorem}
\newtheorem{proposition}{Proposition}
\newtheorem{definition}{Definition}

\newtheorem{remark}{Remark}

\ifx\assumption\undefined
\newtheorem{assumption}{Assumption}

\fi
\theoremstyle{remark}

\newcommand{\M}{\mathcal{M}}
\renewcommand{\L}{\mathcal{L}}

\newcommand{\bW}{\mathbf{W}}
\newcommand{\bU}{\mathbf{U}}
\newcommand{\bV}{\mathbf{V}}

\newcommand{\bx}{\mathbf{x}}
\newcommand{\bX}{\mathbf{X}}
\newcommand{\bY}{\mathbf{Y}}

\newcommand{\bv}{\mathbf{v}}
\newcommand{\bu}{\mathbf{u}}

\newcommand{\bI}{\mathbf{I}}

\newcommand{\bG}{\mathbf{G}}

\newcommand{\bT}{\mathbf{T}}

\newcommand{\EE}{\mathbb{E}}

\newcommand{\cR}{\mathcal{R}}
\newcommand{\cM}{\mathcal{M}}

\newcommand{\ovec}{\operatorname{vec}}
\newcommand{\Tr}{\operatorname{Trace}}
\newcommand{\din}{D_{\textrm{in}}}
\newcommand{\dout}{D_{\textrm{out}}}
\newcommand{\diag}{\mathrm{diag}}
\newcommand{\proj}{\mathrm{proj}}

\newcommand{\T}{\textrm{T}}

\newcommand{\ourname}{\texttt{MACRO}}

\title{Demystifying Manifold Constraints in LLM Pre-training}

\author{
{Kang An$^{1}$\thanks{Equal Contribution.},
Jiaxiang Li$^{2}$\footnotemark[1],
Donald Goldfarb$^{3}$,
Shiqian Ma$^{1}$}
\\ \\
{\normalsize $^{1}$Rice University 
$^{2}$Independent Researcher $^{3}$Columbia University} \\
\texttt{\{kang.an,shiqian.ma\}@rice.edu, jasonljx96@gmail.com, goldfarb@columbia.edu}
}

\begin{document}

\maketitle

\begin{abstract}
  The empirical success of large language model (LLM) pre-training relies heavily on heuristic stabilization techniques, such as explicit normalization layers and weight decay. While recent constrained optimization approaches 
  that explicitly restrict weights may improve numerical stability and performance, the mechanism and motivation for adding constraints still remain elusive. This paper systematically demystifies the role of explicit manifold constraints in LLM pre-training. By introducing the Msign-Aligned Constrained Riemannian Optimizer (\ourname{})—a provably convergent, single-loop optimization framework—our study disentangles weight regularization heuristics
  from interacting mechanisms like RMS normalization and decoupled weight decay. Theoretical analyses and comprehensive empirical evaluations reveal that manifold constraints independently bound forward activation scales and enforce stable rotational equilibrium, thereby subsuming the roles of these heuristic mechanisms. Evaluations on large-scale LLM architectures demonstrate that \ourname{} achieves highly competitive performance while rigorously preserving the theoretical guarantees of exact Riemannian optimization. 
\end{abstract}

\section{Introduction}

Large language model (LLM) pre-training is increasingly shaped by mechanisms that control the scale and geometry of internal representations, including normalization layers, weight decay, and optimizer design. These mechanisms all implicitly regulate how activations and gradients propagate across depth, and they indeed achieve promising model performance. However, how much of training stability comes from the geometry effect of the weights, and how much is inherited from normalization layers and optimizer heuristics, is not fully understood. Recently, a line of work proposed optimization methods~\citep{zhao2026principledmuonmumathsfpensuring,jiang2026enhancingllmtrainingspectral,arefin2026learning,gu2026manorestrikingmanifoldoptimization,newhouse2025trainingtransformersenforcedlipschitz,xie2026controlledllmtrainingspectral,yang2026manifoldconstrainedsteepestdescent,dolatabadi2026numuonnuclearnormconstrainedmuoncompressible,kexuefm-11241, Su_Muon_Orthogonal} that approximately constrain weights to structured sets, 
to control their scale. It is found that such constraints can empirically improve numerical stability and model performance. However, the reason behind this success remains poorly understood.  While recent works have begun to explore manifold constraints, existing methods typically suffer from one of three critical limitations. First, some approaches rely on heuristic approximations \citep{wen2025hyperball} that lack rigorous geometric optimization guarantees. Second, some methods employ  double-loop algorithms \citep{ xie2026controlledllmtrainingspectral} that are computationally undesirable for large-scale pre-training. Finally, some methods based on Riemannian optimization techniques \citep{bernstein2025manifolds, yang2026manifoldconstrainedsteepestdescent} lack rigorous pre-training evaluations on modern LLM architectures and their practical scalability is thus not clear. More related works are in Appendix \ref{app:related works}.

The relation of explicit manifold constraints 
with other techniques for training stability (normalization layer, weight decay, etc.) is subtle in modern LLMs because weight constraints do not act in isolation. 
In architectures with RMS normalization, residual connections, and gated nonlinearities, the apparent effect of a geometric constraint may not be intuitive. A larger constraint radius may increase the raw scale of pre-normalized activations, but the normalization layer immediately rescales them away (see section~\ref{sec:rmsnorm_interplay}). Moreover, while weight decay~\cite{loshchilov2019decoupledweightdecayregularization} was originally proposed to improve generalization, recent works~\cite{defazio2025gradientsrapidlyincreasenear,kosson2024rotationalequilibriumweightdecay,xie2024overlookedpitfallsweightdecay, dangelo2024needweightdecaymodern} show that this technique also significantly shifts training dynamics. Because weight decay actively shrinks weights at each training step, applying an additional geometric constraint might appear redundant. Without disentangling these interacting mechanisms, it is hard to compare different constraints or to design principled constrained optimizers.

We aim to systematically study the effect of manifold constraints in LLM pre-training, and their interplay with normalization layers, weight decay, and algorithm heuristics. To this end, we consider explicitly constraining weight matrices to manifolds with prescribed geometry, such as spectral spheres, Frobenius spheres, and oblique manifolds.
We 
explore the effect of applying these  constraints from two perspectives: weight update dynamics and forward activation control. By explicitly tracking how these constraints govern both the amplification of layer outputs and the optimization trajectories, we isolate their intrinsic geometric control from the effects imposed by standard architectural heuristics. This comprehensive perspective directly addresses the fundamental question:

\begin{introquestion}
    Do the intrinsic geometric properties of manifold constraints subsume the 
    heuristics widely used in modern LLM pre-training?
\end{introquestion}

Our contributions are summarized as follows:
\begin{enumerate}[leftmargin=*]
    \item \textbf{A Provable and Efficient Optimizer for Manifold Constrained LLM Pre-training (Section \ref{sec:algorithm}).} To systematically investigate the impact of manifold constraints in LLM pre-training, we propose a novel, single-loop optimizer: \textbf{M}sign-\textbf{A}ligned \textbf{C}onstrained \textbf{R}iemannian \textbf{O}ptimizer (\ourname{}). 
    \ourname{} is essentially the Riemannian spectral SGD method with rigorous convergence guarantees and it bridges the gap between Riemannian optimization and practical large-scale deep learning.

    \item 
    {\textbf{Differentiate the effects among different Manifold Constraints (Section \ref{sec:activation_control}).} We theoretically establish forward activation bounds for linear layers, demonstrating that the spectral sphere regulates worst-case activation scales, while the Frobenius sphere controls average-case behavior. Furthermore, our empirical comparison reveals that the Frobenius constraint consistently achieves lower validation loss than Oblique manifolds while maintaining similar internal training dynamics.}
    
    \item \textbf{Interplay with Normalization Layers (Section \ref{sec:rmsnorm_interplay}).} We empirically reveal a strong overlap between geometric constraints and explicit RMS normalization layer. By completely \textbf{removing the learnable normalization} from linear blocks, we demonstrate that \ourname{} maintains strict stability where standard optimizers may diverge, validating that manifold constraints naturally substitute the need for explicit normalization in the forward process.
    
    \item \textbf{Interplay with Weight Decay (Section \ref{sec:effects_on_wd}).} We theoretically and empirically reveal how manifold constraints fundamentally alter the training dynamics of LLMs. Specifically, we demonstrate that our geometric updates intrinsically lock the relative learning rate and govern the rotation angle between consecutive updates. 
    This intrinsic geometric regulation formally fulfills the role of traditional weight decay, offering a principled alternative to heuristic penalty tuning.
    
    \item \textbf{Empirical Justification (Section \ref{sec:justification}).} Comprehensive evaluations on standard LLM baselines (with standard architectures) show that \ourname{} achieves highly competitive, and often slightly superior, performance compared to strong methods such as MuonH~\cite{wen2025hyperball}, but with the added benefit of theoretical guarantees. Furthermore, compared to strict double-loop projection methods (e.g., SSO~\cite{xie2026controlledllmtrainingspectral}), our single-loop approach offers a crucial efficiency-feasibility trade-off, achieving substantial computational savings with only a negligible relaxation in constraint exactness.
\end{enumerate}

\section{Notation and Preliminaries}
\label{sec:background}
{\bf Notation.} 
Throughout this paper, we use capital bold letters such as $\bW$ to represent matrices and $\bW_{i,:}$ ($\bW_{:,j}$) to denote the $i$-th row ($j$-th column) of $\bW$. Moreover,
$\|\bW\|_F$ denotes the Frobenius norm of $\bW$; $\|\bW\|_2$ denotes the spectral norm of a matrix $\bW$, and  $\|\bW\|_{*}$ denotes the nuclear norm of $\bW$, which is the dual norm of $\|\bW\|_2$; 
$\|\bx\|_2$ denotes the Euclidean norm of a vector $\bx$;
$\operatorname{msign}(\bW)$ denotes the Matrix Sign Operator of matrix $\bW$. That is, $\operatorname{msign}(\bW)=\mathbf{U}\mathbf{V}^\top$, where $\bW=\mathbf{U}\mathbf{\Sigma}\mathbf{V}^\top$ is the singular value decomposition of $\bW$. 
For manifold $\M$, we use $\T_{\M}\bW$ to denote its tangent space at point $\bW$. Further, we denote by $\proj_{\T_{\M}\bW}(G)$ the projection of $G$ onto the tangent space to the point $\bW$ on the manifold, and use $\mathcal{P}_{\M}(X)$ to denote the projection (retraction) of $X$ onto the manifold $\M$. We use $\nabla\L$ to denote the Euclidean gradient of $\L$ and $\nabla_{\M}\L(\bW)$ to denote the Riemannian gradient of $\L$. For embedded submanifolds considered in this paper, Riemannian gradient is the projection of the Euclidean gradient to the tangent space, i.e., $\nabla_{\M}\L(\bW) = \proj_{\T_{\M}\bW}(\nabla\L(\bW) )$. 

{\bf Manifold Constraints.}
We consider four constraint choices for the weight matrix $\bW \in \mathbb{R}^{\dout \times \din}$:

\begin{itemize}[leftmargin=*]
    \item \textbf{Frobenius Sphere~\citep{wen2025hyperball}}: Bounds the total Frobenius norm via $\mathcal{M}_F(R) = \{\bW: \|\bW\|_F = R\}$.
    \item \textbf{Spectral Sphere~\citep{xie2026controlledllmtrainingspectral}:} Constrains the maximum singular value via $\mathcal{M}_S(R) = \{\bW: \|\bW\|_2 = R\}$. Note that spectral sphere is actually \textbf{not a manifold}, however we could still compute the projections onto it and its tangent space. See the discussions in Appendix \ref{app:convergence_analysis} and Appendix \ref{app:Tangent Space Projection and Retraction}.
    \item \textbf{Input (resp. Output) Oblique Manifold~\citep{gu2026manorestrikingmanifoldoptimization}:} Restricts each input (resp. output) feature dimension to a fixed magnitude via $\mathcal{M}_{O_{\textrm{in}}}(R_{\textrm{in}}) = \{\bW: \|\bW_{:, j}\|_2 = R_{\textrm{in}}, \forall j\}$ (resp. $\mathcal{M}_{O_{\textrm{out}}}(R_{\textrm{out}}) = \{\bW: \|\bW_{i, :}\|_2 = R_{\textrm{out}}, \forall i\}$.). 
\end{itemize}

For each of these manifolds, we use $\|\cdot\|_{\M}$ to denote the norm in their definition, i.e. for Frobenius sphere $\|\cdot\|_{\M}=\|\cdot\|_{F}$, for spectral sphere $\|\cdot\|_{\M}=\|\cdot\|_{2}$, for input oblique manifold, $\|\cdot\|_{\M}=\max_j\|\cdot_{:, j}\|=\|\cdot\|_{1\rightarrow 2}$ and for output oblique manifold $\|\cdot\|_{\M}=\max_i\|\cdot_{i, :}\|=\|\cdot\|_{2\rightarrow \infty}$.

\section{Our \ourname{} Algorithm}
\label{sec:algorithm}

We consider the following manifold constrained LLM training problem:
\begin{equation}
\label{eq:originalOptimizationProblem}
    \min_{\bW} \mathcal{L}(\bW) = \mathbb{E}_{\xi \sim \mathcal{D}}\left[\ell(\bW, \xi)\right] \text{ s.t. } \bW \in \mathcal{M},
\end{equation}
where $\mathcal{M}$ is a manifold discussed in Sec \ref{sec:background}. To solve \eqref{eq:originalOptimizationProblem}, we propose a single-loop optimization framework, named \textbf{Msign-Aligned Constrained Riemannian Optimizer (\ourname{})}. \ourname{} is a provably convergent Riemannian algorithm 
that maintains the spectral preconditioning benefits of the Muon~\citep{jordan2024muon} optimizer. The complete description of \ourname{} is given in Alg \ref{alg:macro} and its 
main update is:
\begin{equation}
\label{eq:update}
O_t = \operatorname{msign}(\nabla_{\mathcal{M}}\mathcal{L}(\bW_t)), \quad \bW_{t+1} = \mathcal{P}_{\mathcal{M}}\left(\bW_t - \eta_t cR\cdot O_t / \left(\|O_t\|_{\M}+\epsilon\right)\right),
\end{equation}
where $c>0$ is a scaling hyperparameter, and $\eta_t$ is the learning rate. This update scheme fundamentally differs from standard optimizers and the original Muon in the following key aspects:

\begin{itemize}[leftmargin=*]
    \item \textbf{Tangent Space Projection:} At each step, instead of using the raw moving average $M_t$ as in standard Muon, we project it onto the tangent space of the constraint manifold at the current weight $\bW_t$ to obtain the valid Riemannian gradient $\nabla_{\mathcal{M}}\mathcal{L}(\bW_t)$. For all manifold constraints mentioned in Section~\ref{sec:background}, such projection only requires standard matrix multiplications instead of computationally expensive sub-loops. The exact computations are detailed in Appendix~\ref{app:Tangent Space Projection and Retraction}.
    
    \item \textbf{Riemannian Spectral Steepest Descent:} While Muon applies the matrix sign function to Euclidean gradients, \ourname{} evaluates the Linear Minimization Oracle (LMO) on the Riemannian gradient. 
    This extracts the optimal steepest descent direction reflecting the local geometry of the manifold.
    
    \item \textbf{Update-Weight Ratio Alignment:} After obtaining the update direction $O_t$, we normalize and scale by projecting it back onto the manifold. This projection maintains the update-weight ratio strictly proportional to $c \cdot \eta_t$. Note that this double projection alters the magnitude of update direction and updated weight. Such projection operations also appear in SSO~\cite{xie2026controlledllmtrainingspectral} and MuonH~\cite{wen2025hyperball} algorithms. The strict ratio alignment explicitly provides an intrinsic regularization that formally subsumes traditional weight decay (theoretically analyzed in Section~\ref{sec:effects_on_wd}).
    
    \item \textbf{Manifold Projection/Retraction:} Following the parameter update, we apply a projection $\mathcal{P}_{\mathcal{M}}$ (acting as a retraction) at every iteration to map the weights back onto the target constraint manifold $\mathcal{M}$, structurally preventing constraint drift. The exact computations are detailed in Appendix~\ref{app:Tangent Space Projection and Retraction}.
\end{itemize}

\begin{algorithm}[th]
\caption{Msign-Aligned Constrained Riemannian Optimizer (\ourname{})}
\label{alg:macro}
\begin{algorithmic}[1] 
    \State \textbf{Input:} Learning rate $\eta_t$, Manifold Constraint $\mathcal{M}$, Hyperparameters $\beta, R, c, T$
    \For{$t = 0, 1, \ldots, T$}
        \State $G_t = \nabla \ell(\bW_t,\xi_t)$
        \vspace{0.05cm}
        \State $M_t = \beta M_{t-1} + (1-\beta) G_t$
        \vspace{0.05cm}
        \State $\Phi_t = \proj_{\T_{\M} \bW_t}(M_t)$ \Comment{Projection to the Tangent Space}
        \vspace{0.1cm}
        \State $O_t = \operatorname{msign}(\Phi_t)$ \Comment{Compute the steepest descent direction via LMO}
        \vspace{0.05cm}
        \State $\widetilde{\nabla}_t = cR\cdot O_t / (\|O_t\|_{\M}+\epsilon)$ 
        \Comment{Normalization and Scaling}
        \vspace{0.05cm}
        \State $\bW_{t+1} = \mathcal{P}_{\mathcal{M}} \left( \bW_t - \eta_t \cdot \widetilde{\nabla}_t \right)$ \Comment{Descent Step and Manifold Projection}
    \EndFor
\end{algorithmic}
\end{algorithm}

{\bf Convergence analysis.} Two assumptions are needed for the convergence analysis of \ourname{}.

\begin{assumption}\label{assumption1}
    We assume that $\mathcal{M}$ is a compact $C^3$ manifold. 
\end{assumption}
Note that this assumption does not hold for the spectral sphere $\mathcal{M}_S(R)$,
which is not even a manifold, but it holds for the set $\tilde{\mathcal{M}}_S(R)=\{\bW: \|\bW\|_2 = R,\sigma_1(\bW)\geq \sigma_2(\bW)+\epsilon\}$, Frobenius sphere and the two oblique manifolds discussed in Section \ref{sec:background} (see Appendix \ref{app:convergence_analysis} and \citet{yang2026manifoldconstrainedsteepestdescent} for a more detailed discussion). For the loss function $\mathcal{L}$, we also have the following assumption.

\begin{assumption}\label{assumption2}
    We assume the loss function $\mathcal{L}$ is lower bounded by $\mathcal{L}^*$, and has $L$-Lipschitz continuous gradient, and 
    the stochastic loss $\ell(W,\xi)$ satisfies: (i) the stochastic gradient is unbiased, i.e. $\mathbb{E}_{\xi}\nabla\ell(W,\xi)=\nabla\mathcal{L}(W)$; (ii) the variance is bounded, i.e.,  $\mathbb{E}_{\xi}\|\nabla\ell(W,\xi) - \nabla\mathcal{L}(W)\|_F^2\leq\sigma^2$.
\end{assumption}

Under these two assumptions, we have the following theorem.
\begin{theorem}\label{thm:convergence}
    Suppose the manifold $\mathcal{M}$ and  the loss function $\mathcal{L}$ satisfy Assumptions \ref{assumption1} and \ref{assumption2}. 
    Denote $\Delta=\mathcal{L}(\bW_0)-\mathcal{L}^*$. By setting 
    $
    \beta=1-1/\sqrt{T},\ \eta_t=\eta=\Omega(\sqrt{{\Delta}/{(L T^{3/2})}}),
    $
    the sequence $\{\bW_t\}$ generated by Algorithm \ref{alg:macro} satisfies (constants related to dimension are omitted in $\mathcal{O}$):
    $$
    \min_{t=0,...,T-1} \mathbb{E}\|\nabla_{\mathcal{M}} \mathcal{L}(\bW_t)\|_{*}\leq \mathcal{O}\left({\sqrt{L\Delta}\sigma}{T^{-1/4}}\right).
    $$
\end{theorem}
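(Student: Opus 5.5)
The plan is to run the standard momentum-LMO (normalized steepest descent) analysis, adapted to the manifold via a retraction second-order bound. Since $\M$ is compact and $C^3$ (Assumption~\ref{assumption1}), the projection $\mathcal{P}_{\M}$ is a second-order retraction near $\M$. For any $\bW\in\M$ and any small tangent-ish step $\xi$ there are constants $\alpha,\gamma$ with
\[
\|\mathcal{P}_{\M}(\bW+\xi)-\bW-\xi\|_F\le\alpha\|\xi\|_F^2,\qquad \|\mathcal{P}_{\M}(\bW+\xi)-\bW\|_F\le\gamma\|\xi\|_F.
\]
Combining these with $L$-smoothness and boundedness of $\|\nabla\L\|_F$ on the compact $\M$ (say by $G$), we get a descent-type inequality
\[
\L(\bW_{t+1})\le \L(\bW_t)+\langle\nabla\L(\bW_t),\xi_t\rangle + C_1\|\xi_t\|_F^2,\qquad \xi_t=-\eta\widetilde\nabla_t,
\]
with $C_1=L\gamma^2/2+\alpha G$. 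The step size is controlled deterministically: $\|\widetilde\nabla_t\|_F\le cR\sqrt{d}\,\|O_t\|_2/\|O_t\|_{\M}=\mathcal{O}(1)$, because all norms are equivalent up to dimension constants. Hence the second-order term is $\mathcal{O}(\eta^2)$.

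For the first-order term, note that $O_t$ is tangent, since $\operatorname{msign}$ of a tangent vector is tangent for the manifolds considered; I will assume this, or use the tangent-space projection if needed. So $\langle\nabla\L(\bW_t),O_t\rangle=\langle\nabla_{\M}\L(\bW_t),O_t\rangle$. Write $\Phi_t=\nabla_{\M}\L(\bW_t)+E_t$ with $E_t=\proj_{\T_{\M}\bW_t}(M_t-\nabla\L(\bW_t))$. Using $\langle\Phi_t,O_t\rangle=\|\Phi_t\|_*$ and $\|O_t\|_2\le1$ gives
\[
-\langle\nabla_{\M}\L,O_t\rangle\le -\|\nabla_{\M}\L(\bW_t)\|_*+2\|E_t\|_* .
\]
The normalization $cR/(\|O_t\|_{\M}+\epsilon)$ is bounded above and below by positive dimension-dependent constants $a\le\cdot\le b$, so we obtain
\[
\L(\bW_{t+1})\le\L(\bW_t)-\eta a\|\nabla_{\M}\L(\bW_t)\|_*+2\eta b\,\|M_t-\nabla\L(\bW_t)\|_* + C\eta^2 .
\]
Here projection nonexpansiveness in Frobenius norm and $\|\cdot\|_*\le\sqrt d\|\cdot\|_F$ are absorbed into constants.

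The main obstacle is bounding the momentum error $\epsilon_t=M_t-\nabla\L(\bW_t)$. I will use the recursion
\[
\epsilon_t=\beta\epsilon_{t-1}+\beta(\nabla\L(\bW_{t-1})-\nabla\L(\bW_t))+(1-\beta)(G_t-\nabla\L(\bW_t)).
\]
The drift term is at most $L\|\bW_t-\bW_{t-1}\|_F\le L\gamma\eta\,\mathcal{O}(1)$, using the retraction Lipschitz bound. The noise terms are martingale differences, so summing the geometric series gives
\[
\EE\|\epsilon_t\|_F\le \beta^t\|\epsilon_0\|+\sqrt{1-\beta}\,\sigma+\frac{L\eta}{1-\beta}\cdot\mathcal{O}(1).
\]
The noise bound comes from $\sqrt{\sum\beta^{2k}(1-\beta)^2\sigma^2}$. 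Initializing $M_{-1}=0$, the first term sums over $t$ to at most $G/(1-\beta)$.

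Finally, telescope over $t=0,\dots,T-1$ and divide by $\eta aT$. This yields
\[
\min_t\EE\|\nabla_{\M}\L\|_*\lesssim\frac{\Delta}{\eta T}+\frac{G}{(1-\beta)T}+\sqrt{1-\beta}\,\sigma+\frac{L\eta}{1-\beta}+L\eta .
\]
Plugging in $1-\beta=T^{-1/2}$ and $\eta\asymp\sqrt{\Delta/(LT^{3/2})}$ makes the terms $\Delta/(\eta T)$ and $L\eta\sqrt T$ both of order $\sqrt{L\Delta}\,T^{-1/4}$. The noise term is $\sigma T^{-1/4}$, and the remaining terms are lower order. Collecting these under the paper's convention of absorbing constants gives the stated $\mathcal{O}(\sqrt{L\Delta}\,\sigma T^{-1/4})$ rate, reading $\sqrt{L\Delta}\sigma$ loosely as $\max\{\sqrt{L\Delta},\sigma\}$ up to constants. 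Two further checks are needed. First, $\eta\widetilde\nabla_t$ must stay small enough for the retraction bounds to apply; this holds for large $T$. Second, the bound on $\|O_t\|_{\M}$ from below requires $O_t\neq0$; otherwise $\Phi_t=0$ and the step is trivial.
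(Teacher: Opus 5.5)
Your skeleton matches the paper's proof: a projection-based descent inequality, then the LMO identity $\langle\Phi_t,O_t\rangle=\|\Phi_t\|_*$ with a triangle inequality, then $\|\cdot\|_*\le\sqrt{d}\|\cdot\|_F$ with Frobenius non-expansiveness of $\proj_{\T_{\M}\bW_t}$. The momentum error is split into drift $L\eta/(1-\beta)$ and noise $\sqrt{1-\beta}\,\sigma$, and you finish by telescoping with $1-\beta=T^{-1/2}$.

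The step that fails is your assertion that $O_t=\operatorname{msign}(\Phi_t)$ is tangent. This is false in general. On the Frobenius sphere, $\bW=\diag(2,1)$ and $\Phi=\diag(1,-2)$ satisfy $\langle\bW,\Phi\rangle=0$, yet $\operatorname{msign}(\Phi)=\diag(1,-1)$ gives $\langle\bW,\operatorname{msign}(\Phi)\rangle=1$. This is exactly why SSO/FSO bisect on $\lambda$, and why the paper reports a nonzero tangent-space violation for \ourname{}.

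Without tangency, your bound $\|\mathcal{P}_{\M}(\bW+\xi)-\bW-\xi\|_F\le\alpha\|\xi\|_F^2$ does not apply to $\xi_t=-\eta\widetilde{\nabla}_t$. Your descent inequality with first-order term $\langle\nabla\L(\bW_t),\xi_t\rangle$ is then wrong. It is off by $\langle\nabla\L(\bW_t)-\nabla_{\M}\L(\bW_t),\,\xi_t-\proj_{\T_{\M}\bW_t}(\xi_t)\rangle$, which is of order $\eta$ with no definite sign, so it competes with the $-\eta a\|\nabla_{\M}\L(\bW_t)\|_*$ decrease. Re-projecting $O_t$ onto the tangent space is not an option either, because Algorithm~\ref{alg:macro} does not do that.

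The missing idea is to expand the projection through its differential instead of assuming tangency. This is what the lemma the paper imports from Yang et al.\ (their Lemma~4.4) provides. For a compact $C^3$ submanifold, $\mathcal{P}_{\M}$ is $C^2$ on a uniform tubular neighbourhood and $D\mathcal{P}_{\M}(\bW)=\proj_{\T_{\M}\bW}$ for $\bW\in\M$. Hence, for all sufficiently small $\xi$, tangent or not,
\[
\bigl\|\mathcal{P}_{\M}(\bW+\xi)-\bW-\proj_{\T_{\M}\bW}(\xi)\bigr\|_F\le\alpha\|\xi\|_F^2 .
\]
By self-adjointness of the projection, $\langle\nabla\L(\bW_t),\proj_{\T_{\M}\bW_t}(\xi_t)\rangle=\langle\nabla_{\M}\L(\bW_t),\xi_t\rangle$. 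This yields $\L(\bW_{t+1})\le\L(\bW_t)+\langle\nabla_{\M}\L(\bW_t),\xi_t\rangle+C_1\|\xi_t\|_F^2$, the paper's inequality with an explicit constant.

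From there, nothing requires $O_t$ to be tangent, since $\langle\Phi_t,O_t\rangle=\|\Phi_t\|_*$ holds regardless. Your step-length bound $\|\mathcal{P}_{\M}(\bW+\xi)-\bW\|_F\le\gamma\|\xi\|_F$ holds for arbitrary steps with $\gamma=2$, because $\bW\in\M$.

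With this repair, your argument is more careful than the paper's in two places:
\begin{itemize}
\item the factor $2$ in front of $\|E_t\|_*$;
\item the two-sided bound $a\le cR/(\|O_t\|_{\M}+\epsilon)\le b$. The paper replaces this random factor by its upper bound $\tilde c$, which is only valid when $\langle\nabla_{\M}\L(\bW_t),O_t\rangle\le 0$.
\end{itemize}
Your reading of the rate is also right. Both arguments give $\mathcal{O}\bigl((\sqrt{L\Delta}+\sigma)T^{-1/4}\bigr)$ with $\eta=\Theta(\sqrt{\Delta/(LT^{3/2})})$, not the product $\sqrt{L\Delta}\,\sigma$.
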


This matches the $\mathcal{O}(T^{-1/4})$ rate for general nonconvex smooth stochastic optimization~\cite{arjevani2023lower}. 

\section{Effects of Manifold Constraints on Training Dynamics}
In this section, we discuss the relation of the explicit manifold constraint with two normalization mechanisms -- weight decay and normalization layers -- and show that manifold constraint is a competitive alternative that substitutes these two mechanisms.

\subsection{Activation Scale Control via Spectral Sphere or Frobenius Sphere Constraints}
\label{sec:activation_control}
\begin{wrapfigure}{r}{0.38\textwidth}
\vspace{-0.4cm}
\centering
\includegraphics[width=\linewidth]{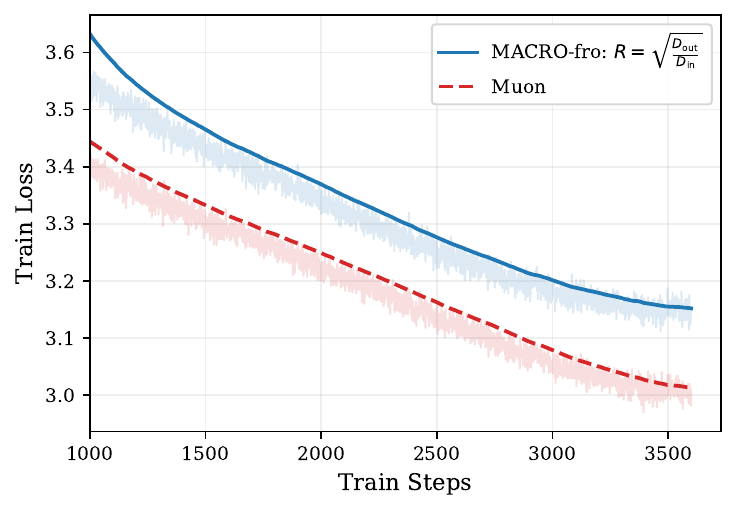}
\vspace{-0.4cm}
\caption{\small Train loss for 120M QWEN3-like model.}
\label{fig:frob_worstcase}
\vspace{-0.4cm}
\end{wrapfigure}
In this subsection, we analyze and compare Frobenius and spectral spheres, and output/input Oblique manifolds. The major benefit of weight constraints is controlling the scale of activations. Consider a single linear layer $\bY = \bX \bW^\top$, where $\bX \in \cR^{T \times \din}, \bY \in \cR^{T \times \dout}$ and $\bW \in \cR^{\dout \times \din}$ for a sequence of length $T$. Following \citet{yang2023spectral} and \citet{Su_MuP2}, our goal is for the forward pass to maintain a constant root-mean-square (RMS) norm: $\|\ovec(\bY)\|_{\textrm{RMS}} = \Theta(1)$. We can express this sequence-level operation using the Kronecker product: $\ovec(\bY) = (\bW \otimes I_T)\ovec(\bX)$. 
Importantly, the spectral norm remains invariant under this expansion ($\|\bW \otimes I_T\|_2 = \|\bW\|_2$), thus explicitly constraining $\|\bW\|_2$ directly bounds the activation amplification.

{\bf Why the Worst-Case Bound Fails for Frobenius Constraints.} 
For Frobenius sphere, one way to bound the output scale is the sub-multiplicative property of the Frobenius norm: $\|\bY\|_F \leq \|\bX\|_F \|\bW\|_F$. Assuming $\|\ovec(\bX)\|_{\textrm{RMS}} = \Theta(1)$, we obtain the following worst-case output bound:
$$\|\ovec(\bY)\|_{\textrm{RMS}} \leq \sqrt{{\din}/{\dout}}\|\bW\|_F \|\ovec(\bX)\|_{\textrm{RMS}}.$$
To ensure $\|\ovec(\bY)\|_{\textrm{RMS}} = \Theta(1)$, we must set the constraint radius to $\|\bW\|_F = \Theta(\sqrt{\dout / \din})$. 
But, this worst-case radius may hurt pre-training (see Fig \ref{fig:frob_worstcase}) due to the following reasons.
\begin{itemize}[leftmargin=*]
    \item \textbf{Rank and Context Collapse:} The equality in our bound relies on the Cauchy-Schwarz inequality: $\langle x_i, w_j\rangle^2 \leq \|x_i\|_2^2 \|w_j\|_2^2$. This equality only holds if all token representations $x_i$ share the exact same direction and $\bW$ collapses to a rank-1 matrix. Therefore, designing a constraint around this worst-case bound assumes that the LLM completely lost its representational capacity.
    \item \textbf{Severe Capacity Attenuation:} Consider an initialization where $\bW \sim \mathcal{N}(0, \sigma^2)$. Random matrix theory~\citep{bai1993limit} suggests that as $\din, \dout \to \infty$, the ratio of the Frobenius norm to the spectral norm converges: $\frac{\|\bW\|_F}{\|\bW\|_2} \to \frac{\sqrt{\dout}}{1 + \sqrt{\dout/\din}}$. Therefore, artificially restricting $\|\bW\|_F = \Theta(\sqrt{\frac{\dout}{\din}})$ forces the spectral norm to shrink to $\|\bW\|_2 = \Theta(\frac{1}{\sqrt{\din}})$. This
    reduces the effective capacity of the linear layer by a factor of $\frac{1}{\sqrt{\dout}}$, causing severe forward-pass attenuation and training instability.
\end{itemize}

\begin{wrapfigure}{r}{0.38\textwidth}
\vspace{-0.4cm}
\centering
\includegraphics[width=\linewidth]{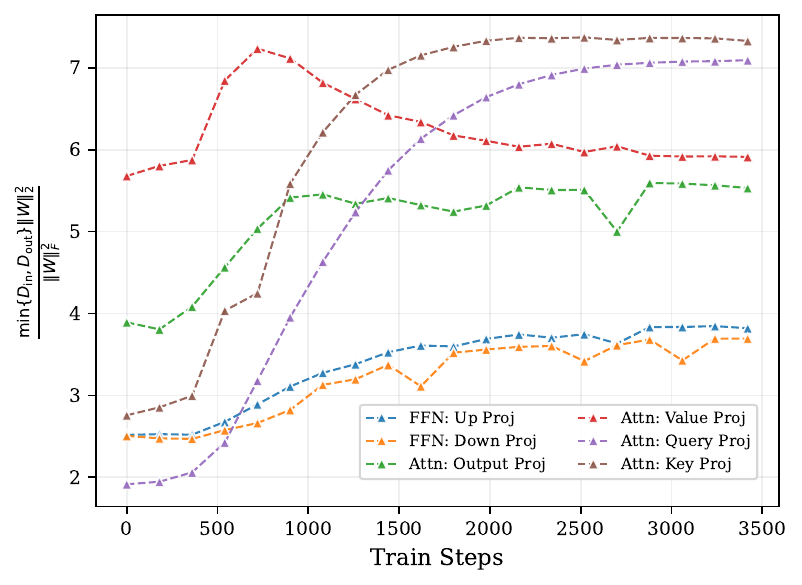}
\vspace{-0.4cm}
\caption{\small Empirical validation of $\kappa$.}
\label{fig:stable_rank}
\vspace{-0.4cm}
\end{wrapfigure}
The true mechanism of the Frobenius constraint becomes clear when we analyze \textbf{average-case behavior} governed by the empirical input covariance, $\Sigma_{\bX} = \frac{1}{T}\bX^\top \bX$. 
During training, optimization dynamics continuously change this covariance. Therefore, we evaluate the average output RMS norm using the trace formulation:
$\mathbb{E}_{\bX}[\|\ovec(\bY)\|_{\textrm{RMS}}^2] = \frac{1}{\dout}\text{tr}(\Sigma_{\bX} \bW^\top \bW).$
We tightly bound this trace in Proposition \ref{prop:radius_selection}, which summarizes how these two constraints control activation scales differently (see Appendix~\ref{app:activation_proofs} for proof).

\begin{proposition}[Radius Selection for Activation Control]
\label{prop:radius_selection}
    Assume the input activations satisfy $\|\ovec(\bX)\|_{\textrm{RMS}} = \Theta(1)$. To maintain stable output scales, the constraint radii are determined by two different mechanisms:
    \begin{itemize}[leftmargin=*]
        \item \textbf{Spectral Sphere:} The spectral constraint controls the \textit{worst-case} weight scale. To guarantee the strict upper bound $\|\ovec(\bY)\|_{\textrm{RMS}} = \mathcal{O}(1)$, we set the spectral radius to $R_{\textrm{spec}} = \Theta(\sqrt{\dout/\din})$.
        \item \textbf{Frobenius Sphere:} The Frobenius constraint controls the \textit{expected} output scale. Assume 
        \begin{enumerate}
            \item The weight matrix maintains a high stable rank, i.e. $\|\bW\|_2^2 \le \frac{\kappa}{\min\{\din, \dout\}} \|\bW\|_F^2$ for a structural constant $\kappa = \Theta(1)$ (see Figure~\ref{fig:stable_rank}, this is commonly observed in the experiments). 
            \item The context length typically exceeds the hidden dimension (which ensures the empirical input covariance remains strictly positive definite), and $\lambda_{\min}(\Sigma_{\bX}) = \Omega(1)$ almost surely.
        \end{enumerate}
        To ensure $\mathbb{E}_{\bX}[\|\ovec(\bY)\|_{\textrm{RMS}}] = \Theta(1)$, we set a larger Frobenius radius: $R_{\textrm{fro}} = \Theta\left(\sqrt{\dout}\right)$.
    \end{itemize}
\end{proposition}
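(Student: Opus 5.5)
The plan is to handle the two bullets separately, both starting from the Kronecker identity $\ovec(\bY) = (\bW\otimes I_T)\ovec(\bX)$ and from the trace formula $\|\ovec(\bY)\|_{\textrm{RMS}}^2 = \frac{1}{T\dout}\|\bX\bW^\top\|_F^2 = \frac{1}{\dout}\text{tr}(\Sigma_{\bX}\bW^\top\bW)$. Both identities follow from the definition of the RMS norm. We also use $\|\ovec(\bX)\|_{\textrm{RMS}}^2 = \frac{1}{T\din}\|\bX\|_F^2 = \frac{1}{\din}\text{tr}(\Sigma_{\bX})$, so that $\text{tr}(\Sigma_{\bX}) = \Theta(\din)$.

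\textbf{Spectral sphere.} By the invariance $\|\bW\otimes I_T\|_2=\|\bW\|_2$ noted above, we have $\|\ovec(\bY)\|_2\le \|\bW\|_2\|\ovec(\bX)\|_2$. Converting to RMS norms, since $\ovec(\bY)$ has $T\dout$ entries and $\ovec(\bX)$ has $T\din$ entries, gives
\[
\|\ovec(\bY)\|_{\textrm{RMS}}\le \sqrt{\din/\dout}\,\|\bW\|_2\,\|\ovec(\bX)\|_{\textrm{RMS}}.
\]
Substituting $R_{\textrm{spec}}=\Theta(\sqrt{\dout/\din})$ yields $\mathcal{O}(1)$. To show this choice is necessary and not merely sufficient, take $\bX$ with every row equal to a multiple of the top right singular vector of $\bW$. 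Then equality holds, so the bound is attained, and no smaller order of $R_{\textrm{spec}}$ guarantees $\Theta(1)$ in the worst case.

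\textbf{Frobenius sphere.} The plan is to sandwich $\text{tr}(\Sigma_{\bX}\bW^\top\bW)$ between multiples of $\|\bW\|_F^2$. Both $\Sigma_{\bX}$ and $\bW^\top\bW$ are PSD, so the standard trace inequality gives
\[
\lambda_{\min}(\Sigma_{\bX})\,\|\bW\|_F^2\le \text{tr}(\Sigma_{\bX}\bW^\top\bW).
\]
The lower bound is then $\Omega(1)\cdot R^2$ by assumption 2. For the upper bound, the same inequality with $\lambda_{\max}$ is too loose, so we instead use $\text{tr}(\Sigma_{\bX}\bW^\top\bW)\le \|\bW\|_2^2\,\text{tr}(\Sigma_{\bX})$, which holds since $\text{tr}(AB)\le\|B\|_2\text{tr}(A)$ for PSD $A,B$. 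Applying assumption 1 gives
\[
\text{tr}(\Sigma_{\bX}\bW^\top\bW)\le \frac{\kappa}{\min\{\din,\dout\}}\,R^2\cdot\Theta(\din).
\]
Dividing by $\dout$, the mean squared output RMS is at least $\Omega(R^2/\dout)$ and at most $\mathcal{O}(R^2\din/(\dout\min\{\din,\dout\}))$. In the square or proportional regime $\din=\Theta(\dout)$, the upper bound is $\Theta(R^2/\dout)$, so both sides match. Setting $R_{\textrm{fro}}=\Theta(\sqrt{\dout})$ then makes $\mathbb{E}[\|\ovec(\bY)\|^2_{\textrm{RMS}}]=\Theta(1)$.

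Finally, pass from the second moment to $\mathbb{E}[\|\ovec(\bY)\|_{\textrm{RMS}}]$. Jensen gives the upper bound $\mathbb{E}[Z]\le\sqrt{\mathbb{E}[Z^2]}$. For the lower bound, the deterministic inequality holds almost surely: $\|\ovec(\bY)\|_{\textrm{RMS}}^2\ge \lambda_{\min}(\Sigma_{\bX})R^2/\dout=\Omega(1)$ a.s., which gives $\mathbb{E}\|\cdot\|_{\textrm{RMS}}=\Omega(1)$ directly.

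The main obstacle is the upper bound when $\din$ and $\dout$ differ greatly, since the factor $\din/\min\{\din,\dout\}$ may not be $\Theta(1)$. I would handle this by stating the result under the implicit proportional-dimension regime $\din/\dout=\Theta(1)$ typical of transformer blocks. Alternatively, one could assume $\lambda_{\max}(\Sigma_{\bX})=\mathcal{O}(1)$, i.e. well-conditioned inputs, and use $\text{tr}(\Sigma_{\bX}\bW^\top\bW)\le\lambda_{\max}(\Sigma_{\bX})R^2$, which gives the upper bound with no dimension ratio at all.
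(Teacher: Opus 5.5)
Your proposal is correct and follows the paper's proof essentially step for step. The spectral case uses the same Kronecker operator-norm bound. The Frobenius case uses the same sandwich $\lambda_{\min}(\Sigma_{\bX})\|\bW\|_F^2 \le \Tr(\Sigma_{\bX}\bW^\top\bW) \le \Tr(\Sigma_{\bX})\|\bW\|_2^2$ with the stable-rank assumption, the same Jensen step and almost-sure lower bound for $\mathbb{E}_{\bX}[\|\ovec(\bY)\|_{\textrm{RMS}}]$, and the same implicit dimension condition. The paper states that condition as $\din/\min\{\din,\dout\}=\Theta(1)$, i.e. only $\din=\mathcal{O}(\dout)$, which is slightly weaker than your $\din=\Theta(\dout)$. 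The only deviation is in the spectral bullet, where you add a rank-one tightness example while the paper instead gives a lower bound assuming $\sigma_{\min}(\bW)=\Omega(1)$; the stated $\mathcal{O}(1)$ claim needs neither.
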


\begin{wraptable}{r}{0.4\textwidth}
\vspace{-0.3cm}
  \centering
  \caption{\small Practical constraint radii.}
  \label{tab:radius_summary}
  \renewcommand{\arraystretch}{1.45}
  \begin{tabular}{lc}
    \toprule
    \textbf{Manifold} & \textbf{Radius Setting} \\
    \midrule
    Spectral Sphere & $r\sqrt{\dout / \din}$ \\
    \rowcolor{black!5} 
    Frobenius Sphere & $r\sqrt{\dout}$ \\
    Output Oblique & $r$ \\
    \rowcolor{black!5}
    Input Oblique & $r\sqrt{\dout / \din}$ \\
    \bottomrule
  \end{tabular}
  \vspace{-0.5cm}
\end{wraptable}
Additionally, we introduce a tunable hyperparameter $r = \Theta(1)$ to establish the practical constraint radii. For the Frobenius and spectral constraints, we directly adopt $R_{\textrm{spec}} = r\sqrt{\dout/\din}$ and $R_{\textrm{fro}} = r \sqrt{\dout}$. For the oblique manifold constraints, we derive their radii by matching their overall Frobenius norm to the radius of the Frobenius sphere.
Table~\ref{tab:radius_summary} summarizes the practical radii for all manifolds. In Appendix~\ref{app:activation_proofs}, we show that these radius choices align with existing literature.
\paragraph{Oblique vs. Frobenius Constraints}
Based on the aforementioned radii, a preliminary comparison reveals that the Frobenius constraint consistently achieves lower validation loss than Oblique constraints across various learning rates (Figure~\ref{fig:frobvsbblique_ablation} (a)). Furthermore, their internal training dynamics—including spectral and maximum row/column norms—closely match (Figure~\ref{fig:frobvsbblique_ablation} (b)-(d)). Since Oblique constraints offer no performance advantage comparing to Frobenius, we omit them in the remainder of the paper and only focus on the spectral and Frobenius sphere constraints. Detailed discussions are deferred to Appendix~\ref{app:oblique_redundancy}.

\begin{fancyTakeawayA}
\textit{\textbf{\ding{43}\ Key Takeaway:} {The functions of the spectral and Frobenius spheres differ: the former bounds the worst-case, while the latter bounds the average-case, activation. Both regulate the output scale to $\Theta(1)$.}}
\end{fancyTakeawayA}
\subsection{Interplay between Manifold Constraints and Normalization Layers}
\label{sec:rmsnorm_interplay}
In standard LLM architectures, normalization layers~\citep{ioffe2015batchnormalizationacceleratingdeep,Wu_2018_ECCV,ba2016layernormalization,salimans2016weightnormalizationsimplereparameterization,qiao2020microbatchtrainingbatchchannelnormalization} explicitly control the forward activation scale. For example, the RMSNorm layer~\cite{zhang2019rootmeansquarelayer} uses a learnable affine parameter, $\gamma$, to scale activations. However, as Proposition~\ref{prop:radius_selection} establishes, manifold constraints govern exactly the same scale. Because both mechanisms control the activation scale, they interact during training. Figure~\ref{fig:norm_vs_radius} illustrates this interaction. When we increase the manifold constraint radius $R$ (which presumably allows for larger pre-norm activations), the optimizer automatically shrinks the affine parameter $\gamma$. This keeps the final activation scale constant for subsequent layers,
which suggests that standard RMSNorm layers are redundant when manifold constraint is present. To verify this, we \textbf{completely remove the learnable RMS normalization} layers from the model structure and test the algorithm performances. This allows us to evaluate the standalone stabilizing capability of explicit manifold constraints during LLM pre-training. Manifold constraints on weights explicitly bound the output scale of linear layers to $\Theta(1)$. 
\begin{wrapfigure}{r}{0.33\textwidth}
\vspace{-0.3cm}
\centering
\includegraphics[width=\linewidth]{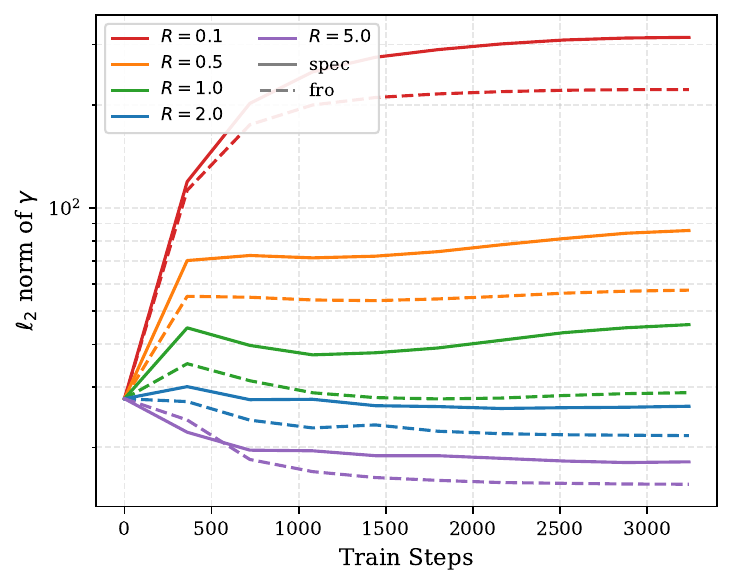}
\vspace{-0.5cm}
\caption{\small Evolution of $\ell_2$ norm of $\gamma$ during training.}
\label{fig:norm_vs_radius}
\end{wrapfigure}
To maintain an input scale of $\|\ovec(\bX)\|_{\textrm{RMS}} = \Theta(1)$ for each layers, we still insert a parameter-free RMSNorm immediately after the attention block, alongside QK-norm to stabilize the pre-softmax logits. In addition, for the SwiGLU activation, the Hadamard product fundamentally disrupts linear stability (Multiplying two matrices of scale $c$ element-wise yields a $c^2$ scale, which cascades exponentially across $L$ layers to cause \texttt{INF} values). To prevent this without learnable norms, we introduce Norm-Gated SwiGLU. This modification normalizes the Swish branch before the multiplication:
\begin{equation}\label{eq:norm-gated}
    \bY = \left(\operatorname{Swish}(\bX\bW_1^\top) /{\|\operatorname{Swish}(\bX\bW_1^\top)\|_{\textrm{RMS}}} \right)\odot (\bX \bW_2^\top) .
\end{equation}

Finally, we apply a parameter-free normalization after the initial embedding layer. This step guarantees the $\Theta(1)$ input scale for the first transformer block.

By using only \textbf{parameter-free normalizations} described above, we force the optimizer to control all activation magnitudes alone. We test this setup by training a 330M QWEN3-like model (with RoPE, GQA, and SwiGLU) across various learning rates (see Appendix~\ref{app:norm-free training} for the detailed experimental settings). Table~\ref{tab:ablation_qwen} shows that, without learnable normalization layers, standard Muon diverges (\texttt{NaN}) at the standard learning rates used by the RMSNorm baseline. 
\begin{wraptable}{r}{0.52\textwidth}
\vspace{-0.3cm}
    \centering 
    \caption{Architecture Comparison}
    \label{tab:arc_changes}
    \renewcommand{\arraystretch}{1.25}
    \setlength{\tabcolsep}{1.8pt}
    \begin{tabular}{lcc}
    \toprule
    & \textbf{\small Standard Baseline} & \textbf{\small RMSNorm-Free} \\
    \midrule
    \small Activation & \small SwiGLU & \small Eq \eqref{eq:norm-gated} \\
    \rowcolor{black!5} 
    \small Attn:Norm & \small \texttt{False} & \small \texttt{True} \\
    \small RMSNorm Layer & \small \texttt{True} & \small \texttt{False} \\
    \bottomrule
    \end{tabular}
    \vspace{-0.3cm}
\end{wraptable}
In contrast, \ourname{} prevents this failure. It successfully scales to these standard learning rates, maintaining stability and achieving validation loss comparable to the fully normalized baseline. Furthermore, Table~\ref{tab:ablation_qwen} shows that the Spectral constraint outperforms the Frobenius constraint.
This result supports our analysis in Proposition~\ref{prop:radius_selection} that, without the safety component of explicit normalization layers, the strict worst-case activation bound of the Spectral sphere provides better stability than the Frobenius sphere.

\begin{table}[htbp]
  \renewcommand{\arraystretch}{1.25} 
  \caption{Validation loss for QWEN3-like 330M pre-training without learnable normalization layers.}
  \label{tab:ablation_qwen}
  \centering
  \begin{tabular}{lccccc}
    \toprule
    \multirow{2}{*}{\textbf{Optimizer}} & \multicolumn{5}{c}{\textbf{Learning Rate}} \\
    \cmidrule(r){2-6}
    & $3 \times 10^{-3}$ & $5 \times 10^{-3}$ & $7 \times 10^{-3}$ & $1 \times 10^{-2}$ & $3 \times 10^{-2}$ \\
    \midrule
    Muon (baseline)  & 2.912 & 2.849 & 2.823 & 2.813 & \texttt{NaN} \\
    \rowcolor{black!5} 
    \ourname{}-fro (Ours) & 2.901 & 2.825 & 2.787 & \textbf{2.758} & 2.781 \\
    \rowcolor{black!5} 
    \ourname{}-spec (Ours)& 2.820 & 2.768 & 2.747 & \textbf{2.739} & 2.819 \\
    \bottomrule
  \end{tabular}
\end{table}

While our \ourname{}-spec remains strictly stable on parameter-free architecture, its validation loss is roughly 0.02\footnote{This gap represents the difference in best validation loss between the 330M standard baseline (2.714 in Table~\ref{tab:optimizer_comparison}) and our 330M model without learnable RMSNorm (2.739 in Table~\ref{tab:ablation_qwen}). Both models are trained with the \ourname{}-spec optimizer.} higher than \ourname{}-spec on standard baseline (Table~\ref{tab:arc_changes}).
This marginal gap is expected: learnable normalizations use layer-specific parameters ($\gamma$) that adapt during training. Our current \ourname{} implementation, however, uses a single global constraint radius $r$ for all layers. To close this performance gap, future work should co-design layer-specific geometric constraints based on the dynamics of different transformer modules. \begin{fancyTakeawayA}
\textit{\textbf{\ding{43}\ Key Takeaway:} 
Without learnable normalization layers, Muon may diverge. In contrast, our constrained optimizer \ourname{} shows better numerical stability and performance even when all learnable normalization layers are removed. }
\end{fancyTakeawayA}

\subsection{Interplay between Manifold Constraints and Weight Decay}
\label{sec:effects_on_wd}

Decoupled weight decay~\citep{loshchilov2019decoupledweightdecayregularization} is standard practice in model training. Recent works~\citep{dangelo2024needweightdecaymodern, xie2024implicitbiasadamwellinfty, kosson2024rotationalequilibriumweightdecay, defazio2025gradientsrapidlyincreasenear,li2020reconcilingmoderndeeplearning} show that the implicit regularization of weight decay relies on two geometric mechanisms: First, with weight decay, the \textit{relative learning rate}, defined as the magnitude ratio $\eta_{\text{rel}, t} = \|\Delta \bW_t\| / \|\bW_t\|$ is bounded. This ratio is a crucial metric for optimization stability~\citep{wen2025hyperball,xie2026controlledllmtrainingspectral}; Second, with weight decay, the late stage training only changes the \textit{rotational angle} of consecutive updates, defined as $\theta_t = \angle(\bW_{t+1}, \bW_t)$~\cite{kosson2024rotationalequilibriumweightdecay}.

However, standard weight decay achieves these geometric properties heuristically. For example, \citet{defazio2025gradientsrapidlyincreasenear} shows that applying weight decay with a coefficient $\lambda$ in SGD results in $\eta_{\text{rel}, t} \approx \sqrt{2\lambda/\eta_t}$. In addition, the learning rate $\eta_t$ typically decays during training. Consequently this relative learning rate constantly changes, causing the gradient norm to amplify near the end of training for optimizers like AdamW and SGDW. On the other hand, as analyzed in \cite{kosson2024rotationalequilibriumweightdecay,li2020reconcilingmoderndeeplearning}, weight decay gradually balances radial and perpendicular update components to force the parameter into a pure rotational state. Because of this gradual process, standard optimizers undergo a long \textit{transient phase}. During this early phase, parameter scales and update angles remain unregulated.

Previous works attempt to resolve these issues using heuristic patches, such as dynamic weight decay ratios~\citep{defazio2025gradientsrapidlyincreasenear}.
In contrast, explicit manifold constraints offer an explicit solution. By restricting the optimization trajectory, our constrained update scheme locks both the relative learning rate $\eta_{\text{rel}, t}$ and the rotation angle $\theta_t$ exactly from the first iteration, as shown below.

\paragraph{Locking the Relative Learning Rate.}
To analyze the exact dynamics of our optimizer, we formally consider the $\bW$ update step \eqref{eq:update} in our algorithm, where the relative learning rate is explicitly locked to a deterministic linear schedule:
\begin{equation}
    \eta_{\text{rel}, t} = {\|\Delta \bW_t\|_{\mathcal{M}}}/{\|\bW_t\|_{\mathcal{M}}} = {\eta_t c R}/{R} = c \eta_t.
\end{equation}
Our manifold constraint strictly couples the relative update magnitude to the learning rate $\eta_t$. This locked $\eta_{\text{rel},t}$ prevents late-stage gradient amplification under a decaying learning rate schedule. It also enables strict Maximal Update Parametrization ($\mu$P) transfer. Appendix~\ref{app:section 4.3 numerical details} provides the empirical verification for the gradient norm decay and the $\mu$P transfer.

\paragraph{Static Rotational Equilibrium under the Frobenius Sphere.}
\begin{wrapfigure}{r}{0.38\textwidth}
\vspace{-0.4cm}
\centering
\includegraphics[width=\linewidth]{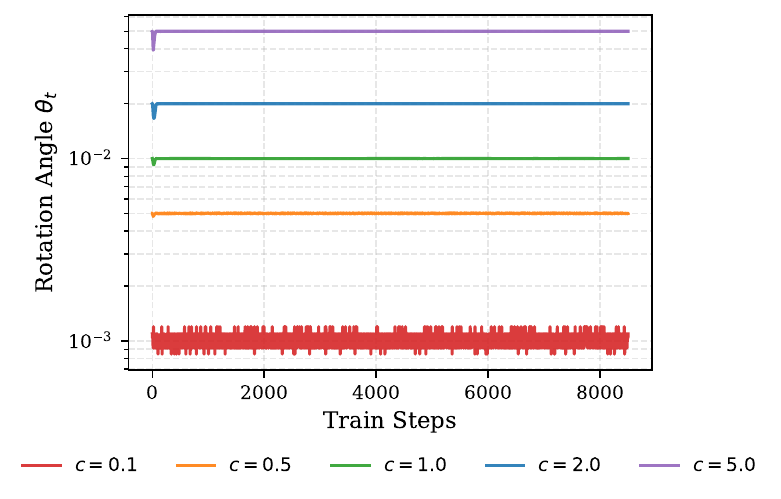}
\vspace{-0.5cm}
\caption{\small $\theta_t$ under Frobenius Sphere.}
\label{fig:frob_dynamics}
\vspace{-0.4cm}
\end{wrapfigure}
Manifold constraints also fundamentally alter the angular trajectory. Traditional weight decay requires a transient phase to reach rotational equilibrium~\citep{kosson2024rotationalequilibriumweightdecay}. Manifold constraints bypass this phase entirely and enforce a pure rotational state from the very first step. 

We explicitly quantify the rotational angle for the Frobenius sphere $\mathcal{M}_F(R)$. Define $\alpha_t := \cos\angle(\bW_t,\,O_t)$, substituting $\widetilde{\nabla}_t=(cR/\|O_t\|_F)O_t$ in the update formula we have:
\begin{equation*}
    \langle \bW_t-\eta_t\widetilde{\nabla}_t,\bW_t\rangle_F = R^2(1-\eta_t c\,\alpha_t),\qquad
    \|\bW_t-\eta_t\widetilde{\nabla}_t\|_F = R\sqrt{1-2\eta_t c\,\alpha_t+\eta_t^2 c^2}.
\end{equation*}
Using this inner product constraint, we explicitly compute the rotational angle $\theta_t$ between consecutive steps:
\begin{equation*}
    \theta_t = \arccos\left(\frac{\langle \bW_{t+1}, \bW_t \rangle_F}{\|\bW_{t+1}\|_F\|\bW_t\|_F}\right) = \arccos \left(\frac{1-\eta_t c\,\alpha_t}{\sqrt{1-2\eta_t c\,\alpha_t+\eta_t^2 c^2}}\right)
\end{equation*}
Applying the second-order Taylor expansion for the cosine function, we obtain a simple geometric identity: $\theta_t \approx \eta_t\,c\,\sqrt{1-\alpha_t^2} \approx \eta_t c$ as $\alpha_t \approx 0$ (details in Appendix~\ref{app:fro-rotation}). This indicate that the Frobenius sphere constraint directly enforces a steady-state rotation while standard weight decay penalizes the norm to force a rotation. The full algebraic derivation is in Appendix~\ref{app:fro-rotation}.

To empirically validate this dynamic, we train a 330M Qwen3-like architecture using a constant learning rate 0.01 and various alignment ratios $c$ ranging from $[0.1, 5.0]$. As Figure~\ref{fig:frob_dynamics} shows, the 
Frobenius rotation angle remains flat,
confirming our derived identity $\theta_t^{\text{fro}} \approx c \eta_t$. 

\paragraph{Adaptive Rotational Equilibrium under the Spectral Sphere.}
We now analyze the rotation angle $\theta_t$ for spectral sphere, which requires a different approach because spectral norm is not induced by an inner product. Instead of a single global rotation angle, the rotation is anisotropic: different singular subspaces rotate by different amounts. We focus on the maximum singular subspace, which is a natural choice because the spectral sphere explicitly regulates the maximum singular value. Let $\bu_t$ and $\bv_t$ denote the leading left and right singular vectors of $\bW_t$. We define the effective rotational angle for this primary direction as $\theta_t= \max\{\theta_u, \theta_v\}$, where $\theta_u = \arccos(|\bu_{t+1}^\top\bu_t|)$ and $\theta_v = \arccos(|\bv_{t+1}^\top\bv_t|)$. We treat our unprojected update step $- \eta_t \cdot \widetilde{\nabla}_t$ 
as the perturbation matrix, with magnitude $\|\Delta\bW_t\|_2 = \eta_t c R$. By Wedin $\sin \Theta$ theorem~\citep{davis1970rotation,wedin1972perturbation}, we provide the following bound (see Appendix~\ref{app:spec-rotation} for proofs):
\begin{equation*}
\max\{\sin \theta_u, \sin \theta_v\} \le {\|\Delta \bW_t\|_2}/{\Delta_1} \approx {\eta_t c R}/{\Delta_1}.
\end{equation*} 
Here we assume the leading singular value is unique, so the spectral gap $\Delta_1 := R - \sigma_2(\bW_{t+1}) > 0$. Substituting this definition into the above bound yields: $ \sin \theta_t \lesssim \frac{\eta_t c R}{R - \sigma_2(\bW_{t+1})}$.
Applying the small-angle approximation ($\sin \theta \approx \theta$), we obtain:
\begin{equation}\label{eq:thetat_spec}
    \theta_t \lesssim \eta_t c \left( \frac{R}{R - \sigma_2(\bW_{t+1})} \right).
\end{equation}

\begin{figure}[htbp]
    \centering
    \begin{minipage}{0.48\textwidth}
        \centering
        \includegraphics[width=\linewidth]{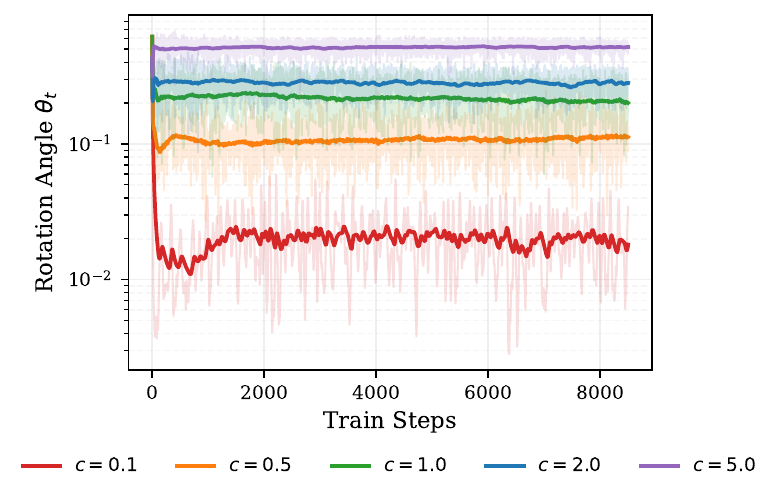}
        {\small (a) Principal rotation angle of the left leading singular vector $u_1$.}
    \end{minipage}\hfill
    \begin{minipage}{0.48\textwidth}
        \centering
        \includegraphics[width=\linewidth]{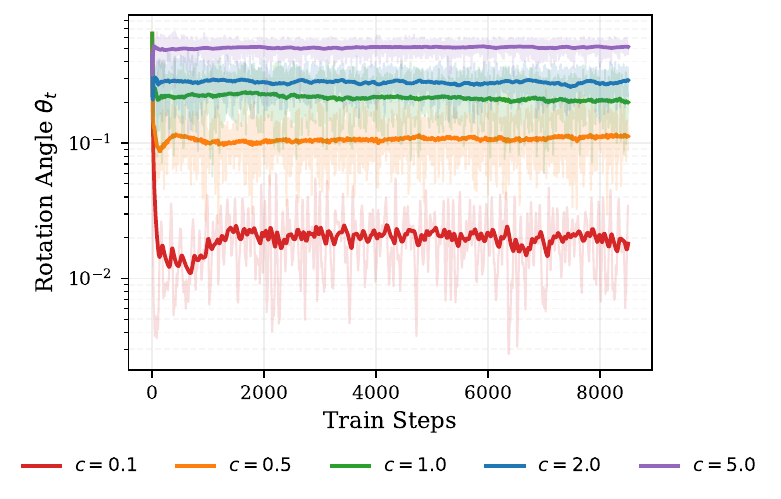}
        {\small (b) Principal rotation angle of the right leading singular vector $v_1$.}
    \end{minipage}
    \vspace{0.2cm}
    \caption{Rotational angle under the spectral constraint: 
    The spectral rotation angles display high-frequency variance. Despite this,
    both the $u_1$ and $v_1$ trajectories remain well-balanced and stable.
    This dynamic behavior supports our Wedin bound analysis: the local spectral gap continuously modulates these adaptive rotations.}
    \label{fig:spec_rotation}
\end{figure}

Figure~\ref{fig:spec_rotation} visualizes the principal rotation angles under the spectral constraint. We used the same setting as
used in the Frobenius Sphere experiment 
depicted 
in Figure \ref{fig:frob_dynamics}.
Geometrically, the left and right singular vectors $u_1$ and $v_1$ rotate the output and input features, respectively. Despite this difference, both subspaces exhibit highly similar rotational trajectories, which indicates that the input and output feature rotations remain well-balanced during training. Furthermore, comparing the rotation angle under the spectral constraint (Eq. \eqref{eq:thetat_spec}) to the Frobenius case ($\theta_t \approx \eta_t c$, Figure~\ref{fig:frob_dynamics}) reveals a key difference. The Frobenius constraint imposes a static, global rotation determined purely by constant hyperparameters. In contrast, the spectral constraint induces an \textit{adaptive} rotation. For the leading feature dimension, the training-dependent spectral gap $R - \sigma_2(\bW_{t+1})$ explicitly modulates the rotation angle. These distinct behaviors support our theoretical analysis.

\begin{fancyTakeawayA}
\textit{\textbf{\ding{43}\ Key Takeaway:} (i) Weight decay is not needed if we use a manifold constraint optimizer. (ii) Unlike weight decay, manifold constraints regulate the relative learning rate and rotation angle from the very first iteration; while the Frobenius sphere enforces a static global rotation, the Spectral sphere induces an adaptive, anisotropic rotation.} 
\end{fancyTakeawayA}

\section{Numerical Justification}
\label{sec:justification}

To evaluate the empirical performance of \ourname{}, we train QWEN3-like architectures (ranging from 120M to 1B) equipped with SwiGLU activations, Grouped-Query Attention (GQA), Rotary Positional Embeddings (RoPE), and pre-normalization RMSNorm. Across all model sizes, our token budgets exceed the Chinchilla-optimal token budget. Exact token budgets are detailed in Table~\ref{tab:section5_setup} in Appendix~\ref{app:section 5 numerical details}. We utilize the standard PyTorch implementation of the Muon~\cite{jordan2024muon} optimizer\footnote{\url{https://docs.pytorch.org/docs/stable/generated/torch.optim.Muon.html}}. To provide a comprehensive comparison of different manifold constraints, we extend the Spectral Sphere Optimization (SSO) algorithm and MuonH onto two Frobenius and Spectral Sphere, respectively, resulting FSO and MuonH-spec algorithms (stated in Appendix~\ref{app:SSO varaiants}). Table~\ref{tab:optimizer_comparison} summarizes the train and test validation losses. Detailed experimental and hyperparameters settings are in Appendix~\ref{app:section 5 numerical details}. We draw three conclusions from these results. 

\begin{table}[htbp]
  \renewcommand{\arraystretch}{1.25} 
  \setlength{\tabcolsep}{12pt} 
  \caption{Train and test validation loss for QWEN3-like models across different parameter scales.}
  \centering
  \small
  \begin{threeparttable}
    \begin{tabular}{lcccccc}
      \toprule
      \multirow{2}{*}{\textbf{Optimizers}} 
      & \multicolumn{2}{c}{\textbf{120M}} 
      & \multicolumn{2}{c}{\textbf{330M}} 
      & \multicolumn{2}{c}{\textbf{1B}} \\
      \cmidrule(lr){2-3} \cmidrule(lr){4-5} \cmidrule(lr){6-7}
      & Train & Validate & Train & Validate & Train & Validate \\
      \midrule
      
      Muon (Baseline) & 3.008 & 3.019 & 2.684 & 2.736 & 2.419 & 2.473 \\
      
      \rowcolor{black!5}
      MuonH-fro      & 2.997 & 3.007 & 2.680 & 2.717 & 2.404 & 2.468 \\
      \rowcolor{black!5}
      MuonH-spec     & 3.008 & 3.019 & 2.679 & 2.716 & 2.409 & 2.464 \\
      
      FSO\tnote{$\dagger$} & \textbf{2.990} & \textbf{3.001} & 2.690 & 2.726 & - & - \\
      SSO\tnote{$\dagger$}            & 3.001 & 3.011 & 2.675 & \textbf{2.712} & - & - \\
      
      \rowcolor{black!5}
      \ourname{}-fro (Ours) & 2.995 & 3.005 & 2.670 & 2.718 & 2.403 & 2.467 \\
      \rowcolor{black!5}
      \ourname{}-spec (Ours)& 3.007 & 3.017 & \textbf{2.665} & 2.714 & \textbf{2.398} & \textbf{2.461} \\
      \bottomrule
    \end{tabular}
    \begin{tablenotes}[flushleft]
      \scriptsize
      \linespread{0.9}\selectfont
      \item[$\dagger$] Scaling these double-loop algorithms require custom CUDA kernels. Due to limited computational resources and the absence of such operator-level optimizations, we omit SSO and FSO for the 1B model.
    \end{tablenotes}
  \end{threeparttable}
  \label{tab:optimizer_comparison}
\end{table}

\begin{itemize}[leftmargin=*]
    \item Manifold constraints consistently improve validation loss across all model scales over the unconstrained baseline. This improvement empirically verifies our analysis in Section~\ref{sec:effects_on_wd}. Specifically, manifold constraints regulate training dynamics more effectively than heuristic weight decay which aligns with the findings in \cite{wen2025hyperball}.
    \item The Spectral and Frobenius constraints exhibit similar overall performance. Unlike the normalization-free experiments (Section~\ref{sec:rmsnorm_interplay}), the Spectral constraint provides only marginal improvements over the Frobenius constraint at the 330M and 1B scales. This marginal difference remains consistent across all evaluated optimizers. This similarity occurs because the model's RMSNorm layers explicitly control activation scales. Consequently, these normalization layers override the distinct activation control mechanisms discussed in Section~\ref{sec:activation_control}.
    \item Compared to MuonH, \ourname{} achieves comparable or slightly better performance. Algorithmically, removing the tangent space projection (Line 6 in Algorithm~\ref{alg:macro}) reduces \ourname{} to MuonH. This similar performance indicates that removing the projection step does not significantly alter the trajectory. However, removing this step breaks Riemannian optimization principles, whereas \ourname{} strictly follows them. Furthermore, the fast single-loop approximation of \ourname{} (which avoids the high computational costs of exact double-loop optimizers such as SSO and FSO) does not degrade the validation loss, as confirmed again by Table~\ref{tab:optimizer_comparison}. 
\end{itemize}

\section{Conclusions}
In this paper, we demystify the effects of manifold constraints in LLM pre-training. Theoretically, manifold constraints directly bound the forward activation scale, which stabilizes training without requiring learnable RMS normalization layers. Moreover, these constraints explicitly lock the relative learning rate and enforce a stable rotational equilibrium from the first training step. Our \ourname{} algorithm effectively subsumes the roles of explicit normalization layers and decoupled weight decay. 
Evaluation of \ourname{} on standard large-scale LLM architectures
demonstrates that applying manifold constraints improves generalization performance compared to unconstrained baselines. Furthermore, \ourname{} achieves validation losses comparable to, or slightly better than, existing heuristic methods
while strictly following Riemannian optimization principles with rigorous convergence guarantees.

\label{sec:conclusions}

\newpage

\small
\bibliographystyle{unsrtnat} 
\bibliography{reference}

\appendix

\section{Related Works}
\label{app:related works}

\paragraph{Muon optimizers and manifold-constrained variants.}
Muon~\citep{jordan2024muon,pethick2025trainingdeeplearningmodels} introduced a spectral-norm steepest descent update for weight matrices, achieving strong empirical performance in LLM pre-training. It is a parallel  work of matrix structured preconditioning methods such as Shampoo~\citep{eschenhagen2025purifyingshampooinvestigatingshampoos,gupta2018shampoo,eschenhagen2026clarifyingshampooadaptingspectral,shi2023distributeddataparallelpytorchimplementation}. \citet{bernstein2024old} provided theoretical grounding by interpreting Muon as an approximate Newton's method on the Stiefel manifold. Subsequent work has extended Muon in several directions: \citet{zhao2026principledmuonmumathsfpensuring} ensured spectral conditions under the $\mu$P parameterization; \citet{an2025asgo} proposed ASGO, which generalizes structured gradient preconditioning across different matrix decompositions; and \citet{dolatabadi2026numuonnuclearnormconstrainedmuoncompressible} introduced NuMuon, which replaces the spectral constraint with a nuclear norm constraint to encourage low-rank structure during training. Some recent studies on structure-aware optimizers \cite{ma2025swansgdnormalizationwhitening, glentis2025minimalistoptimizerdesignllm,xu2026widthscalingneuraloptimizers,scetbon2025gradientmultinormalizationstatelessscalable} also consider row/column-wise normalization for the gradient. In parallel, several methods have explored explicit manifold constraints for LLM training. SSO~\citep{xie2026controlledllmtrainingspectral} formulates training on the spectral sphere with provable double-loop projections, while Mano~\citep{gu2026manorestrikingmanifoldoptimization} targets oblique manifolds and demonstrates competitive pre-training results. MuonH~\citep{wen2025hyperball} constrains weights to a Frobenius sphere via heuristic normalization after each Muon step. \citet{yang2026manifoldconstrainedsteepestdescent} study Stiefel manifold steepest descent from a general Riemannian optimization perspective.

\paragraph{Training deep learning models with constraints.}
Imposing structural constraints on neural network weights has a long history. Spectral normalization~\citep{miyato2018spectral} was originally proposed to stabilize GAN training by controlling the Lipschitz constant of discriminator layers. \citet{newhouse2025trainingtransformersenforcedlipschitz} extended this idea to enforce Lipschitz bounds in transformers. \citet{pethick2025trainingdeeplearningmodels} trained deep networks with norm-constrained linear minimization oracles (LMOs), offering a Frank-Wolfe perspective on constrained optimization. \citet{liu2021learningbyturning} proposed optimization directly on the Stiefel manifold by decomposing updates into rotations, and \citet{arefin2026learning} studied spectral constraints for feature learning in transformers. At the architecture level, nGPT~\citep{loshchilov2024ngpt} normalizes all representations to lie on a hypersphere, and Nemotron-Flash~\citep{fu2025nemotronflash} builds on this design for latency-efficient models. \citet{franke2025compactspaces} further explore approximately normalized transformers that learn in compact parameter spaces. \citet{bernstein2025manifolds} provides a conceptual framework connecting modular network design to manifold structure. While these methods each address specific constraint types, none systematically investigates how different manifold geometries interact with normalization layers and weight decay during LLM pre-training.

\paragraph{Normalization layers and their variants.}
Batch Normalization~\citep{ioffe2015batchnormalizationacceleratingdeep} introduced per-channel normalization of activations and was followed by Layer Normalization~\citep{ba2016layernormalization}, Group Normalization~\citep{Wu_2018_ECCV}, and RMSNorm~\citep{zhang2019rootmeansquarelayer}. In particular, RMSNorm is now standard in transformer-based LLMs. In the weight space, Weight Normalization~\citep{salimans2016weightnormalizationsimplereparameterization} decouples the magnitude and direction of weight vectors, and Weight Standardization~\citep{qiao2020microbatchtrainingbatchchannelnormalization} normalizes the rows of weight matrices to improve micro-batch training. \citet{wan2021spherical} analyze how the combination of batch normalization and weight decay induces spherical motion dynamics, revealing that the effective learning rate is governed by the angular update rather than the Euclidean step size. \citet{liu2018decoupled} propose Decoupled Networks, which separate the angular and radial components of inner products in linear layers. \citet{miyato2024akorn} introduce Artificial Kuramoto Oscillatory Neurons, whose phase-based representations naturally reside on the unit circle. \citet{karras2024analyzing} demonstrate that weight normalization of convolution layers stabilizes diffusion model training by controlling magnitude growth. \citet{owen2025variancecontrol} propose weight rescaling to control activation variance during LLM pre-training. Our work demonstrates that manifold constraints on weight matrices can formally subsume the stabilizing role of explicit normalization layers, enabling normalization-free linear blocks without sacrificing training stability.

\paragraph{Weight decay and its dynamics.}
Weight decay was originally introduced as $\ell_2$ regularization to improve generalization~\citep{zhang2018mechanismsweightdecayregularization}, and \citet{loshchilov2019decoupledweightdecayregularization} showed that decoupling it from the gradient (as in AdamW) leads to substantially different behavior than classical $\ell_2$ penalties. \citet{zhuang2022understanding} further clarify this distinction through proximal operator interpretations. A key insight from \citet{li2020reconcilingmoderndeeplearning} is that for scale-invariant networks, weight decay controls the \emph{intrinsic learning rate}---the ratio of the update norm to the weight norm---rather than acting as a direct regularizer.  \citet{li2019exponentiallearningrateschedule} further show that this mechanism induces an implicit exponential learning rate schedule. \citet{kosson2024rotationalequilibriumweightdecay} formalize this as \emph{rotational equilibrium}, showing that weight decay balances the angular velocity of weight updates across layers. \citet{heo2021adamp} propose AdamP, which explicitly removes the radial component of momentum updates to prevent the slowdown effect on scale-invariant weights. \citet{vanlaarhoven2017l2reg} analyze the interaction between $\ell_2$ regularization and batch/weight normalization, showing that their combined effect reduces to a modulation of the effective step size. \citet{kosson2024warmup} connect weight decay dynamics to learning rate warmup, demonstrating that careful weight initialization can reduce the need for warmup in GPT training. \citet{xie2024overlookedpitfallsweightdecay} identify gradient-norm pathologies caused by weight decay, and \citet{dangelo2024needweightdecaymodern} revisit why weight decay remains necessary in modern architectures, attributing its benefit primarily to scale invariance. \citet{xu2026widthscalingneuraloptimizers}, \citet{Su_MuP4} and \citet{chen2025lionsecretlysolvesconstrained} show that weight decay together with bounded update
directions keeps the parameters in a uniformly bounded ball. Finally, \citet{defazio2025gradientsrapidlyincreasenear} show that gradient norms increase rapidly near the end of training, a phenomenon linked to the decay of weight norms. Our analysis shows that manifold constraints intrinsically govern the quantities that weight decay controls heuristically---locking the relative learning rate and rotation angle---and thus provide a principled geometric alternative to weight decay tuning.

\section{Algorithm Supplementary Materials}
\subsection{Tangent Space Projection and Projectional Retraction}
\label{app:Tangent Space Projection and Retraction}
\paragraph{Tangent Space Projection.}
A critical component of our single-loop algorithm is the orthogonal projection of the momentum $M_t$ (or gradient) onto the tangent space of the constraint manifold. We define the projected vector $\Phi^*_t$ as the solution to the following proximity problem:
$$
\Phi^*_t = \arg\min_{\Phi \in \T_\mathcal{M}{\bW_t}} \|\Phi - M_t\|_F^2.
$$
Given that the constraints we consider can be locally characterized by a scalar function $h(W) = c$, the tangent space is a linear subspace. This yields a simple closed-form solution:
$$
\Phi^*_t = M_t - \frac{\langle M_t, \Theta \rangle}{\|\Theta\|_F^2}\Theta,
$$
where $\Theta = \nabla_W h(W)$ represents the normal vector to the constraint surface. This projection is highly computationally efficient, as it typically involves only fundamental matrix-vector or inner products. The specific projection operators for our four constraints are detailed below:
\begin{itemize}[leftmargin=*]
    \item Frobenius Sphere Constraint ($\mathcal{M}_{F}$): The normal vector is the weight matrix itself, $\Theta = W$. The projection is given by:
    $$
    \Phi^*_t = M_t - \frac{\langle M_t, W\rangle}{\|W\|_F^2}W.
    $$
    \item Spectral Sphere Constraint ($\mathcal{M}_{S}$): The normal vector is $\Theta = u_1v_1^\top$, where $u_1$ and $v_1$ are the leading singular vectors of weight matrices $\bW$. Since $\|\Theta\|_F^2 = \operatorname{Trace}(v_1u_1^\top u_1v_1^\top) = 1$, the projection simplifies to:
    $$
    \Phi^*_t = M_t - \operatorname{Trace}(M_t^\top u_1v_1^\top)u_1v_1^\top = M_t - (u_1^\top M_t v_1) u_1v_1^\top.
    $$
    \item Output Oblique Manifold Constraint ($\mathcal{M}_{O_{\textrm{out}}}$): Since the constraint is applied to each row independently, we project each row vector $M_t[i, :]$ onto the orthogonal complement of the corresponding weight row $W_{i, :}$:
    $$
    \Phi_t^*[i, :] = M_t[i, :] - \frac{M_t[i, :] W_{i, :}^\top}{\|W_{i, :}\|_2^2} W_{i, :}.
    $$
    \item Input Oblique Manifold Constraint ($\mathcal{M}_{O_{\textrm{in}}}$): Symmetrically, the constraint applies to the columns. We project each column vector $M_t[:, j]$ independently:
    $$
    \Phi_t^*[:, j] = M_t[:, j] - \frac{W_{:, j}^\top M_t[:, j]}{\|W_{:, j}\|_2^2} W_{:, j}.
    $$
\end{itemize}

\paragraph{Projectional Retraction.}
The second critical component of our single-loop algorithm is the retraction of the update $\tilde{\bW}_t:=\bW_t - \eta_t\tilde{\nabla}_t$ to the constraint manifold. Again we use the projectional retraction:
$$
\bW_{t+1} = \mathcal{P}_{\M}(\tilde{\bW}_t) := \arg\min_{\bW \in \mathcal{M}} \|\bW - \tilde{\bW}_t \|_{F}^2.
$$
The specific projectional retraction operators for our four constraints are detailed below:
\begin{itemize}[leftmargin=*]
    \item Frobenius Sphere Constraint ($\mathcal{M}_{F}(R)$): The projection is given by:
    $$
    \bW_{t+1} = \mathcal{P}_{\M}(\tilde{\bW}_t) = R\cdot \tilde{\bW}_t / \|\tilde{\bW}_t\|_F.
    $$
    \item Spectral Sphere Constraint ($\mathcal{M}_{S}(R)$): The projection is given by:
    $$
    \bW_{t+1} = \mathcal{P}_{\M}(\tilde{\bW}_t) = U \Sigma' V^\top,
    $$
    where $U \Sigma V^\top = \tilde{\bW}_t$ is the singular value decomposition of $\tilde{\bW}_t$ and $\Sigma'$ is in the following structure: if $\Sigma$ has elements that are greater than $R$, then clip them to $R$; otherwise if all elements in $\Sigma$ are smaller than $R$, then turn the largest element of $\Sigma$ to $R$. However, this operation is quite expensive to compute in practice, and we \textbf{use the following approximate operation in all our experiments}:
    $$
    \bW_{t+1} = R \cdot \tilde{\bW}_t / \|\tilde{\bW}_t\|_2
    $$
    to approximately project the matrix $\tilde{\bW}_t$ back to the spectral manifold.
    
    \item Output Oblique Manifold Constraint ($\mathcal{M}_{O_{\textrm{out}}}$): Since the constraint is applied to each row independently, the projectional retraction is simply divide each row with its own norm, and multiply with $R$.
    
    \item Input Oblique Manifold Constraint ($\mathcal{M}_{O_{\textrm{in}}}$): Symmetrically, the constraint applies to the columns. The projectional retraction is simply divide each column with its own norm, and multiply with $R$.
\end{itemize}

\subsection{Frobenius Sphere Optimization and MuonH-spec}
\label{app:SSO varaiants}
Inspired by \citep{xie2026controlledllmtrainingspectral}, we introduce the Frobenius Sphere Optimization by considering the following subproblem: 
\begin{equation}\label{sso-subproblem}
\begin{array}{cl}
\max_{\boldsymbol{\Phi}} & \langle\bG, \boldsymbol{\Phi}\rangle \\
\text{s.t.} & \|\boldsymbol{\Phi}\|_2=1 , \quad \|\bW-\eta R \boldsymbol{\Phi}\|_F = \|\bW\|_F = R, 
\end{array}
\end{equation}
where $\bG$ is the stochastic gradient and $\bW$ is the weight matrix. Using the first-order Taylor's expansion of the Frobenius norm, we have:
$$
\|\bW-\eta R\boldsymbol{\Phi}\|_F = \|\bW\|_F - \eta \langle \bW, \boldsymbol{\Phi}\rangle + \mathcal{O}(\eta^2).
$$

Using the first order approximation, problem \eqref{sso-subproblem} reduces to:
\begin{equation}
\begin{array}{cl}
    \max_{\boldsymbol{\Phi}} \langle \bG, \boldsymbol{\Phi} \rangle \quad \text{s.t.} \quad \|\boldsymbol{\Phi}\|_2 = 1, \quad \langle \boldsymbol{\bW}, \boldsymbol{\Phi} \rangle = 0.
\end{array}
\label{eq:subproblem}
\end{equation}

To solve Eq.~\ref{eq:subproblem}, we consider its Lagrangian relaxation:
$$
\begin{array}{cl}
\max_{\boldsymbol{\Phi}} \langle \bG + \lambda \bW, \boldsymbol{\Phi} \rangle \quad \text{s.t.} \quad \|\boldsymbol{\Phi}\|_2 = 1.
\end{array}
$$
Following Theorem A.1 in \citep{xie2026controlledllmtrainingspectral}, the closed-form solution to this relaxed problem is:
$$
\boldsymbol{\Phi}(\lambda) = \arg\max_{\|\boldsymbol{\Phi}\|_2 = 1} \langle \bG + \lambda \bW , \boldsymbol{\Phi}\rangle = \operatorname{msign}(\bG + \lambda \bW).
$$

To satisfy the tangent space feasibility condition in Eq.~\ref{eq:subproblem}, we require $\langle \bW, \boldsymbol{\Phi}(\lambda^*) \rangle = 0$. Theorem A.2 in \cite{xie2026controlledllmtrainingspectral} establishes that the multiplier $\lambda$ is monotonic under the spectral sphere constraint. We show that this monotonicity also holds for the Frobenius sphere constraint:

\begin{lemma}
\label{lemma:frob_monotonicity}
The function
$$h(\lambda) = \langle \bW, \boldsymbol{\Phi}(\lambda)\rangle = \langle \bW, \operatorname{msign}(\bG+\lambda \bW)\rangle$$
is monotonically non-decreasing with respect to $\lambda$.
Moreover, there exists a root $\lambda^* \in \mathbb{R}$ such that $h(\lambda^*) = 0$, and any such root rigorously satisfies the bound $|\lambda^*| \leq \frac{2 \|\bG\|_*}{\|\bW\|_*}$.
\end{lemma}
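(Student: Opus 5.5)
The plan is to recast $h$ as a derivative of a convex function. Set $f(\lambda) := \|\bG+\lambda\bW\|_*$. By nuclear/spectral norm duality, $\|A\|_* = \max_{\|\Phi\|_2\le 1}\langle A,\Phi\rangle$. The maximum is attained at $\operatorname{msign}(A)=\bU\bV^\top$, using the compact SVD when $A$ is rank deficient, since then $\|\operatorname{msign}(A)\|_2\le 1$. So $f$ is a pointwise maximum of affine functions of $\lambda$, hence convex. By Danskin's argument, $h(\lambda)=\langle\bW,\Phi(\lambda)\rangle$ is a subgradient of $f$ at $\lambda$. Concretely, for all $\lambda,\mu$,
\begin{equation*}
f(\mu)\;\ge\;\langle \bG+\mu\bW,\Phi(\lambda)\rangle \;=\; f(\lambda)+(\mu-\lambda)\,h(\lambda).
\end{equation*}
Writing the same inequality with $\lambda$ and $\mu$ swapped and adding the two gives $(\mu-\lambda)\bigl(h(\mu)-h(\lambda)\bigr)\ge 0$. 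This is exactly the claimed monotonicity.

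For the bound on a root, apply the subgradient inequality at $\mu=0$. This gives $\lambda h(\lambda)\ge f(\lambda)-f(0)$. By the triangle inequality, $f(\lambda)-f(0)\ge |\lambda|\,\|\bW\|_* - 2\|\bG\|_*$. If $h(\lambda^*)=0$, the left-hand side vanishes, so $|\lambda^*|\,\|\bW\|_*\le 2\|\bG\|_*$. This is the stated bound, and it holds for every root. The same inequality also gives sign information away from the interval: for $\lambda>2\|\bG\|_*/\|\bW\|_*$ we get $h(\lambda)>0$, and for $\lambda<-2\|\bG\|_*/\|\bW\|_*$ we get $h(\lambda)<0$. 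So $h$ changes sign inside $I:=[-2\|\bG\|_*/\|\bW\|_*,\,2\|\bG\|_*/\|\bW\|_*]$. Here we assume $\bW\neq 0$, which holds since $\|\bW\|_F=R>0$.

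The main obstacle is existence of an exact root, because $h$ is monotone but possibly discontinuous. The selection $\operatorname{msign}$ can jump at values of $\lambda$ where the rank of $\bG+\lambda\bW$ drops. My first approach is continuity away from a finite exceptional set. Rank drops of $\bG+\lambda\bW$ occur only at the real roots of a nonzero polynomial in $\lambda$, namely the sum of squared maximal-rank minors, so they form a finite set. Off that set, the nonzero singular subspaces vary continuously, so $\operatorname{msign}$, and hence $h$, is continuous there. Then the intermediate value theorem on each subinterval of $I$ produces a root unless the sign change of $h$ occurs exactly at a jump point $\bar\lambda$. In that degenerate case, the value $0$ lies in $[h(\bar\lambda^-),h(\bar\lambda^+)]\subseteq\partial f(\bar\lambda)$, so $\bar\lambda$ minimizes $f$. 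I would then argue directly that the compact-SVD selection satisfies $\langle\bW,\operatorname{msign}(\bG+\bar\lambda\bW)\rangle=0$. If that fails, I would fall back to interpreting a root as $0\in\partial f(\lambda^*)$. Under that interpretation, existence follows from coercivity of $f$, which gives $f(\lambda)\ge|\lambda|\|\bW\|_*-\|\bG\|_*$ and hence a minimizer, and the bound argument above applies verbatim.
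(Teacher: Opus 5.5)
Your monotonicity and localization steps are correct and are the paper's argument in convex-analytic form. The paper sums the two maximizer cross-inequalities for $\operatorname{msign}(\bG+\lambda_i\bW)$, which are exactly your two subgradient inequalities for $f(\lambda)=\|\bG+\lambda\bW\|_*$, and the bound comes from the same reverse triangle inequality. You have also identified the step the paper glosses over: it gets a root from ``the Intermediate Value Property'' of a function that is only known to be monotone, and monotone functions need not have that property.

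\textbf{The step that fails.} Your plan to show that the compact-SVD selection gives $h(\bar\lambda)=0$ at a jump point cannot work. Write $\bG+\bar\lambda\bW=\bU_k\Sigma_k\bV_k^\top$ with orthogonal complements $\bU_\perp,\bV_\perp$. The nuclear-norm subdifferential gives $\partial f(\bar\lambda)=[h(\bar\lambda)-a,\,h(\bar\lambda)+a]$ with $a=\|\bU_\perp^\top\bW\bV_\perp\|_*$. So $h(\bar\lambda)$ is the \emph{midpoint} of $\partial f(\bar\lambda)=[h(\bar\lambda^-),h(\bar\lambda^+)]$, and nothing forces it to be zero.

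\textbf{Counterexample.} Take $\bG=\diag(1,0)$ and $\bW=\diag(1,2)$, so $\bG+\lambda\bW=\diag(1+\lambda,2\lambda)$. Then $h$ takes the values $-3,-2,-1,1,3$ on $(-\infty,-1)$, $\{-1\}$, $(-1,0)$, $\{0\}$, $(0,\infty)$ respectively. A full-SVD convention would instead give values in $\{-3,-1\}$ and $\{-1,3\}$ at the two kinks. Either way, no SVD-based $\operatorname{msign}$ makes $h$ vanish, even though $0\in\partial f(0)=[-1,3]$.

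\textbf{What this means for your proof.} The lemma's existence claim is false as literally stated, so your fallback is not optional; it is the correct repair. The provable statement has three parts. First, some $\lambda^*$ satisfies $0\in\partial f(\lambda^*)$, i.e.\ $\lambda^*\in\arg\min f$, which exists by your coercivity bound. Second, every such $\lambda^*$ obeys $|\lambda^*|\le 2\|\bG\|_*/\|\bW\|_*$, since $|\lambda^*|\|\bW\|_*-\|\bG\|_*\le f(\lambda^*)\le f(0)$. Third, $\lambda^*$ is an exact root of $h$ whenever $h$ is continuous there, in particular whenever $\bG+\lambda\bW$ does not lose rank at $\lambda^*$. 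You should make this the stated result rather than keep the exact-root version as your primary target.
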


\begin{proof}
We first prove that $h(\lambda)$ is a monotonic non-decreasing function. By the Theorem A.1 in \cite{xie2026controlledllmtrainingspectral}, we have:
\begin{equation}
\operatorname{msign}(\bX) = \operatorname{argmax}_{\|\mathbf{T}\|_2 = 1} \langle \bX, \mathbf{T} \rangle. \label{eq:max_def}
\end{equation}
Consider two arbitrary scalar values $\lambda_2 > \lambda_1$. By the definition of $\boldsymbol{\Phi}(\lambda) = \operatorname{msign}(\bG + \lambda \bW)$ as the precise maximizer of the inner product $\langle \bG + \lambda \bW, \cdot \rangle$ over the unit spectral ball, we establish the following two cross-inequalities:
\begin{align}
\langle \bG + \lambda_1 \bW, \boldsymbol{\Phi}(\lambda_1) \rangle &\geq \langle \bG + \lambda_1 \bW, \boldsymbol{\Phi}(\lambda_2) \rangle \nonumber \\
\Rightarrow \quad \langle \bG, \boldsymbol{\Phi}(\lambda_1) \rangle + \lambda_1 \langle \bW, \boldsymbol{\Phi}(\lambda_1) \rangle &\geq \langle \bG, \boldsymbol{\Phi}(\lambda_2) \rangle + \lambda_1 \langle \bW, \boldsymbol{\Phi}(\lambda_2) \rangle, \label{eq:ineq1} \\
\langle \bG + \lambda_2 \bW, \boldsymbol{\Phi}(\lambda_2) \rangle &\geq \langle \bG + \lambda_2 \bW, \boldsymbol{\Phi}(\lambda_1) \rangle \nonumber \\
\Rightarrow \quad \langle \bG, \boldsymbol{\Phi}(\lambda_2) \rangle + \lambda_2 \langle \bW, \boldsymbol{\Phi}(\lambda_2) \rangle &\geq \langle \bG, \boldsymbol{\Phi}(\lambda_1) \rangle + \lambda_2 \langle \bW, \boldsymbol{\Phi}(\lambda_1) \rangle. \label{eq:ineq2}
\end{align}
Summing Equation (\ref{eq:ineq1}) and Equation (\ref{eq:ineq2}), the cross-terms involving $\bG$ naturally cancel out, allowing us to derive:
\begin{align}
\lambda_1 \langle \bW, \boldsymbol{\Phi}(\lambda_1) \rangle + \lambda_2 \langle \bW, \boldsymbol{\Phi}(\lambda_2) \rangle &\geq \lambda_1 \langle \bW, \boldsymbol{\Phi}(\lambda_2) \rangle + \lambda_2 \langle \bW, \boldsymbol{\Phi}(\lambda_1) \rangle \nonumber \\
\Rightarrow \quad (\lambda_2 - \lambda_1) \langle \bW, \boldsymbol{\Phi}(\lambda_2) \rangle &\geq (\lambda_2 - \lambda_1) \langle \bW, \boldsymbol{\Phi}(\lambda_1) \rangle. \label{eq:monofinal}
\end{align}
Given that $\lambda_2 - \lambda_1 > 0$, we can divide both sides by $\lambda_2 - \lambda_1$ to obtain $\langle \bW, \boldsymbol{\Phi}(\lambda_2) \rangle \geq \langle \bW, \boldsymbol{\Phi}(\lambda_1) \rangle$. This algebraically confirms that $h(\lambda_2) \geq h(\lambda_1)$, thus $h(\lambda)$ is strictly monotonic non-decreasing in $\lambda$.

We next localize the root $\lambda^*$. For any $\lambda > \frac{2\|\bG\|_*}{\|\bW\|_*} > 0$ and setting $\bT = \boldsymbol{\Phi}(\lambda) = \operatorname{msign}(\bG + \lambda \bW)$, the definition in Equation (\ref{eq:max_def}) ensures $\|\bT\|_2 \leq 1$. We have the relation:
\begin{equation}
\lambda h(\lambda) = \lambda \langle \bW, \bT \rangle = \langle \bG + \lambda \bW, \bT \rangle - \langle \bG, \bT \rangle. \label{eq:decomp}
\end{equation}
By the duality of norms, the inner product is bounded by $\langle \bG, \bT \rangle \leq \|\bG\|_* \|\bT\|_2 \leq \|\bG\|_*$. Simultaneously, because $\bT$ maximizes the inner product with $\bG + \lambda \bW$, we have $\langle \bG + \lambda \bW, \bT \rangle = \|\bG + \lambda \bW\|_*$. Applying the reverse triangle inequality:
\begin{equation}
\|\bG + \lambda \bW\|_* \geq \|\lambda \bW\|_* - \|\bG\|_* = \lambda \|\bW\|_* - \|\bG\|_* \label{eq:triangle}
\end{equation}
Substituting Equation (\ref{eq:triangle}) and the dual norm bound into Equation (\ref{eq:decomp}), we derive:
\begin{equation}
\lambda h(\lambda) \geq (\lambda\|W\|_* - \|\bG\|_*) - \|\bG\|_* = \lambda\|W\|_* - 2\|\bG\|_*.
\end{equation}
Since we assumed $\lambda > \frac{2\|\bG\|_*}{\|\bW\|_*}$, it follows that $\lambda h(\lambda) > 0$. Because $\lambda$ is positive, this implies $h(\lambda) > 0$. Through symmetrical logical deduction, if $\lambda < \frac{-2\|\bG\|_*}{\|\bW\|_*}$, we obtain $\lambda h(\lambda) > 0$, which inherently forces $h(\lambda) < 0$ since $\lambda$ is strictly negative. 

Because $h(\lambda)$ represents a monotonically non-decreasing mapping (as established above) that transitions from a negative state at $\lambda = \frac{-2\|\bG\|_*}{\|W\|_*}$ to a positive state at $\lambda = \frac{2\|\bG\|_*}{\|W\|_*}$, the Intermediate Value Property mathematically guarantees the existence of at least one root $\lambda^*$ satisfying $h(\lambda^*) = 0$. Consequently, any valid root must reside within the bounded interval $[\frac{-2\|\bG\|_*}{\|W\|_*}, \frac{2\|\bG\|_*}{\|W\|_*}]$, proving $|\lambda^*| \leq \frac{2\|\bG\|_*}{\|W\|_*}$.
\end{proof}

Leveraging Lemma \ref{lemma:frob_monotonicity}, we can solve the subproblem in Eq.~\ref{eq:subproblem} via bisection search. This forms the basis of their double-loop algorithms: Frobenius Sphere Optimization (FSO), as summarized in Algorithm \ref{alg:SSO}.

For MuonH, \citet{wen2025hyperball} proposed Muon-Hyperball method under Frobenius Sphere based on the update step: 
\begin{equation*}
\bW_{t+1}= \mathcal{P}_{\mathcal{M}}\left(\bW_t-\eta R \cdot \operatorname{Normalize}\left(\operatorname{msign}(M_t\right)\right).
\end{equation*}
According to the projection Retraction we mentioned in \ref{app:Tangent Space Projection and Retraction} we can get the MuonH-spec using the following update step:
\begin{equation*}
    \bW_{t+1} = \frac{R \cdot \left(\bW_t - \eta c R \operatorname{msign}(M_t)\right)}{\|\bW_t - \eta c R \operatorname{msign}(M_t)\|_2}.
\end{equation*}

\begin{algorithm}[th]
\caption{Frobenius Sphere Optimization (FSO)}
\label{alg:SSO}
\begin{algorithmic}[1] 
    \State \textbf{Input:} Weights $\bW_{t}$, Learning rate $\eta_t$, Hyperparameters $\beta, R, c$
    \State \textbf{Output:} Updated Weights $\bW_{t+1}$
    \For{$t = 0, 1, \dots$}
        \State $G_t = \nabla \ell(\bW_t,\xi_t)$
        \vspace{0.05cm}
        \State $M_t = \beta M_{t-1} + (1-\beta) G_t$
        \vspace{0.05cm}
        \State Define $h(\lambda) = \langle \bW_t, \operatorname{msign}(M_t + \lambda \bW_t)\rangle$\;
        \State $\lambda_t^* \leftarrow \operatorname{Bisection}(h, \epsilon)$ \Comment{Find the multiplier $\lambda$}
        \vspace{0.05cm}
        \State $O_t \leftarrow \operatorname{msign}(M_t + \lambda_t^* \bW_t)$ \Comment{Compute the update direction}
        \vspace{0.05cm}
        \State $\widetilde{\nabla}_t = c R \cdot O_t / (\|O_t\|_{F}+\epsilon)$ 
        \Comment{Normalization and Scaling}
        \vspace{0.05cm}
        \State $\bW_{t+1} = \mathcal{P}_{\mathcal{M}} \left( \bW_t - \eta_t \cdot \widetilde{\nabla}_t \right)$ \Comment{Descent Step and Manifold Projection}
    \EndFor
\end{algorithmic}
\end{algorithm}

\section{Theoretical Supplementary Materials}
\subsection{Proof for Theorem \ref{thm:convergence}}\label{app:convergence_analysis}
In this subsection we provide theoretical justification of the proposed \ourname{} algorithm (Algorithm \ref{alg:macro}). We prove the convergence result in Theorem \ref{thm:convergence}, which holds for Frobenius, input and output Oblique manifolds.\footnote{The analysis follows \citet{yang2026manifoldconstrainedsteepestdescent} for Stiefel manifold. We also point out that the theoretical analysis in \cite{gu2026manorestrikingmanifoldoptimization} explicitly assumes that the angles between the weights and the stochastic gradients are bounded away from zero, which is a very restrictive and non-verifiable assumption.}

\begin{proof}[Proof of Theorem \ref{thm:convergence}]
    This proof follows \cite{yang2026manifoldconstrainedsteepestdescent}. By Assumption \ref{assumption1} and \ref{assumption2}, for any $\bW\in\M$ and $G$ we have\footnote{Note that we do not require $G$ to be on the tangent space, as proved in \citet[Lemma 4.4]{yang2026manifoldconstrainedsteepestdescent}}
    \begin{align}
        \L\circ \mathcal{P}_{\M}(\bW+G) \leq \L(\bW) + \langle\nabla_{\M}\L(\bW), G  \rangle + \frac{L}{2}\|G\|_F^2,
    \end{align}
    where $\nabla_{\M}$ denotes the Riemannian gradient. 
    Now from the algorithm we get
    \begin{equation*}
    \begin{aligned}
        \L(\bW_{t+1})=\L\circ \mathcal{P}_{\M}(\bW-\eta_t\tilde{\nabla}_t) \leq & \L(\bW_t) - \eta_t\langle\nabla_{\M}\L(\bW_t), \tilde{\nabla}_t  \rangle + \frac{L\eta_t^2}{2}\|\tilde{\nabla}_t\|_F^2 \\
        = & \L(\bW_t) - \langle\nabla_{\M}\L(\bW_t), \tilde{\nabla}_t \rangle + \frac{c^2 R^2 L\eta_t^2}{2}\left(\frac{\|O_t\|_F}{\|O_t\|_{\M}+\epsilon}\right)^2.
    \end{aligned}
    \end{equation*}

    Note that $\|O_t\|_2=1$, therefore $1 \leq \|O_t\|_F^2\leq d$ with $d:=\min\{\din, \dout\}$. Moreover, we may assume a uniform bound $c_0 \|A\|_{F}\leq \|A\|_{\M} \leq c_1 \|A\|_{F}$. Therefore,
    \begin{equation*}
    \begin{aligned}
        \L(\bW_{t+1}) \leq & \L(\bW_t) - \tilde{c}\eta_t\langle\nabla_{\M}\L(\bW_t), O_t  \rangle + \frac{\tilde{c}^2L\eta_t^2}{2}d^2,
    \end{aligned}
    \end{equation*}
    where we denote $\tilde{c}:=\frac{c R}{c_0+\epsilon}$. Further, we have
    \begin{equation*}
    \begin{aligned}
        & \L(\bW_{t+1}) \\ 
        \leq & \L(\bW_t) - \tilde{c}\eta_t\langle\nabla_{\M}\L(\bW_t), O_t  \rangle + \frac{\tilde{c}^2L\eta_t^2}{2}d^2 \\
        = & \L(\bW_t) - \tilde{c}\eta_t \left(\langle\nabla_{\M}\L(\bW_t) - \proj_{\T_{\M} \bW_t}(M_t), O_t \rangle + \langle\proj_{\T_{\M} \bW_t}(M_t), O_t \rangle \right) + \frac{\tilde{c}^2L\eta_t^2}{2}d.
    \end{aligned}
    \end{equation*}
    Since $O_t=\mathrm{msign}(\Phi_t)$ with $\Phi_t=\proj_{\T_{\M} \bW_t}(M_t)$, we know that $\langle\proj_{\T_{\M} \bW_t}(M_t), O_t \rangle = \|\Phi_t\|_{*}$, therefore (Denote the dual norm of $\|\cdot\|_{\M}$ as $\|\cdot\|_{*,\M}$)
    \begin{equation*}
    \begin{aligned}
        & \L(\bW_{t+1}) \\ 
        \leq & \L(\bW_t) + \tilde{c}\eta_t \|\nabla_{\M}\L(\bW_t) - \proj_{\T_{\M} \bW_t}(M_t)\|_{*} - \tilde{c}\eta_t\|\Phi_t\|_{*}  + \frac{\tilde{c}^2 L\eta_t^2}{2}d. 
    \end{aligned}
    \end{equation*}
    By Triangle inequality, $-\|\Phi_t\|_{*}\leq -\|\nabla_{\M}\L(\bW_t)\|_{*} + \|\nabla_{\M}\L(\bW_t) - \proj_{\T_{\M} \bW_t}(M_t)\|_{*}$, we have
    \begin{equation*}
    \begin{aligned}
        & \L(\bW_{t+1}) \\ 
        \leq & \L(\bW_t) + \tilde{c}\eta_t \|\nabla_{\M}\L(\bW_t) - \proj_{\T_{\M} \bW_t}(M_t)\|_{*} - \tilde{c}\eta_t\|\nabla_{\M}\L(\bW_t)\|_{*}  + \frac{\tilde{c}^2 L\eta_t^2}{2}d,
    \end{aligned}
    \end{equation*}
    i.e.,
    \begin{equation}\label{eq:temp1}
    \begin{aligned}
        & \tilde{c}\eta_t\|\nabla_{\M}\L(\bW_t)\|_{*} \\ 
        \leq & (\L(\bW_t) - \L(\bW_{t+1})) + \tilde{c}\eta_t \|\nabla_{\M}\L(\bW_t) - \proj_{\T_{\M} \bW_t}(M_t)\|_{*}  + \frac{\tilde{c}^2 L\eta_t^2}{2}d.
    \end{aligned}
    \end{equation}

    It remains to bound $\|\nabla_{\M}\L(\bW_t) - \proj_{\T_{\M} \bW_t}(M_t)\|_{*}$. Since $\nabla_{\M}\L(\bW_t) = \proj_{\T_{\M} \bW_t}(\nabla\L(\bW_t))$, we have
    \begin{equation*}
    \begin{aligned}
        &\|\nabla_{\M}\L(\bW_t) - \proj_{\T_{\M} \bW_t}(M_t)\|_{*} = \|\proj_{\T_{\M} \bW_t}(\nabla\L(\bW_t) - M_t)\|_{*} \\
        \leq & \sqrt{d} \|\proj_{\T_{\M} \bW_t}(\nabla\L(\bW_t) - M_t)\|_{F}\qquad\text{(WLOG assume full-rankness)} \\
        \leq & \sqrt{d} \|\nabla\L(\bW_t) - M_t\|_{F}.
    \end{aligned}
    \end{equation*}
    Hence, we just need to bound $\|\nabla\L(\bW_t) - M_t\|_{F}$. Denote $C_t:=\beta C_{t-1}+(1-\beta)\nabla \L(\bW_t)$ with $C_0:= \L(\bW_0)$, we have:
    \begin{equation*}
        \|\nabla\L(\bW_t) - M_t\|_{F} \leq \|\nabla\L(\bW_t) - C_t\|_{F} + \|C_t - M_t\|_{F}.
    \end{equation*}
    For the first term, we have
    \begin{equation*}
    \begin{aligned}
        &\|\nabla\L(\bW_t) - C_t\|_{F}\\ = & \|\nabla\L(\bW_t) - (\beta C_{t-1}+(1-\beta)\nabla \L(\bW_t))\|_{F} \\
        = & \beta \|\nabla\L(\bW_t) - C_{t-1}\|_{F} \\
        \leq & \beta \|\nabla\L(\bW_t) - \nabla\L(\bW_{t-1})\|_{F} + \beta \|\nabla\L(\bW_{t-1}) - C_{t-1}\|_{F} \\
        \leq & \beta L \|\bW_t - \bW_{t-1}\|_{F} + \beta \|\nabla\L(\bW_{t-1}) - C_{t-1}\|_{F} \\
        \leq & \beta L \eta_t \tilde{c}+ \beta \|\nabla\L(\bW_{t-1}) - C_{t-1}\|_{F}.
    \end{aligned}
    \end{equation*}
    Applying recursively we get 
    $$
    \|\nabla\L(\bW_t) - C_t\|_{F}\leq L\eta_t \tilde{c}\sum_{i=0}^{t-1}\beta^i \leq \frac{2L\eta_t \tilde{c}}{1-\beta}.
    $$
    For the second term $\|C_t - M_t\|_{F}$, we have
    \begin{equation*}
    \begin{aligned}
        &\EE \|C_t - M_t\|_{F}\\ \leq & (1-\beta) \EE \left\| \sum_{i=1}^{t}\beta^{t-i}(G_i - \nabla \L(\bW_i)) \right\|_{F} + \beta^t \EE \|G_0 - \nabla \L(\bW_0)\|_{F} \\
        \leq & (1-\beta) \sqrt{\EE \left\| \sum_{i=1}^{t}\beta^{t-i}(G_i - \nabla \L(\bW_i)) \right\|_{F}^{2}} + \beta^t \sqrt{\EE \|G_0 - \nabla \L(\bW_0)\|_{F}^{2}} \\
        \leq & (1-\beta) \sqrt{\sum_{i=1}^{t}\beta^{2(t-i)}\sigma^2} + \beta^t \sigma \leq \left(\sqrt{\frac{1-\beta}{1+\beta}}+\beta^t\right)\sigma.
    \end{aligned}
    \end{equation*}

    Now plug in everything back to \eqref{eq:temp1}, we get
    \begin{equation*}
    \begin{aligned}
        & \tilde{c}\eta_t\EE\|\nabla_{\M}\L(\bW_t)\|_{*} \\ 
        \leq & \EE(\L(\bW_t) - \L(\bW_{t+1})) + \tilde{c}\eta_t\sqrt{d} \EE\|\nabla_{\M}\L(\bW_t) - \proj_{\T_{\M} \bW_t}(M_t)\|_{F}  + \frac{\tilde{c}^2 L\eta_t^2}{2}d \\
        \leq & \EE(\L(\bW_t) - \L(\bW_{t+1})) + \tilde{c}(c_0+1)\eta_t\sqrt{d} \left( \frac{2L\eta_t \tilde{c}}{1-\beta} + \left(\sqrt{\frac{1-\beta}{1+\beta}}+\beta^t\right)\sigma \right)  + \frac{\tilde{c}^2 L\eta_t^2}{2}d.
    \end{aligned}
    \end{equation*}
    Summing up the above inequality from $t=0$ to $T-1$, we get
    \begin{equation*}
        \sum_{t=0}^{T-1}\tilde{c}\eta_t\mathbb{E}\|\nabla_{\mathcal{M}} \mathcal{L}(\bW_t)\|_{*}\lesssim \mathbb{E}[\mathcal{L}(\bW_0)-\mathcal{L}^*] + \tilde{c}\sqrt{d}\sum_{t=0}^{T-1}\left( \frac{2\beta L{\eta_t}}{1-\beta} + (\sqrt{\frac{1-\beta}{1+\beta}}+\beta^t)\sigma \right)\eta_t + \frac{\tilde{c}^2 d L}{2}\sum_{t=0}^{T-1}\eta_t^2.
    \end{equation*}
The final result is obtained by plugging in 
    $$
    \beta=1-\frac{1}{\sqrt{T}},\ \eta_t=\eta=\Omega\left(\sqrt{\frac{\Delta}{L T^{3/2}}}\right)
    $$
    to the above inequality.
\end{proof}

\textbf{Discussion on the spectral sphere}. The constraint $\mathcal{M}_S(R) = \{\bW \in \mathbb{R}^{\dout \times \din} : \|\bW\|_2 = R > 0\}$ is actually not a manifold, due to the possible existence of multiple top singular values. This can be proved by the fact that the largest singular value is differentiable only at matrices where it is simple; see \citet{lewis2005nonsmooth}.

If we restrict the constraint to the subset where the top singular value is unique (and the first and second singular values have a small gap)
$$
\tilde{\mathcal{M}}_S(R)=\{\bW: \|\bW\|_2 = R,\sigma_1(\bW)\geq \sigma_2(\bW)+\epsilon\}
$$
then $\tilde{\mathcal{M}}_S(R)$ is shown to be a smooth embedded compact submanifold. A simple argument is that, on the closed set $\Omega=\{X:\sigma_1(X)\geq\sigma_2(X)+\epsilon\}$, the largest singular value $X\rightarrow\sigma_1(X)$ is Fréchet differentiable; again see \citet{lewis2005nonsmooth}. Also the spectral sphere constraint $\mathcal{M}_S(R) = \{\bW \in \mathbb{R}^{\dout \times \din} : \|\bW\|_2 = R > 0\}$ is a regular level set, therefore their intersection is a smooth embedded submanifold by regular level set theorem; see \citet{lee2003smooth}. We can naturally consider this submanifold to apply Algorithm \ref{alg:macro} and conduct a convergence analysis exactly the same as Theorem \ref{thm:convergence}, due to the following reasons.
\begin{itemize}[leftmargin=*]
    \item The projection onto the tangent space of $\tilde{\mathcal{M}}_S(R)$ is exactly defined in Appendix \ref{app:Tangent Space Projection and Retraction};
    \item It is hard globally compute the projectional retraction onto the set $\tilde{\mathcal{M}}_S(R)$ (The $\mathrm{msign}$ projection may not necessarily keep the top singular value unique). However we argue that this is not a problem in practice: due to stochasticity, the gradients $G_t$ and $\Phi_t$ and the updates have unique top singular values with probability one, and $\mathrm{msign}$ could serve as the approximate projection in practice. 
\end{itemize}

Therefore, despite the fact that ${\mathcal{M}}_S(R)$ is not a smooth manifold, the proposed Algorithm \ref{alg:macro} and the convergence analysis in Theorem \ref{thm:convergence} can be applied for the spectral sphere constraint \textbf{as a manifold} almost surely.

\subsection{Theoretical Supplementary Material in Activation Controls in Section \ref{sec:activation_control}}
\label{app:activation_proofs}

This appendix provides the full proofs of the two activation-control lemmas underlying Proposition~\ref{prop:radius_selection}, formalizes the oblique-manifold variants discussed in the main body, and reconciles our radius choices with prior work~\citep{yang2023spectral,wen2025hyperball,xie2026controlledllmtrainingspectral}.

\subsubsection{Setup: Kronecker Reformulation and the Activation Control Setting}
\label{app:activation_setup}

Consider a single linear layer
\begin{equation}
    \bY = \bX \bW^\top, \qquad \bX \in \cR^{T \times \din},\ \bW \in \cR^{\dout \times \din},\ \bY \in \cR^{T \times \dout},
\end{equation}
where $T$ is the sequence length. Whereas earlier theoretical analyses treat the layer output as a single vector, in modern LLMs both $\bX$ and $\bY$ are matrices. Vectorizing along the sequence dimension yields the equivalent matrix--vector form
\begin{equation}\label{eq:kron_form}
    \ovec(\bY) \;=\; (\bW \otimes I_T)\,\ovec(\bX),
\end{equation}
with $\ovec(\bX) \in \cR^{T\din}$, $\ovec(\bY) \in \cR^{T\dout}$, and $\bI_T$ the $T \times T$ identity matrix. We work throughout with the input scale assumption
\begin{equation}\label{eq:input_scale}
    \|\ovec(\bX)\|_{\mathrm{RMS}} \;=\; \Theta(1),
\end{equation}
and we adopt the activation-control goal of \citet{yang2023spectral}:
\begin{equation}\label{eq:desideratum}
    \boxed{\;\|\ovec(\bY)\|_{\mathrm{RMS}} \;=\; \Theta(1).\;}
\end{equation}

The factorization~\eqref{eq:kron_form} exhibits a sharp asymmetry between the spectral and Frobenius norms of the lifted operator $\bW \otimes I_T$:

\begin{lemma}[Norms of $\bW \otimes I_T$]\label{lemma:kronecker_norm}
For any $\bW \in \cR^{\dout \times \din}$ and any $T \geq 1$,
\begin{equation}
    \bigl\|\bW \otimes I_T\bigr\|_2 \;=\; \|\bW\|_2,
    \qquad
    \bigl\|\bW \otimes I_T\bigr\|_F \;=\; \sqrt{T}\,\|\bW\|_F.
\end{equation}
\end{lemma}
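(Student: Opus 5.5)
The plan is to reduce both identities to the singular value decomposition of $\bW$ and the mixed-product property of the Kronecker product. Write $\bW = \bU \mathbf{\Sigma} \bV^\top$ with $\bU \in \cR^{\dout\times\dout}$ and $\bV \in \cR^{\din\times\din}$ orthogonal, and $\mathbf{\Sigma}$ the rectangular diagonal matrix of singular values. By the mixed-product rule,
\begin{equation*}
\bW \otimes I_T = (\bU \otimes I_T)(\mathbf{\Sigma} \otimes I_T)(\bV \otimes I_T)^\top .
\end{equation*}
The two outer factors are orthogonal, since $(\bU\otimes I_T)^\top(\bU\otimes I_T) = (\bU^\top\bU)\otimes I_T = I$, and likewise for $\bV \otimes I_T$.

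The middle factor $\mathbf{\Sigma}\otimes I_T$ is again rectangular diagonal with nonnegative entries. Its diagonal consists of each singular value $\sigma_i(\bW)$ repeated $T$ times. This expression is therefore a valid SVD of $\bW\otimes I_T$, and its singular values are exactly the singular values of $\bW$, each with multiplicity $T$ (plus zeros).

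Both claims then follow. The spectral norm is the largest singular value, so $\|\bW\otimes I_T\|_2 = \max_i \sigma_i(\bW) = \|\bW\|_2$. The Frobenius norm is the $\ell_2$ norm of the singular values, so $\|\bW\otimes I_T\|_F^2 = \sum_i T\sigma_i(\bW)^2 = T\|\bW\|_F^2$.

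As a cross-check for the Frobenius part, one can instead argue entrywise. The nonzero entries of $\bW\otimes I_T$ are exactly the entries $W_{ij}$, each appearing $T$ times (once on the diagonal of each $T\times T$ block $W_{ij}I_T$). Summing squares then gives $T\|\bW\|_F^2$ with no SVD needed.

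There is no real obstacle. The only point needing care is confirming that $\mathbf{\Sigma}\otimes I_T$ is genuinely in SVD form, i.e.\ rectangular diagonal up to the block ordering induced by the Kronecker structure. If one prefers to avoid this, an alternative route for the spectral norm is
\begin{equation*}
(\bW\otimes I_T)^\top(\bW\otimes I_T) = (\bW^\top\bW)\otimes I_T ,
\end{equation*}
whose eigenvalues are those of $\bW^\top\bW$, each repeated $T$ times. The spectral norm is then the square root of the largest of these eigenvalues.
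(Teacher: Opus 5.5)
Your proof is correct, and it differs from the paper's mainly in being self-contained. The paper simply cites the general identities $\|A\otimes B\|_2=\|A\|_2\|B\|_2$ and $\|A\otimes B\|_F=\|A\|_F\|B\|_F$ (Horn--Johnson, \S4.2) and specializes them. It uses the slightly odd substitution $A=I_T$, $B=\bW^\top$, which literally describes $I_T\otimes\bW^\top$ rather than $\bW\otimes I_T$. This is harmless, since both norms are invariant under transposition and under the permutation equivalence that swaps Kronecker factors. You instead derive the special case from the SVD of $\bW$ and the mixed-product rule, which is the standard proof of the cited fact itself. The citation gives generality for arbitrary $A,B$ at no cost. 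Your derivation works directly with the stated matrix and yields strictly more: the full singular spectrum of $\bW\otimes I_T$, with each $\sigma_i(\bW)$ repeated $T$ times. That is exactly what the paper later asserts without proof in the lower bound of Lemma~\ref{lemma:spectral_control}. Your caveat about block ordering is also unnecessary. With $r=(i-1)T+s$ and $c=(j-1)T+s'$, the $(r,c)$ entry of $\mathbf{\Sigma}\otimes I_T$ is $\Sigma_{ij}\delta_{ss'}$, which vanishes unless $r=c$. The resulting diagonal $\sigma_1,\dots,\sigma_1,\sigma_2,\dots$ is already non-increasing, so your factorization is an SVD in standard form.
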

\begin{proof}
Both identities are standard properties of the Kronecker product: for any matrices $A,B$, $\|A \otimes B\|_2 = \|A\|_2 \|B\|_2$ and $\|A \otimes B\|_F = \|A\|_F\|B\|_F$ (see, e.g., \citealp{horn1991topics}, \S4.2). Substituting $A = \bI_T$ (which has $\|\bI_T\|_2 = 1$ and $\|\bI_T\|_F = \sqrt{T}$) and $B = \bW^\top$ proves the claim.
\end{proof}

The spectral norm is therefore invariant to the sequence length $T$, while the Frobenius norm is amplified by a factor of $\sqrt{T}$. Consequently the two constraints lead to qualitatively different activation-control mechanisms, treated separately in Appendix~\ref{app:activation_spectral} and Appendix~\ref{app:activation_frob}.

\subsubsection{Spectral Sphere: Worst-Case Activation Control}
\label{app:activation_spectral}

For the spectral case, Lemma~\ref{lemma:kronecker_norm} immediately reduces the activation analysis to the spectral norm of $\bW$. The upper bound is straightforward; the matching lower bound requires that $\bW$ remain non-degenerate during training, which we encode in the following assumption.

\begin{assumption}[Anti-rank-collapse for $\bW$]\label{ass:full_rank}
The weight matrix $\bW$ is full rank with probability one throughout training, i.e.\ all $\min\{\din,\dout\}$ singular values of $\bW$ are strictly positive.
\end{assumption}

This assumption is well supported in modern LLM pre-training: (i) at initialization, standard Gaussian schemes produce a full-rank matrix almost surely~\citep{bai1993limit}; (ii) during training, the next-token prediction objective penalizes representational collapse, since a low-rank $\bW$ would force a degenerate output distribution and an immediate spike in the training loss. We provide empirical evidence for the persistence of this property in Figure~\ref{fig:stable_rank}.

\begin{lemma}[Spectral activation control]\label{lemma:spectral_control}
Under the input scale condition~\eqref{eq:input_scale} and Assumption~\ref{ass:full_rank}, if
\begin{equation}
    \|\bW\|_2 \;=\; \Theta\!\left(\sqrt{\tfrac{\dout}{\din}}\right) \text{ and } \sigma_{min}(\bW) = \Omega(1),
\end{equation}
then $\|\ovec(\bY)\|_{\mathrm{RMS}} = \Theta(1)$. Equivalently, the goal~\eqref{eq:desideratum} is satisfied on the spectral sphere
\begin{equation}
    \mathcal{C}_{\mathrm{spec}}(R_{\mathrm{spec}}) \;=\; \left\{\,\bW \in \cR^{\dout \times \din} \;\middle|\; \|\bW\|_2 = R_{\mathrm{spec}} = \Theta\!\left(\sqrt{\tfrac{\dout}{\din}}\right)\right\}.
\end{equation}
\end{lemma}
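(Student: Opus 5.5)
The plan is to reduce everything to vector norms through the Kronecker form \eqref{eq:kron_form} and Lemma~\ref{lemma:kronecker_norm}, and then prove the upper and lower bounds separately. Start from the normalizations
\[
\|\ovec(\bY)\|_{\mathrm{RMS}} = \frac{\|\ovec(\bY)\|_2}{\sqrt{T\dout}}, \qquad \|\ovec(\bX)\|_{\mathrm{RMS}} = \frac{\|\ovec(\bX)\|_2}{\sqrt{T\din}}.
\]
Every estimate on $\|\ovec(\bY)\|_2$ in terms of $\|\ovec(\bX)\|_2$ then turns into an RMS estimate carrying the factor $\sqrt{\din/\dout}$.

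\textbf{Upper bound.} Lemma~\ref{lemma:kronecker_norm} gives
\[
\|\ovec(\bY)\|_2 \le \|\bW\otimes \bI_T\|_2\,\|\ovec(\bX)\|_2 = \|\bW\|_2\,\|\ovec(\bX)\|_2 .
\]
Dividing by $\sqrt{T\dout}$ yields
\[
\|\ovec(\bY)\|_{\mathrm{RMS}} \le \sqrt{\din/\dout}\,\|\bW\|_2\,\|\ovec(\bX)\|_{\mathrm{RMS}}.
\]
With $\|\bW\|_2=\Theta(\sqrt{\dout/\din})$ and \eqref{eq:input_scale}, this is $\mathcal{O}(1)$, uniformly in $T$. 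This direction is exactly the worst-case statement of Proposition~\ref{prop:radius_selection}, and it is the part that is genuinely tied to the spectral sphere $\mathcal{C}_{\mathrm{spec}}(R_{\mathrm{spec}})$.

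\textbf{Lower bound.} Row by row, $\|\bY\|_F^2 = \sum_{i}\|\bW\bx_i\|_2^2$, where $\bx_i$ is the $i$-th token. Under Assumption~\ref{ass:full_rank}, in the regime $\din\le\dout$ the matrix $\bW$ has full column rank, so $\|\bW\bx_i\|_2 \ge \sigma_{\min}(\bW)\|\bx_i\|_2$ and hence $\|\ovec(\bY)\|_2\ge \sigma_{\min}(\bW)\|\ovec(\bX)\|_2$. This gives
\[
\|\ovec(\bY)\|_{\mathrm{RMS}} \ge \sqrt{\din/\dout}\;\sigma_{\min}(\bW)\,\|\ovec(\bX)\|_{\mathrm{RMS}} .
\]
Combined with $\sigma_{\min}(\bW)=\Omega(1)$, the right-hand side is $\Omega(1)$ provided the aspect ratio $\din/\dout$ is $\Theta(1)$, which is the case for the blocks of interest. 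Together with the upper bound this gives $\Theta(1)$.

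\textbf{Main obstacle.} The lower bound is where I expect the difficulty. When $\din>\dout$, $\bW$ has a nontrivial null space, so no bound of the form $\|\bW\bx\|\ge c\|\bx\|$ can hold for all $\bX$. The lemma then needs an additional non-adversarial input assumption. I would first try to borrow the covariance condition from Proposition~\ref{prop:radius_selection}, namely $\lambda_{\min}(\Sigma_{\bX})=\Omega(1)$. Writing $\|\bY\|_F^2 = T\,\mathrm{tr}(\Sigma_{\bX}\bW^\top\bW)\ge T\lambda_{\min}(\Sigma_{\bX})\|\bW\|_F^2$ and then using $\|\bW\|_F^2\ge \min\{\din,\dout\}\,\sigma_{\min}(\bW)^2$ recovers a lower bound in this regime as well. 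I would state explicitly that the aspect ratio is treated as a constant, or restrict the lower bound to $\din\le\dout$ otherwise, while the upper bound holds unconditionally.
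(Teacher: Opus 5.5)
Your upper bound is the paper's argument verbatim. Your lower bound rests on the same singular-value estimate: the paper phrases it as $\bW\otimes\bI_T$ having smallest singular value $\sigma_{\min}(\bW)$, which is equivalent to your row-by-row bound $\|\bW\bx_i\|_2\ge\sigma_{\min}(\bW)\|\bx_i\|_2$.

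Where you differ, you are more careful than the paper. The paper asserts $\|\ovec(\bY)\|_2\ge\sigma_{\min}(\bW)\|\ovec(\bX)\|_2$ with no restriction on the shape of $\bW$. It also passes to RMS without tracking the factor $\sqrt{\din/\dout}$.

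Your diagnosis of the obstacle is correct. When $\din>\dout$ (e.g.\ an FFN down-projection), any $\bX$ whose rows lie in $\ker\bW$ meets the input scale condition yet gives $\bY=0$. So the lower half of the lemma cannot hold for all admissible inputs in that regime.

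Your patch is valid. It combines $\lambda_{\min}(\Sigma_{\bX})=\Omega(1)$ with $\|\bW\|_F^2\ge\min\{\din,\dout\}\,\sigma_{\min}(\bW)^2$, which is the same hypothesis the paper adopts in Lemma~\ref{lemma:frob_control}. It must be stated as an added assumption, since it cannot be derived from the lemma's hypotheses.

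One small simplification: because $\sigma_{\min}(\bW)\le\|\bW\|_2$, the hypotheses already force $\dout/\din=\Omega(1)$. So for the $\sqrt{\din/\dout}$ factor you only need to assume the reverse bound $\dout/\din=\mathcal{O}(1)$.
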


\begin{proof}
\textbf{Upper bound.} By Lemma~\ref{lemma:kronecker_norm} and the operator-norm inequality applied to~\eqref{eq:kron_form},
\begin{equation}
    \|\ovec(\bY)\|_2 \;\leq\; \|\bW \otimes I_T\|_2\,\|\ovec(\bX)\|_2 \;=\; \|\bW\|_2\,\|\ovec(\bX)\|_2.
\end{equation}
Dividing both sides by $\sqrt{T\dout}$ and using $\|\ovec(\bX)\|_{\mathrm{RMS}} = \|\ovec(\bX)\|_2 / \sqrt{T\din}$,
\begin{equation}
    \|\ovec(\bY)\|_{\mathrm{RMS}} \;\leq\; \sqrt{\tfrac{\din}{\dout}}\,\|\bW\|_2\,\|\ovec(\bX)\|_{\mathrm{RMS}}
    \;=\; \mathcal{O}(1),
\end{equation}
where the last equality uses $\|\bW\|_2 = \Theta(\sqrt{\dout/\din})$ and~\eqref{eq:input_scale}.

\textbf{Lower bound.} Since $\bW \otimes I_T$ has the smallest singular value $\sigma_{\min}(\bW)$ (Kronecker product with the identity preserves the singular spectrum up to multiplicity), we have
\begin{equation}
    \|\ovec(\bY)\|_2 \;\geq\; \sigma_{\min}(\bW)\,\|\ovec(\bX)\|_2.
\end{equation}
Under $\sigma_{min} = \Omega(1)$ assumption, it yields $\|\ovec(\bY)\|_{\mathrm{RMS}} = \Omega(1)$. Combining the two bounds gives $\Theta(1)$.
\end{proof}

\begin{remark}\label{rem:spec_tightness}
The upper bound in Lemma~\ref{lemma:spectral_control} controls the worst-case singular-value amplification and is therefore tight even for adversarial inputs. The matching lower bound is the geometric content of Assumption~\ref{ass:full_rank}: without it, the spectral constraint only guarantees $\|\ovec(\bY)\|_{\mathrm{RMS}} = \mathcal{O}(1)$, not $\Theta(1)$.
\end{remark}

\subsubsection{Frobenius Sphere: Average-Case Activation Control}
\label{app:activation_frob}

The Frobenius sphere is a strictly weaker geometric constraint than the spectral sphere, and the worst-case argument used for Lemma~\ref{lemma:spectral_control} no longer yields a useful radius. We first show why the worst-case bound is uninformative, then derive the correct radius from an average-case analysis.

\paragraph{Why the worst-case Frobenius bound is uninformative.}
Applying the sub-multiplicativity of the Frobenius norm naively gives
\begin{equation}
    \|\bY\|_F \;\leq\; \|\bX\|_F\,\|\bW\|_F,
\end{equation}
and hence
\begin{equation}
    \|\ovec(\bY)\|_{\mathrm{RMS}} \;\leq\; \sqrt{\tfrac{\din}{\dout}}\,\|\bW\|_F\,\|\ovec(\bX)\|_{\mathrm{RMS}}.
\end{equation}
Forcing the right-hand side to $\Theta(1)$ would require $\|\bW\|_F = \Theta\!\left(\sqrt{\dout/\din}\right)$, which is the \emph{same} radius as the spectral sphere. Two observations show this is an empty conclusion. First, equality in $\|\bY\|_F^2 = \|\bX\|_F^2 \|\bW\|_F^2$ requires that every row of $\bX$ be collinear with every row of $\bW$ --- a representational collapse never observed in real LLM training. Second, at Gaussian initialization $\bW_{ij} \sim \mathcal{N}(0,\sigma^2)$, the asymptotic regime $\dout,\din \to \infty$ with fixed ratio yields, by Theorem~2 of~\citet{bai1993limit},
\begin{equation}
    \|\bW\|_2 \;\to\; \sigma\bigl(\sqrt{\dout} + \sqrt{\din}\bigr) \quad \text{a.s.}
\end{equation}
together with $\mathbb{E}\|\bW\|_F = \sigma\sqrt{\din \dout}$, so that
\begin{equation}\label{eq:fro_to_spec_ratio}
    \frac{\|\bW\|_F}{\|\bW\|_2} \;\longrightarrow\; \frac{\sqrt{\din\dout}}{\sqrt{\dout} + \sqrt{\din}} \;=\; \frac{\sqrt{\dout}}{1+\sqrt{\dout/\din}}.
\end{equation}
Setting $\|\bW\|_F = \Theta\!\left(\sqrt{\dout/\din}\right)$ would then force $\|\bW\|_2 = \Theta\!\left(1/\sqrt{\din}\right)$, suppressing the effective gain of the layer by a factor of $1/\sqrt{\dout}$ and causing severe forward attenuation. The worst-case bound is therefore vacuous for the Frobenius sphere.

\paragraph{Average-case analysis at initialization.}
At initialization the rows $x_i$ of $\bX$ are well modelled as zero-mean isotropic random vectors with $\mathbb{E}[x_i x_i^\top] = \sigma_x^2 \bI_{\din}$. A direct expansion gives
\begin{equation}
    \mathbb{E}\bigl[\|\bY\|_F^2\bigr]
    \;=\; \mathbb{E}\bigl[\Tr(\bX\bW^\top \bW \bX^\top)\bigr]
    \;=\; \Tr\!\left(\sum_{i=1}^T \mathbb{E}[x_i x_i^\top]\,\bW^\top \bW\right)
    \;=\; T\,\sigma_x^2\,\|\bW\|_F^2.
\end{equation}
Hence
\begin{equation}\label{eq:rms_init}
    \mathbb{E}\bigl[\|\ovec(\bY)\|_{\mathrm{RMS}}\bigr]
    \;=\; \sqrt{\tfrac{\mathbb{E}\|\bY\|_F^2}{T\dout}}
    \;=\; \tfrac{\sigma_x}{\sqrt{\dout}}\,\|\bW\|_F,
\end{equation}
so that securing $\Theta(1)$ output scale at initialization requires $\|\bW\|_F = \Theta(\sqrt{\dout}/\sigma_x) = \Theta(\sqrt{\dout})$ when $\sigma_x = \Theta(1)$.

\paragraph{Beyond initialization: covariance and stable rank.}
During training, optimization dynamics break isotropy and the input covariance spectrum becomes skewed. We work with the empirical covariance $\Sigma_{\bX} = \tfrac{1}{T}\bX^\top \bX \in \cR^{\din \times \din}$, which under~\eqref{eq:input_scale} satisfies
\begin{equation}\label{eq:covariancetrace}
    \Tr(\Sigma_{\bX}) \;=\; \din \cdot \Theta(1).
\end{equation}
We additionally assume that $\bW$ is well conditioned in the sense of stable rank, a quantitative refinement of Assumption~\ref{ass:full_rank} introduced by~\citet{rudelson2007sampling}.

\begin{assumption}[Stable rank of $\bW$]\label{ass:stable_rank}
There exists a constant $\kappa = \Theta(1)$ such that
\begin{equation}\label{eq:stable_rank}
    \|\bW\|_2^2 \;\leq\; \frac{\kappa}{\min\{\din,\dout\}}\,\|\bW\|_F^2.
\end{equation}
Equivalently, the stable rank $\mathrm{srk}(\bW) := \|\bW\|_F^2/\|\bW\|_2^2$ satisfies $\mathrm{srk}(\bW) \geq \min\{\din,\dout\}/\kappa$.
\end{assumption}

We verify Assumption~\ref{ass:stable_rank} empirically across attention and FFN modules of a trained LLM in Figure~\ref{fig:stable_rank} of the main body. We can now state and prove the Frobenius counterpart of Lemma~\ref{lemma:spectral_control}.

\begin{lemma}[Frobenius activation control]\label{lemma:frob_control}
Under the input scale condition~\eqref{eq:input_scale}, Assumption~\ref{ass:full_rank}, and the standard Transformer dimension scaling $\din/\min\{\din,\dout\} = \Theta(1)$, if
\begin{equation}
    \|\bW\|_F \;=\; \Theta\!\left(\sqrt{\dout}\right), \text{ and } \lambda_{min}(\Sigma_X) = \Omega(1) \text{ almost surely }
\end{equation}
then $\|\ovec(\bY)\|_{\mathrm{RMS}} = \Theta(1)$. Equivalently, the goal~\eqref{eq:desideratum} is satisfied on the Frobenius sphere
\begin{equation}
    \mathcal{C}_{\mathrm{fro}}(R_{\mathrm{fro}}) \;=\; \left\{\,\bW \in \cR^{\dout \times \din} \;\middle|\; \|\bW\|_F = R_{\mathrm{fro}} = \Theta\!\left(\sqrt{\dout}\right)\right\}.
\end{equation}
\end{lemma}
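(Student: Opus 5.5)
Write the output RMS exactly through the trace identity and sandwich it between two eigenvalue bounds. Since $\bY = \bX\bW^\top$,
\[
\|\ovec(\bY)\|_{\mathrm{RMS}}^2 = \frac{\|\bY\|_F^2}{T\dout} = \frac{1}{T\dout}\Tr(\bW\bX^\top\bX\bW^\top) = \frac{1}{\dout}\Tr(\Sigma_{\bX}\bW^\top\bW).
\]
This is the trace formulation used in the main text, here holding pointwise in $\bX$ rather than only in expectation. Both $\Sigma_{\bX}$ and $\bW^\top\bW$ are positive semidefinite, so I will use the standard two-sided trace inequality
\[
\lambda_{\min}(\Sigma_{\bX})\,\Tr(\bW^\top\bW) \le \Tr(\Sigma_{\bX}\bW^\top\bW) \le \lambda_{\max}(\bW^\top\bW)\,\Tr(\Sigma_{\bX}).
\]
It follows by writing $\Tr(AB)=\Tr(A^{1/2}BA^{1/2})$ and comparing $B$ with $\lambda_{\min}(B)\bI$ or $\lambda_{\max}(B)\bI$ in the Loewner order.

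\textbf{Lower bound.} The left inequality gives $\|\ovec(\bY)\|_{\mathrm{RMS}}^2 \ge \lambda_{\min}(\Sigma_{\bX})\|\bW\|_F^2/\dout$. With $\lambda_{\min}(\Sigma_{\bX})=\Omega(1)$ almost surely and $\|\bW\|_F^2=\Theta(\dout)$, this is $\Omega(1)$.

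\textbf{Upper bound.} The right inequality gives $\|\ovec(\bY)\|_{\mathrm{RMS}}^2 \le \|\bW\|_2^2\Tr(\Sigma_{\bX})/\dout$. By \eqref{eq:covariancetrace}, $\Tr(\Sigma_{\bX})=\din\cdot\Theta(1)$. By Assumption~\ref{ass:stable_rank}, $\|\bW\|_2^2\le \kappa\|\bW\|_F^2/\min\{\din,\dout\}=\kappa\,\Theta(\dout)/\min\{\din,\dout\}$. Combining these,
\[
\|\ovec(\bY)\|_{\mathrm{RMS}}^2 \le \kappa\cdot\Theta(1)\cdot \frac{\din}{\min\{\din,\dout\}}.
\]
This is $\mathcal{O}(1)$ by the dimension-scaling hypothesis $\din/\min\{\din,\dout\}=\Theta(1)$ and $\kappa=\Theta(1)$. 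Taking square roots of the two bounds gives $\|\ovec(\bY)\|_{\mathrm{RMS}}=\Theta(1)$, and hence also its expectation over $\bX$, as stated in Proposition~\ref{prop:radius_selection}.

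\textbf{Main obstacle.} The difficulty lies in the hypotheses rather than the algebra. The lemma lists Assumption~\ref{ass:full_rank}, but the upper bound actually needs its quantitative refinement, the stable-rank Assumption~\ref{ass:stable_rank}, so I would invoke that explicitly as the proposition does. The lower bound needs $\lambda_{\min}(\Sigma_{\bX})=\Omega(1)$, which requires $T\ge\din$ so that $\Sigma_{\bX}$ is nondegenerate. Full rank of $\bW$ alone is not enough for the lower bound; the covariance assumption carries it, since $\Tr(\bW^\top\bW)=\|\bW\|_F^2$ is fixed by the constraint.
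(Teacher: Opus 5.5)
Your proposal is correct and follows essentially the same argument as the paper. Both use the identity $\|\bY\|_F^2 = T\,\Tr(\Sigma_{\bX}\bW^\top\bW)$. Both get the upper bound from $\Tr(AB)\le\Tr(A)\|B\|_2$ together with \eqref{eq:covariancetrace}, the stable-rank condition \eqref{eq:stable_rank}, and $\din/\min\{\din,\dout\}=\Theta(1)$, and both get the lower bound from $\Tr(AB)\ge\lambda_{\min}(A)\Tr(B)$.

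You are right that the upper bound needs Assumption~\ref{ass:stable_rank} and not only Assumption~\ref{ass:full_rank}; the paper's proof also uses \eqref{eq:stable_rank} without citing it in the statement. Your pointwise treatment is also slightly cleaner than the paper's mix of expectation and pointwise steps.
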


\begin{proof}
The expected output energy can be written as
\begin{equation}
    \mathbb{E}_{\bX}\bigl[\|\bY\|_F^2\bigr] \;=\; T\,\Tr\!\bigl(\Sigma_{\bX}\,\bW^\top \bW\bigr).
\end{equation}

\textbf{Upper bound.} For positive semidefinite $A,B$, $\Tr(AB) \leq \Tr(A)\,\|B\|_2$, so
\begin{equation}
    \Tr\bigl(\Sigma_{\bX}\,\bW^\top \bW\bigr)
    \;\leq\; \Tr(\Sigma_{\bX})\,\|\bW\|_2^2
    \;\stackrel{\eqref{eq:covariancetrace},\eqref{eq:stable_rank}}{\leq}\;
    \bigl(\din\cdot \Theta(1)\bigr)\cdot\frac{\kappa}{\min\{\din,\dout\}}\,\|\bW\|_F^2
    \;=\; \Theta(1)\,\|\bW\|_F^2,
\end{equation}
where the equality uses $\din/\min\{\din,\dout\} = \max\{1,\din/\dout\} = \Theta(1)$ for standard Transformer block shapes (e.g.\ $\din = \dout$ in attention projections and $\dout = 4\din$ in FFN up-projections). Renormalizing,
\begin{equation}
    \mathbb{E}_{\bX}\!\left[\|\ovec(\bY)\|_{\mathrm{RMS}}^2\right]
    \;=\; \frac{\Tr(\Sigma_{\bX}\bW^\top \bW)}{\dout}
    \;\leq\; \mathcal{O}\!\left(\tfrac{1}{\dout}\,\|\bW\|_F^2\right),
\end{equation}
so that $\mathbb{E}_{\bX}\!\left[\|\ovec(\bY)\|_{\mathrm{RMS}}\right] = \mathcal{O}\!\left(\|\bW\|_F/\sqrt{\dout}\right)$.

\textbf{Lower bound.} For positive semidefinite $A$ and $B \succeq 0$, $\Tr(AB) \geq \lambda_{\min}(A)\,\Tr(B)$, hence
\begin{equation}
    \Tr\!\bigl(\Sigma_{\bX}\,\bW^\top \bW\bigr) \;\geq\; \lambda_{\min}(\Sigma_{\bX})\,\|\bW\|_F^2.
\end{equation}
In typical LLM pre-training the context length $T$ exceeds the hidden dimension $\din$, so $\Sigma_{\bX} = \frac{1}{T}\bX^\top \bX$ is full rank and, by~\eqref{eq:covariancetrace} together with the absence of dimensional collapse. We also assume $\sqrt{\lambda_{\min}(\Sigma_{\bX})} = \Omega(1)$ almost surely. For every relization of $\bX$ we have,
\begin{equation*}
 \|\bY\|_F^2 \geq T\lambda_{min}(\Sigma_X)\|\bW\|_F^2 \Longrightarrow  \|\bY\|_F \geq \sqrt{T\lambda_{min}(\Sigma_X)}\|\bW\|_F
\end{equation*}
Since $\Sigma_X \succeq 0$, $\lambda_{\min}(\Sigma_{\bX}) = \Omega(1)$ almost surely also gives $\sqrt{\lambda_{min}(\Sigma_X)}=\Omega(1)$ almost surely. Substituting, 
\begin{equation}
    \|\ovec(\bY)\|_{\mathrm{RMS}} \;\geq\; \Omega\!\left(\tfrac{1}{\sqrt{\dout}}\,\|\bW\|_F\right),
\end{equation}
so $\mathbb{E}_{\bX}\!\left[\|\ovec(\bY)\|_{\mathrm{RMS}}\right] = \Omega\!\left(\|\bW\|_F/\sqrt{\dout}\right)$.

Combining the two bounds gives $\mathbb{E}_{\bX}\!\left[\|\ovec(\bY)\|_{\mathrm{RMS}}\right] = \Theta\!\left(\|\bW\|_F/\sqrt{\dout}\right)$. Setting $\|\bW\|_F = \Theta(\sqrt{\dout})$ yields the goal~\eqref{eq:desideratum}.
\end{proof}

\subsubsection{Practical Radius Selection}
\label{app:practical_radius}

Combining Lemmas~\ref{lemma:spectral_control} and~\ref{lemma:frob_control} with the norm-alignment relations of Oblique Manifolds definition in Section~\ref{sec:background} yields a single tunable hyperparameter $r = \Theta(1)$ that fixes all four radii simultaneously:
\begin{equation}\label{eq:radius_choices}
    R_{\mathrm{spec}} = r\sqrt{\tfrac{\dout}{\din}},\quad
    R_{\mathrm{fro}} = r\sqrt{\dout},\quad
    R_{\mathrm{out}} = \tfrac{R_{\mathrm{fro}}}{\sqrt{\dout}} = r,\quad
    R_{\mathrm{in}} = \tfrac{R_{\mathrm{fro}}}{\sqrt{\din}} = r\sqrt{\tfrac{\dout}{\din}}.
\end{equation}
The mapping $R_{\mathrm{out}} = \|\bW\|_F/\sqrt{\dout}$ and $R_{\mathrm{in}} = \|\bW\|_F/\sqrt{\din}$ follows directly from $\|\bW\|_F^2 = \dout R_{\mathrm{out}}^2 = \din R_{\mathrm{in}}^2$ on the corresponding manifolds; these are the choices summarized in Table~\ref{tab:radius_summary}.

\paragraph{Consistency with prior initialization schemes.}
The radii in~\eqref{eq:radius_choices} agree with the initialization conventions of \citet{yang2023spectral} and \citet{wen2025hyperball}. If the model is initialized as $\bW_0 \sim \mathcal{N}(0,\sigma^2)$ with $\sigma = \min\!\left\{\sqrt{\dout/\din},\,1\right\}/\sqrt{\din}$, then
\begin{equation}\label{eq:init_norm}
    \mathbb{E}\!\left[\|\bW_0\|_F\right]
    \;\approx\; \sqrt{\dout\din\,\mathrm{Var}(\bW_{ij})}
    \;=\; \min\!\left\{\sqrt{\tfrac{\dout}{\din}},\,1\right\}\sqrt{\dout},
\end{equation}
which matches our Frobenius radius $R_{\mathrm{fro}} = r\sqrt{\dout}$ for $r = \min\{\sqrt{\dout/\din},1\} = \Theta(1)$. Likewise, the data-dependent strategy of~\citet{wen2025hyperball} fixes the radius to the realized energy of an $\mathcal{N}(0, 1/\dout)$ initialization, which corresponds to $r = 1$ in~\eqref{eq:radius_choices}.

\subsection{Rotational Equilibrium under Frobenius Sphere in Section \ref{sec:effects_on_wd}}
\label{app:fro-rotation}

This subsection supplies a formal derivation of the Frobenius rotation identity stated in the main body (Section~\ref{sec:effects_on_wd}, paragraph on static rotational equilibrium under the Frobenius sphere) that does \emph{not} rely on the heuristic Frobenius-norm-preservation argument used there. The argument is parametrized by the geometric \emph{tangency cosine}
\begin{equation}\label{eq:alpha_t_def_app}
    \alpha_t \;:=\; \frac{\langle \bW_t,\,O_t\rangle_F}{R\,\|O_t\|_F} \;=\; \cos\angle(\bW_t,\,O_t)\;\in\;[-1,\,1],
\end{equation}
which is independent of $\eta_t$ because $O_t=\operatorname{msign}(\Phi_t)$ depends on the gradient geometry alone.

\paragraph{Setup.} On the Frobenius sphere $\mathcal{M}_F(R)=\{\bW:\|\bW\|_F=R\}$, the projection of any non-zero matrix $A$ is $\mathcal{P}_{\mathcal{M}_F}(A)=R\,A/\|A\|_F$. With the normalized update direction $\widetilde{\nabla}_t=(cR/\|O_t\|_F)\,O_t$ used by \ourname{},
\begin{equation}\label{eq:fro_iterate_app}
    \bW_{t+1} \;=\; \mathcal{P}_{\mathcal{M}_F}\bigl(\bW_t - \eta_t\widetilde{\nabla}_t\bigr) \;=\; \frac{R}{\|\bW_t-\eta_t\widetilde{\nabla}_t\|_F}\bigl(\bW_t-\eta_t\widetilde{\nabla}_t\bigr),
\end{equation}
and $\|\bW_t\|_F=\|\bW_{t+1}\|_F=R$ by construction.

\paragraph{Exact projected angle.} Direct computation yields
\begin{align}
    \langle \bW_t-\eta_t\widetilde{\nabla}_t,\,\bW_t\rangle_F &\;=\; R^2 - \eta_t c R^2\,\alpha_t \;=\; R^2\bigl(1 - \eta_t c\,\alpha_t\bigr),\label{eq:num_exact_fro}\\
    \|\bW_t-\eta_t\widetilde{\nabla}_t\|_F^2 &\;=\; R^2 - 2\eta_t c R^2\,\alpha_t + \eta_t^2 c^2 R^2 \;=\; R^2\bigl(1 - 2\eta_t c\,\alpha_t + \eta_t^2 c^2\bigr).\label{eq:den_exact_fro}
\end{align}
Combining~\eqref{eq:fro_iterate_app}--\eqref{eq:den_exact_fro},
\begin{equation}\label{eq:exact_costheta_fro}
    \cos\theta_t \;=\; \frac{\langle \bW_{t+1},\bW_t\rangle_F}{\|\bW_{t+1}\|_F\,\|\bW_t\|_F} \;=\; \frac{1-\eta_t c\,\alpha_t}{\sqrt{1-2\eta_t c\,\alpha_t+\eta_t^2 c^2}}.
\end{equation}
Equation~\eqref{eq:exact_costheta_fro} is exact in $\eta_t$, $c$, and $\alpha_t$. No norm-preservation heuristic has been used; in particular, the inner product $\langle \bW_t,O_t\rangle_F$ enters only through the $\eta_t$-independent quantity $\alpha_t$.

\paragraph{Taylor expansion.} Expanding the denominator of~\eqref{eq:exact_costheta_fro} via the binomial series $(1+x)^{-1/2}=1-\tfrac{x}{2}+\tfrac{3x^2}{8}+O(x^3)$ with $x=-2\eta_t c\,\alpha_t+\eta_t^2 c^2$ and retaining terms through $O(\eta_t^2)$,
\begin{equation*}
    (1-2\eta_t c\,\alpha_t+\eta_t^2c^2)^{-1/2} \;=\; 1 + \eta_t c\,\alpha_t - \tfrac{1}{2}\eta_t^2 c^2 + \tfrac{3}{2}\eta_t^2 c^2\,\alpha_t^2 + O(\eta_t^3).
\end{equation*}
Multiplying by the numerator $1-\eta_t c\,\alpha_t$, the $O(\eta_t)$ terms cancel and we obtain
\begin{equation}\label{eq:cos_theta_taylor}
    \cos\theta_t \;=\; 1 \;-\; \tfrac{1}{2}\,\eta_t^2\,c^2\,\bigl(1 - \alpha_t^2\bigr) + O(\eta_t^3).
\end{equation}
Inverting via $\arccos(1-z)\approx\sqrt{2z}$ for small $z$ yields the rigorous Frobenius rotation identity
\begin{equation}\label{eq:fro_rotation_rigorous}
    \boxed{\;\theta_t \;\approx\; \eta_t\,c\,\sqrt{1-\alpha_t^2} \;=\; \eta_t\,c\,\sin\angle(\bW_t,\,O_t).\;}
\end{equation}

Across our experiments the tangent-space residual for $\mathcal{M}_F$ $\mathrm{vio}(\bW_t):=|\langle \bW_t, \widetilde{\nabla}_t\rangle|$ stabilizes at $10^{-2}$--$10^{-3}$ for every linear-layer type throughout training; see Figure~\ref{fig:section5_violation} (Frobenius panel, right). Since $|\alpha_t|=\mathrm{vio}(W_t) / cR^2$, the same regime gives $|\alpha_t|\approx 0$, so
\begin{equation*}
    \sqrt{1-\alpha_t^2} \;=\; 1 - \tfrac{1}{2}\alpha_t^2 + O(\alpha_t^4) \;\approx\; 1,
\end{equation*}
and~\eqref{eq:fro_rotation_rigorous} reduces to
\begin{equation}\label{eq:fro_rotation_simple_app}
    \theta_t \;\approx\; \eta_t\,c, 
\end{equation}
which is the identity used in the main body and confirmed empirically by the flat curves of Figure~\ref{fig:frob_dynamics}.

\paragraph{Contrast with the spectral case.} The same setup under the spectral norm requires Wedin's $\sin\Theta$ theorem and produces an \emph{adaptive} rotation angle modulated by the training-dependent spectral gap; see Appendix~\ref{app:spec-rotation}. The contrast is geometric: the Frobenius-side rigorous identity~\eqref{eq:fro_rotation_rigorous} depends only on hyperparameters $(\eta_t,c)$ and a small empirically-measured residual $\alpha_t$, whereas the spectral-side bound additionally depends on the iterate-dependent quantity $R-\sigma_2(\bW_{t+1})$, producing the adaptive, anisotropic dynamics visualized in Figure~\ref{fig:spec_rotation}.

\subsection{Rotational Equilibrium under Spectral Sphere in Section \ref{sec:effects_on_wd}}
\label{app:spec-rotation}

This section supplies the formal proof of the spectral rotation bound stated in the main body (Section~\ref{sec:effects_on_wd}, paragraph ``Adaptive Rotational Equilibrium under Spectral Sphere''). Concretely, we show that under the normalized update rule of \ourname{}, the leading left and right singular subspaces of $\bW_t$ rotate by an angle whose magnitude is governed by the local spectral gap of $\bW_t$. The argument relies on (i) the standard equivalence between subspace distance and principal angles (Appendix~\ref{app:subspace_distance}), and (ii) the Wedin $\sin\Theta$ theorem for singular subspaces (Appendix~\ref{app:wedin}). The two ingredients are then combined in Appendix~\ref{app:spec_rotation_proof}.

\subsubsection{Setup: Normalized Update on the Spectral Sphere}
\label{app:spec_rotation_setup}

We start from the update step without projection step such as
\begin{equation*}\label{eq:normalized_update_app}
    \widetilde{\bW}_{t+1} \;=\; \bW_t \;-\; \eta_t\,\frac{c\,R\,O_t}{\|O_t\|_{\mathcal{M}}},\text{ and } \bW_{t+1} = \mathcal{P}_{\mathcal{M_S}}(\widetilde{\bW}_{t+1})
\end{equation*}
where $O_t$ is the (un-projected) update direction returned by the optimizer, $R$ is the constraint radius, $c = \Theta(1)$ is the relative-update hyperparameter, and $\|\cdot\|_{\mathcal{M}}$ denotes the manifold-specific norm. For the spectral sphere $\mathcal{M} = \mathcal{M}_{\mathrm{spec}}(R)$ we set $\|\cdot\|_{\mathcal{M}} = \|\cdot\|_2$, and we denote the spectral perturbation by
\begin{equation}\label{eq:spec_perturbation}
    \mathbf{E}_t \;:=\; \widetilde{\bW}_{t+1} - \bW_t \;=\; -\,\eta_t\,\frac{c\,R\,O_t}{\|O_t\|_2}, \qquad \|\mathbf{E}_t\|_2 \;=\; \eta_t\,c\,R.
\end{equation}
Below, Wedin's theorem is applied to the pair $(\bW_t, \widetilde{\bW}_{t+1})$, and then we transfer the resulting principal-angle bound from $(\bW_t, \widetilde{\bW}_{t+1})$ to $(\bW_t, \bW_{t+1})$.

\begin{remark}[Update activation control]\label{rem:update_activation}
The same normalized update~\eqref{eq:normalized_update_app}, combined with the input scale assumption $\|\ovec(\bX)\|_{\mathrm{RMS}} = \Theta(1)$, also secures the second $\mu$P desideratum (Figure \ref{fig:muP}) on the activation update,
\begin{equation}
    \|\ovec(\Delta \bY_t)\|_{\mathrm{RMS}} \;=\; \eta_t\,\Bigl\|\ovec\!\Bigl(\tfrac{c\,R\,O_t}{\|O_t\|_2}\,\bX\Bigr)\Bigr\|_{\mathrm{RMS}} \;=\; \Theta(1),
\end{equation}
since $cR\eta_t = \Theta(1)$ and the bracketed factor obeys exactly the bound established for $\bY_t$ in Lemma~\ref{lemma:spectral_control}. This is the appendix counterpart of the forward-pass analysis in Appendix~\ref{app:activation_proofs}.
\end{remark}

\subsubsection{Subspace Distance and Principal Angles}
\label{app:subspace_distance}

The Frobenius rotation angle of the main body is well defined because $\|\cdot\|_F$ is induced by an inner product. The spectral norm is not, and the natural object of study is instead the rotation of \emph{singular subspaces}, measured through principal angles.

\begin{definition}[Subspace distance and principal angles]\label{def:subspace_effectivelr}
Let $X, Z \in \mathbb{R}^{m \times r}$ have orthonormal columns ($X^\top X = Z^\top Z = \bI_r$). The \emph{subspace distance} between $\mathrm{range}(X)$ and $\mathrm{range}(Z)$ is
\begin{equation}\label{eq:subspace_distance}
    \operatorname{dist}(X, Z) \;:=\; \bigl\|XX^\top - ZZ^\top\bigr\|_2.
\end{equation}
Let $X^\top Z = U\,\Sigma\,V^\top$ be the singular value decomposition of $X^\top Z$, and write $\Sigma = \diag(\sigma_1,\dots,\sigma_r)$ with $1 \geq \sigma_1 \geq \dots \geq \sigma_r \geq 0$. The \emph{principal angles} $\theta_1,\dots,\theta_r \in [0,\pi/2]$ between $\mathrm{range}(X)$ and $\mathrm{range}(Z)$ are defined by $\cos\theta_i = \sigma_i$.
\end{definition}

\begin{lemma}[Subspace distance and the largest principal angle]\label{lemma:subspace_effectivelr}
For $X, Z \in \mathbb{R}^{m \times r}$ with orthonormal columns,
\begin{equation}\label{eq:subspace_effectivelr}
    \operatorname{dist}(X, Z) \;=\; \max_{1 \leq i \leq r}\,\sin\theta_i \;=\; \sin\theta_{\max}.
\end{equation}
\end{lemma}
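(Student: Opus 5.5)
The plan is to compute $\|XX^\top - ZZ^\top\|_2$ explicitly in a basis adapted to the principal angles, reducing the problem to $2\times 2$ blocks. First I would use the SVD $X^\top Z = U\Sigma V^\top$ and replace $X$ by $\tilde X = XU$ and $Z$ by $\tilde Z = ZV$. Neither projector changes, since $\tilde X\tilde X^\top = XUU^\top X^\top = XX^\top$, and likewise for $Z$. After this change, $\tilde X^\top \tilde Z = \Sigma = \diag(\cos\theta_1,\dots,\cos\theta_r)$. Writing $x_i,z_i$ for the columns, we get $\langle x_i, z_j\rangle = \delta_{ij}\cos\theta_i$.

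Next I would build the canonical decomposition. For each $i$ with $\theta_i>0$, set $w_i := (z_i - \cos\theta_i\, x_i)/\sin\theta_i$, so that $z_i = \cos\theta_i\, x_i + \sin\theta_i\, w_i$. A short check shows the $w_i$ are orthonormal and orthogonal to every $x_j$. Orthogonality to $x_j$ is immediate from the inner products above. For orthonormality, I would use $\langle z_i,z_j\rangle=\delta_{ij}$: expanding $\langle w_i, w_j\rangle$ leaves $(\delta_{ij}-\delta_{ij}\cos^2\theta_i)/\sin^2\theta_i=\delta_{ij}$. For $\theta_i=0$ we simply have $z_i=x_i$.

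With this, $\mathbb{R}^m$ splits into the mutually orthogonal planes $\mathrm{span}\{x_i,w_i\}$ (or lines $\mathrm{span}\{x_i\}$ when $\theta_i=0$), plus their orthogonal complement. The matrix $XX^\top - ZZ^\top = \sum_i (x_ix_i^\top - z_iz_i^\top)$ is block diagonal in this decomposition and vanishes on the complement. On the $i$-th plane, in the basis $(x_i,w_i)$, the block is
\[
\begin{pmatrix} 1-\cos^2\theta_i & -\cos\theta_i\sin\theta_i \\ -\cos\theta_i\sin\theta_i & -\sin^2\theta_i\end{pmatrix}
= \sin\theta_i\begin{pmatrix} \sin\theta_i & -\cos\theta_i \\ -\cos\theta_i & -\sin\theta_i\end{pmatrix}.
\]
The last matrix is a symmetric orthogonal reflection, so its eigenvalues are $\pm1$ and the block has eigenvalues $\pm\sin\theta_i$.

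Finally, the spectral norm of a block-diagonal symmetric matrix is the largest absolute eigenvalue over all blocks, which gives $\operatorname{dist}(X,Z)=\max_i\sin\theta_i=\sin\theta_{\max}$. Since $\sin$ is increasing on $[0,\pi/2]$ and the $\theta_i$ are ordered, this maximum is attained at $\theta_{\max}=\theta_r$. The main obstacle I expect is the bookkeeping in the second step: verifying that the $w_i$ are orthonormal and orthogonal to all the $x_j$, and handling the degenerate case $\theta_i = 0$ so that the block decomposition is valid. Once that is settled, the rest is direct computation.
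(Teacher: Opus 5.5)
Your proof is correct, and it takes a different route from the paper's. The paper completes $X$ and $Z$ to orthogonal matrices $[X\ X_\perp]$, $[Z\ Z_\perp]$ and writes $XX^\top - ZZ^\top = X(X^\top Z_\perp)Z_\perp^\top - X_\perp(X_\perp^\top Z)Z^\top$. The two terms have orthogonal column spaces and orthogonal row spaces, so the norm equals $\max\{\|X^\top Z_\perp\|_2,\ \|X_\perp^\top Z\|_2\}$. The paper then cites Stewart--Sun for the fact that the singular values of $X^\top Z_\perp$ and of $X_\perp^\top Z$ are the $\sin\theta_i$.

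You never introduce orthogonal complements. Instead you rotate to principal-vector bases via the SVD of $X^\top Z$ and build the canonical two-subspace decomposition by hand. This makes $XX^\top - ZZ^\top$ block diagonal, each $2\times 2$ block being $\sin\theta_i$ times a reflection.

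Your argument is fully self-contained: it essentially constructs the CS decomposition for this special case. It also exhibits the eigenvalues $\pm\sin\theta_i$ of the projector difference explicitly and handles zero angles cleanly. The paper's argument is shorter and basis-free, but it hands its key step to a citation. That step actually follows in one line from $X^\top Z_\perp Z_\perp^\top X = I_r - (X^\top Z)(X^\top Z)^\top$, whose eigenvalues are $1-\cos^2\theta_i = \sin^2\theta_i$.

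Both routes give the full singular spectrum of $XX^\top - ZZ^\top$, not just its norm: each nonzero $\sin\theta_i$ appears twice.
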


\begin{proof}
This is~\citet[Theorem~I.5.5]{stewart1990matrix}. We give the short argument for completeness. Complete $X$ and $Z$ to orthogonal matrices $[X\ X_\perp]$ and $[Z\ Z_\perp]$ in $\mathbb{R}^{m\times m}$, so that
\begin{equation}
    XX^\top - ZZ^\top \;=\; X X^\top (I - ZZ^\top) - (I - XX^\top) ZZ^\top \;=\; X(X^\top Z_\perp) Z_\perp^\top \;-\; X_\perp (X_\perp^\top Z) Z^\top.
\end{equation}
The two terms have orthogonal column spaces and orthogonal row spaces, so
\begin{equation}
    \|XX^\top - ZZ^\top\|_2 \;=\; \max\bigl\{\|X^\top Z_\perp\|_2,\ \|X_\perp^\top Z\|_2\bigr\}.
\end{equation}
Each of these two operator norms equals $\sin\theta_{\max}$: indeed, the singular values of $X^\top Z$ are $\cos\theta_1,\dots,\cos\theta_r$, hence those of $X^\top Z_\perp$ are $\sin\theta_1,\dots,\sin\theta_r$ (e.g.,~\citealp[Thm.~I.5.2]{stewart1990matrix}), and likewise for $X_\perp^\top Z$. The maximum singular value is therefore $\sin\theta_{\max}$, which proves~\eqref{eq:subspace_effectivelr}.
\end{proof}

\subsubsection{Wedin's \texorpdfstring{$\sin\Theta$}{sin Theta} Theorem}
\label{app:wedin}

We use the singular-vector form of the Davis--Kahan / Wedin perturbation theorem.

\begin{theorem}[Wedin $\sin\Theta$ theorem; \citealp{wedin1972perturbation}, see also \citealp{davis1970rotation,stewart1990matrix}]\label{thm:wedin}
Let $A, \tilde A \in \mathbb{R}^{m\times n}$ with $\tilde A = A + E$, and let
\begin{equation}
    A = [U_0\ U_1]\begin{bmatrix}\Sigma_0 & \\ & \Sigma_1\end{bmatrix}\begin{bmatrix}V_0^\top \\ V_1^\top\end{bmatrix},
    \qquad
    \tilde A = [\tilde U_0\ \tilde U_1]\begin{bmatrix}\tilde\Sigma_0 & \\ & \tilde\Sigma_1\end{bmatrix}\begin{bmatrix}\tilde V_0^\top \\ \tilde V_1^\top\end{bmatrix}
\end{equation}
be partitioned SVDs in which $\Sigma_0, \tilde\Sigma_0$ collect the top $r$ singular values and $U_0,V_0,\tilde U_0,\tilde V_0$ have $r$ orthonormal columns. Define the spectral gap
\begin{equation}\label{eq:wedin_gap}
    \Delta \;:=\; \min_{i \leq r,\,j > r}\,\bigl|\sigma_i(A) - \sigma_j(\tilde A)\bigr|.
\end{equation}
If $\Delta > 0$, then
\begin{equation}\label{eq:wedin_bound}
    \max\bigl\{\operatorname{dist}(\tilde U_0, U_0),\ \operatorname{dist}(\tilde V_0, V_0)\bigr\}
    \;\leq\; \frac{\max\bigl\{\|E V_0\|_2,\ \|E^\top U_0\|_2\bigr\}}{\Delta}
    \;\leq\; \frac{\|E\|_2}{\Delta}.
\end{equation}
\end{theorem}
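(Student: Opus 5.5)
The plan is the classical residual argument of Wedin, run with the roles of $A$ and $\tilde A$ chosen so that the gap $\Delta$ in \eqref{eq:wedin_gap} appears exactly as stated. Then Lemma~\ref{lemma:subspace_effectivelr} converts the resulting bounds into subspace distances.

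\textbf{Residual equations.} First I would complete the SVD of $\tilde A$ to full square orthogonal factors $[\tilde U_0\ \tilde U_1]\in\mathbb{R}^{m\times m}$ and $[\tilde V_0\ \tilde V_1]\in\mathbb{R}^{n\times n}$. Zero singular values are padded into $\tilde\Sigma_1$, which is then rectangular and satisfies $\tilde U_1^\top\tilde A=\tilde\Sigma_1\tilde V_1^\top$ and $\tilde A\tilde V_1=\tilde U_1\tilde\Sigma_1$.

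From $AV_0=U_0\Sigma_0$ and $A^\top U_0=V_0\Sigma_0$, substituting $A=\tilde A-E$ gives
\[
\tilde A V_0-U_0\Sigma_0=EV_0,\qquad \tilde A^\top U_0-V_0\Sigma_0=E^\top U_0 .
\]
Multiplying the first identity on the left by $\tilde U_1^\top$ and the second by $\tilde V_1^\top$ isolates the blocks $X:=\tilde U_1^\top U_0$ and $Y:=\tilde V_1^\top V_0$:
\[
\tilde\Sigma_1 Y - X\Sigma_0=\tilde U_1^\top E V_0,\qquad \tilde\Sigma_1^\top X - Y\Sigma_0=\tilde V_1^\top E^\top U_0 .
\]

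\textbf{Decoupling the system.} Set $a=\|EV_0\|_2$, $b=\|E^\top U_0\|_2$ and $\alpha=\|\tilde\Sigma_1\|_2=\sigma_{r+1}(\tilde A)$. Since $\Sigma_0$ is diagonal and positive, $\|X\Sigma_0\|_2\ge\sigma_r(A)\|X\|_2$, and similarly for $Y$. The two equations therefore give
\[
\sigma_r(A)\|X\|_2\le a+\alpha\|Y\|_2,\qquad \sigma_r(A)\|Y\|_2\le b+\alpha\|X\|_2 .
\]
Let $\mu=\max\{\|X\|_2,\|Y\|_2\}$ and take the larger of the two inequalities. This yields $\sigma_r(A)\,\mu\le\max\{a,b\}+\alpha\mu$, so
\[
\mu\le\frac{\max\{a,b\}}{\sigma_r(A)-\sigma_{r+1}(\tilde A)}.
\]
With the separation $\sigma_r(A)-\sigma_{r+1}(\tilde A)\ge\Delta$ this is the first inequality in \eqref{eq:wedin_bound}. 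The second inequality is immediate, because $\|EV_0\|_2\le\|E\|_2$ and $\|E^\top U_0\|_2\le\|E\|_2$ (both $U_0$ and $V_0$ have orthonormal columns).

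\textbf{Converting to subspace distances.} It remains to identify $\|X\|_2=\|\tilde U_1^\top U_0\|_2$ with $\operatorname{dist}(\tilde U_0,U_0)$, and likewise for $Y$. The proof of Lemma~\ref{lemma:subspace_effectivelr} shows that, for two $r$-dimensional subspaces, $\|X_\perp^\top Z\|_2=\sin\theta_{\max}=\operatorname{dist}(X,Z)$. Applying this with $X_\perp=\tilde U_1$ and $Z=U_0$ handles the left subspaces, and the right subspaces are identical.

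\textbf{Expected obstacle.} The main subtlety is the gap. The statement uses the absolute value $\min_{i\le r,\,j>r}|\sigma_i(A)-\sigma_j(\tilde A)|$. The argument above, however, needs the one-sided separation $\sigma_r(A)\ge\sigma_{r+1}(\tilde A)+\Delta$, i.e.\ every top singular value of $A$ lies above every bottom singular value of $\tilde A$.

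If instead some top value of $A$ lay below a bottom value of $\tilde A$, the scalar elimination would fail. One would then have to split $\tilde\Sigma_1$ into the parts above and below $\Sigma_0$ and treat them separately, as in Wedin's general theorem.

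I would first check that the one-sided condition holds in the application we care about ($r=1$, $\sigma_1(A)=R$, $\Delta_1=R-\sigma_2$). If so, I would state the theorem with that condition, or equivalently argue that under $\|E\|_2<\Delta$ Weyl's inequality forces the ordering. Failing that, I would invoke the general Wedin theorem from \citet{wedin1972perturbation} directly.
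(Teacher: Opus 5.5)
The paper does not prove this theorem; it only quotes it from Wedin (1972), so your argument has to stand on its own. Most of it is correct: the residual equations, the elimination giving $\mu\le\max\{a,b\}/(\sigma_r(A)-\sigma_{r+1}(\tilde A))$, and the identification of $\|\tilde U_1^\top U_0\|_2$ with $\operatorname{dist}(\tilde U_0,U_0)$. Together they form the classical one-sided proof.

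The obstacle you flag is real for the statement as written, and none of your fallbacks closes it. Adding a one-sided hypothesis changes the theorem. The Weyl route is \emph{not} equivalent. It gives $\sigma_{r+1}(\tilde A)\le\sigma_r(A)-\Delta$ only when $\|E\|_2<\Delta$. That covers the outer bound $\|E\|_2/\Delta$, which is trivial otherwise, but not the middle one, because $\rho:=\max\{\|EV_0\|_2,\|E^\top U_0\|_2\}$ can be much smaller than $\|E\|_2$. For example, take $r=1$, $A=\operatorname{diag}(10,0,0)$ and $E=\operatorname{diag}(0,6,0)$. Then $\rho=0<\Delta=4<\|E\|_2=6$, so Weyl cannot certify the ordering exactly where the middle bound is informative. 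Finally, the spectral-norm form of Wedin's theorem in the literature itself assumes a one-sided separation, so citing it does not settle the absolute-value gap either.

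The missing observation is that a small residual forces the ordering by itself. If $\rho\ge\Delta$ there is nothing to prove, since both distances are at most $1$. If $\rho<\Delta$, let $u_r,v_r$ be the $r$-th columns of $U_0,V_0$ and use the symmetric dilation
\[
J=\begin{bmatrix}0&\tilde A\\ \tilde A^\top&0\end{bmatrix},\qquad w=\tfrac{1}{\sqrt2}\begin{bmatrix}u_r\\ v_r\end{bmatrix},\qquad Jw-\sigma_r(A)\,w=\tfrac{1}{\sqrt2}\begin{bmatrix}Ev_r\\ E^\top u_r\end{bmatrix}.
\]
Here $w$ is a unit vector and $\|Jw-\sigma_r(A)w\|_2\le\rho$. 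Since $J$ is symmetric, it has an eigenvalue $\lambda$ with $|\lambda-\sigma_r(A)|\le\rho<\Delta$. The eigenvalues of $J$ are $\pm\sigma_j(\tilde A)$ together with $|m-n|$ zeros.

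\begin{itemize}
\item The gap rules out $\lambda=\pm\sigma_j(\tilde A)$ for $j>r$, since $|\sigma_r(A)\mp\sigma_j(\tilde A)|\ge|\sigma_r(A)-\sigma_j(\tilde A)|\ge\Delta$.
\item It also rules out $\lambda\le 0$. That would give $\sigma_r(A)\le\rho<\Delta$, and then the gap forces every singular value of $\tilde A$ to be at least $\sigma_r(A)+\Delta$. Since $\tilde A$ or $\tilde A^\top$ is tall, $\max\{\|\tilde Av_r\|_2,\|\tilde A^\top u_r\|_2\}\ge\sigma_r(A)+\Delta$, hence $\rho\ge\Delta$, a contradiction.
\end{itemize}

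So $\lambda=\sigma_k(\tilde A)$ for some $k\le r$. Then $\sigma_{r+1}(\tilde A)\le\sigma_k(\tilde A)\le\sigma_r(A)+\rho<\sigma_r(A)+\Delta$, and the gap upgrades this to $\sigma_{r+1}(\tilde A)\le\sigma_r(A)-\Delta$. That is exactly the separation your elimination needs. With this step inserted first, your proof covers the statement as written.
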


We will apply Theorem~\ref{thm:wedin} to $A = \bW_t$, $\tilde A = \bW_{t+1}$, and the perturbation $E = \mathbf{E}_t$ from~\eqref{eq:spec_perturbation}.

\subsubsection{Proof of the Spectral Rotational Equilibrium Bound}
\label{app:spec_rotation_proof}

We now combine Lemma~\ref{lemma:subspace_effectivelr} and Theorem~\ref{thm:wedin} to bound the principal rotation angles between consecutive iterates on the spectral sphere. Let
\begin{equation}
    \bW_t \;=\; [\bU^0_t\ \bU^1_t]\begin{bmatrix}\Sigma^0_t & \\ & \Sigma^1_t\end{bmatrix}\begin{bmatrix}{\bV^0_t}^\top \\ {\bV^1_t}^\top\end{bmatrix},
    \qquad
    \bW_{t+1} \;=\; [\bU^0_{t+1}\ \bU^1_{t+1}]\begin{bmatrix}\Sigma^0_{t+1} & \\ & \Sigma^1_{t+1}\end{bmatrix}\begin{bmatrix}{\bV^0_{t+1}}^\top \\ {\bV^1_{t+1}}^\top\end{bmatrix}
\end{equation}
be partitioned SVDs in which $\bU^0_\cdot, \bV^0_\cdot \in \mathbb{R}^{\dout\times r}, \mathbb{R}^{\din\times r}$ span the top-$r$ left and right singular subspaces, respectively. Set $\mathbf{E}_t$ as in~\eqref{eq:spec_perturbation}, so that $\|\mathbf{E}_t\|_2 = \eta_t c R$.

\paragraph{General-rank bound.}
Theorem~\ref{thm:wedin} applied to $A = \bW_t$, $\tilde A = \bW_{t+1}$, $E = \mathbf{E}_t$, with the spectral gap
\begin{equation}\label{eq:gap_general_r}
    \Delta_r \;=\; \sigma_r(\bW_t) - \sigma_{r+1}(\bW_{t+1})
\end{equation}
(assumed strictly positive), yields
\begin{equation}\label{eq:wedin_applied}
    \max\bigl\{\operatorname{dist}(\bU^0_{t+1}, \bU^0_t),\ \operatorname{dist}(\bV^0_{t+1}, \bV^0_t)\bigr\}
    \;\leq\; \frac{\|\mathbf{E}_t\|_2}{\Delta_r}
    \;=\; \frac{\eta_t\,c\,R}{\Delta_r}.
\end{equation}
By Lemma~\ref{lemma:subspace_effectivelr}, the left-hand side equals the maximum sine of the principal angles, so
\begin{equation}\label{eq:sin_max_bound}
    \max\bigl\{\sin\theta^u_{\max},\ \sin\theta^v_{\max}\bigr\} \;\leq\; \frac{\eta_t\,c\,R}{\Delta_r},
\end{equation}
where $\theta^u_{\max}$ (resp.\ $\theta^v_{\max}$) is the largest principal angle between the top-$r$ left (resp.\ right) singular subspaces of $\bW_t$ and $\bW_{t+1}$.

\paragraph{Specialization to the leading direction.}
The constraint $\|\bW_t\|_2 = R$ is active precisely on the leading singular value, so the strongest geometric restriction is on the top singular pair $(\bu_t, \bv_t)$. Setting $r = 1$ and using $\sigma_1(\bW_t) = \sigma_1(\bW_{t+1}) = R$, the spectral gap~\eqref{eq:gap_general_r} becomes
\begin{equation}\label{eq:gap_r1}
    \Delta_1 \;=\; R - \sigma_2(\bW_{t+1}),
\end{equation}
and~\eqref{eq:sin_max_bound} reduces to
\begin{equation}\label{eq:1stprinciple_angle}
    \max\bigl\{\sin\theta_u,\ \sin\theta_v\bigr\} \;\leq\; \frac{\eta_t\,c\,R}{R - \sigma_2(\bW_{t+1})},
\end{equation}
where $\theta_u = \arccos|\bu_{t+1}^\top \bu_t|$ and $\theta_v = \arccos|\bv_{t+1}^\top \bv_t|$ are the angles between the leading singular vectors of consecutive iterates.

\paragraph{Spectral effective learning rate.}
Defining the spectral effective learning rate along the primary direction as $\eta^{\mathrm{spec}}_{\mathrm{eff},\,t} := \max\{\theta_u, \theta_v\}$, the small-angle approximation $\sin\theta \approx \theta$ (valid in typical training where $\eta_t c R \ll \Delta_1$) converts~\eqref{eq:1stprinciple_angle} into
\begin{equation}\label{eq:spec_eff_lr}
    \boxed{\;\eta^{\mathrm{spec}}_{\mathrm{eff},\,t} \;\lesssim\; \eta_t\,c\,\biggl(\frac{R}{R - \sigma_2(\bW_{t+1})}\biggr).\;}
\end{equation}
The bound~\eqref{eq:spec_eff_lr} is derived by applying Wedin's theorem to the pair $(\bW_t, \widetilde{\bW}_{t+1})$ rather than to $(\bW_t, \bW_{t+1})$. However, the projected iterate $\bW_{t+1} = \mathcal{P}_{\mathcal{M}_{S}(R)}(\widetilde{\bW}_{t+1})$ shares the singular subspaces of $\widetilde{\bW}_{t+1}$ and preserves $\sigma_j$ for every $j \geq 2$. Both the principal angles $\theta_u, \theta_v$ and the spectral gap $R - \sigma_2(\bW_{t+1})$ are therefore identical when $\widetilde{\bW}_{t+1}$ is replaced by $\bW_{t+1}$. The bound~\eqref{eq:spec_eff_lr} consequently applies to the actual iterate produced by Algorithm~\ref{alg:macro}, with no additional assumption.

This completes the proof of the spectral rotation bound stated in the main body. The contrast with the Frobenius case $\eta^{\mathrm{frob}}_{\mathrm{eff},\,t} \approx \eta_t c$ is geometric: under the Frobenius constraint the rotation angle is a function of hyperparameters alone, whereas under the spectral constraint it is modulated by the training-dependent spectral gap $R - \sigma_2(\bW_{t+1})$, producing the adaptive, anisotropic dynamics empirically visualized in Figure~\ref{fig:spec_rotation}.

\section{Numerical Experiments Supplementary Materials}
\label{app:Numerical setting}
This appendix collects the implementation details for the experiments in Sections~\ref{sec:rmsnorm_interplay}, \ref{sec:effects_on_wd}, and~\ref{sec:justification}. Across all experiments, the 1D parameters and the embedding layer are trained with AdamW under a fixed configuration: learning rate $5\times 10^{-3}$, $\beta = (0.9, 0.95)$, and $\epsilon = 10^{-8}$. To ensure a fair comparison, we fix the random seed across experiments. All experiments were conducted using 8$\times$ NVIDIA H200 GPUs. The remaining parameters are trained with the optimizer specified in each experiment. We do not include wall-clock time comparison following the experiments setting in existing works. For example, SSO is implemented in Megatron which could accelarate its wall-clock time performance. 

\subsection{Numerical Experiments Details in Section \ref{sec:rmsnorm_interplay}}
\label{app:norm-free training}

\paragraph{Experimental Setting.} We test whether \ourname{} can train a 330M-parameter Transformer to convergence with all learnable RMSNorm layers removed. The model is a QWEN3-like architecture with RoPE, GQA, and Norm-Gated SwiGLU, augmented by the parameter-free normalizations introduced in Section~\ref{sec:rmsnorm_interplay}: a parameter-free RMSNorm after the attention block, QK-norm, and the Norm-Gated SwiGLU activation. Table~\ref{tab:norm_free_setup} reports the full model and training configuration.

\begin{table}[htbp]
  \centering
  \renewcommand{\arraystretch}{1.2}
  \setlength{\tabcolsep}{9pt}
  \begin{tabular}{llc}
    \toprule
    \textbf{Category} & \textbf{Parameter} & \textbf{330M} \\
    \midrule

    \multirow{7}{*}{\textbf{Model Configuration}}
    & dimension & 1024 \\
    & number of layers & 24 \\
    & heads & 16 \\
    & kv-heads & 8 \\
    & sequence length & 1024 \\
    & vocabulary size & 50304 \\
    & Embedding Tie & True \\

    \midrule

    \multirow{8}{*}{\textbf{Train Configuration}}
    & batch size & 64 \\
    & gradient accumulation step & 2 \\
    & lr schedule & linear warmup + cosine decay \\
    & warmup steps & 850 \\
    & cosine decay lr & $10^{-3} \times lr$ \\
    & Train Steps & 8500 \\
    & \# GPUs & 8 \\
    & \# Tokens & 8.9B \\
    \bottomrule
  \end{tabular}
  \vspace{0.2cm}
  \caption{Model architecture and training configuration for the normalization-free 330M QWEN3-like experiment in Section~\ref{sec:rmsnorm_interplay}.}
  \label{tab:norm_free_setup}
\end{table}

\paragraph{Hyperparameter Sweep and Selection.} For each of Muon, \ourname{}-spec, and \ourname{}-fro, we sweep the learning rate $\eta_t$ over $\{3\times 10^{-3},\, 5\times 10^{-3},\, 7\times 10^{-3},\, 1\times 10^{-2},\, 3\times 10^{-2}\}$. For the two constrained variants, we additionally sweep the radius hyperparameter $r$ over $\{0.7, 0.8, 0.9, 1.0, 1.1, 1.2, 1.3, 1.4, 1.5\}$. Additionally, we also search qkv-split for \ourname{}-spec following~\cite{xie2026controlledllmtrainingspectral}. Table~\ref{tab:norm_free_sweep} summarizes the search ranges, and Table~\ref{tab:norm_free_chosen_hp} reports the per-optimizer hyperparameters that produce the bolded entries of Table~\ref{tab:ablation_qwen} in the main text.

\begin{table}[htbp]
  \centering
  \renewcommand{\arraystretch}{1.2}
  \setlength{\tabcolsep}{9pt}
  \begin{tabular}{lccc}
    \toprule
    \textbf{Optimizer} & \textbf{learning rate $\eta_t$} & \textbf{radius $r$} & \textbf{qkv-split} \\
    \midrule

    Muon & $[3 \times 10^{-3},\, 3 \times 10^{-2}]$ & - & - \\
    \ourname{}-spec & $[3 \times 10^{-3},\, 3 \times 10^{-2}]$ & $[0.7,\, 1.5]$ & [True, False] \\
    \ourname{}-fro & $[3 \times 10^{-3},\, 3 \times 10^{-2}]$ & $[0.7,\, 1.5]$ & - \\
    \bottomrule
  \end{tabular}
  \vspace{0.2cm}
  \caption{Hyperparameter search ranges for each optimizer in the normalization-free setting.}
  \label{tab:norm_free_sweep}
\end{table}

\begin{table}[htbp]
  \centering
  \renewcommand{\arraystretch}{1.2}
  \setlength{\tabcolsep}{9pt}
  \begin{tabular}{lccc}
    \toprule
    \textbf{Parameter} & \textbf{Muon} & \textbf{\ourname{}-spec} & \textbf{\ourname{}-fro} \\
    \midrule

    learning rate $\eta_t$ & 0.01 & 0.01 & 0.01 \\
    weight decay & 0.0 & 0.0 & 0.0\\
    radius $r$ & - & 1.2 & 0.8 \\
    alignment $c$ & - & 1.0 & 1.0  \\
    qkv-split & - & False & - \\
    \bottomrule
  \end{tabular}
  \vspace{0.2cm}
  \caption{Selected hyperparameters per optimizer used to produce the bolded entries of Table~\ref{tab:ablation_qwen}.}
  \label{tab:norm_free_chosen_hp}
\end{table}
\paragraph{Numerical Result Plots.} Figure~\ref{fig:norm_free_loss_curves} plots the training and validation loss across the 8500 training steps for all three optimizers at the chosen learning rate $\eta_t = 10^{-2}$, using the radii reported in Table~\ref{tab:norm_free_chosen_hp}. These curves give the per-step picture behind the best result in Table~\ref{tab:ablation_qwen}.

\begin{figure}[htbp]
    \centering
    \begin{minipage}{0.45\textwidth}
        \centering
        \includegraphics[width=\linewidth]{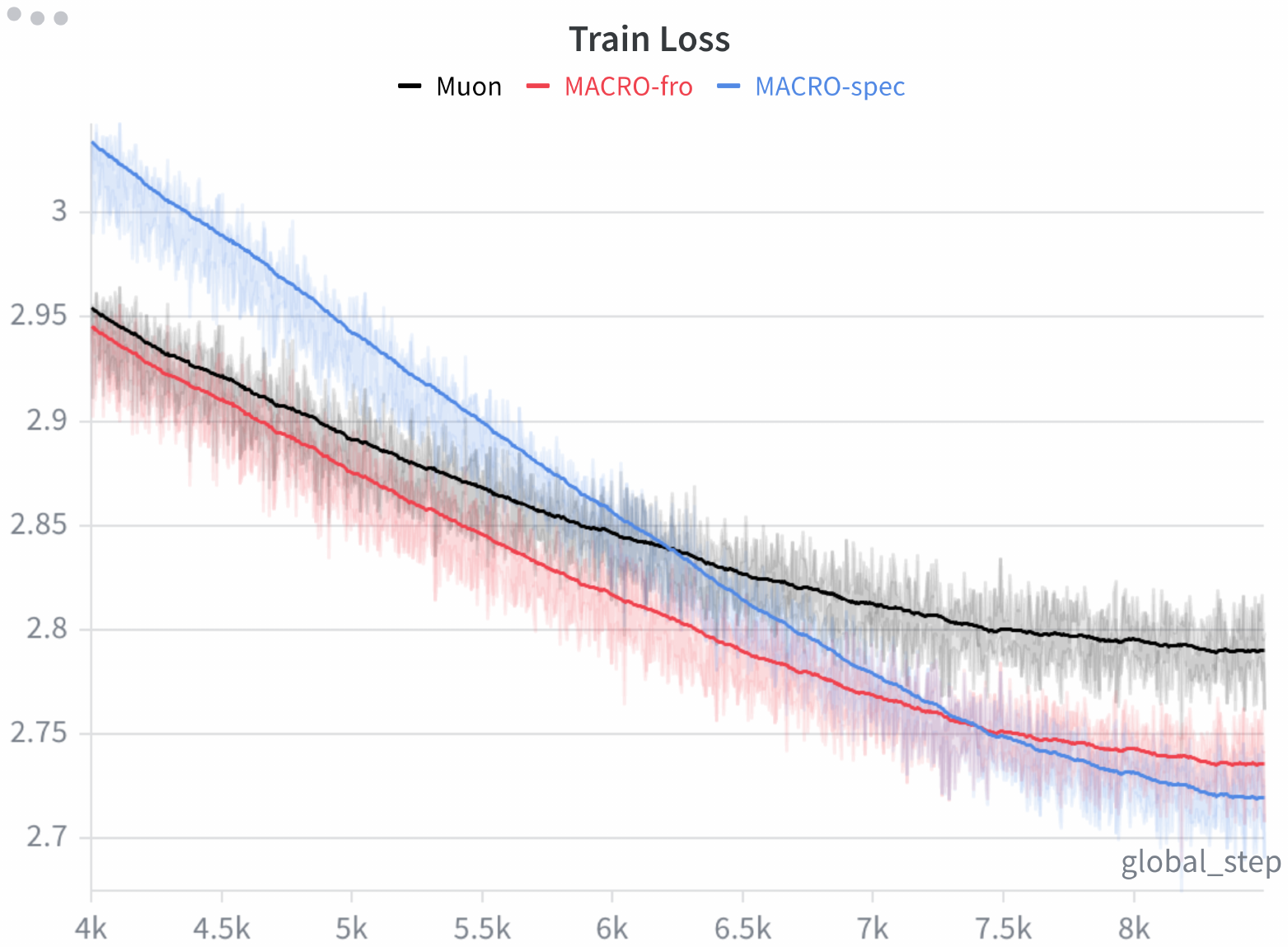}
        \vspace{0.1cm}
        {\small (a) Training Loss.}
    \end{minipage}\hfill
    \begin{minipage}{0.45\textwidth}
        \centering
        \includegraphics[width=\linewidth]{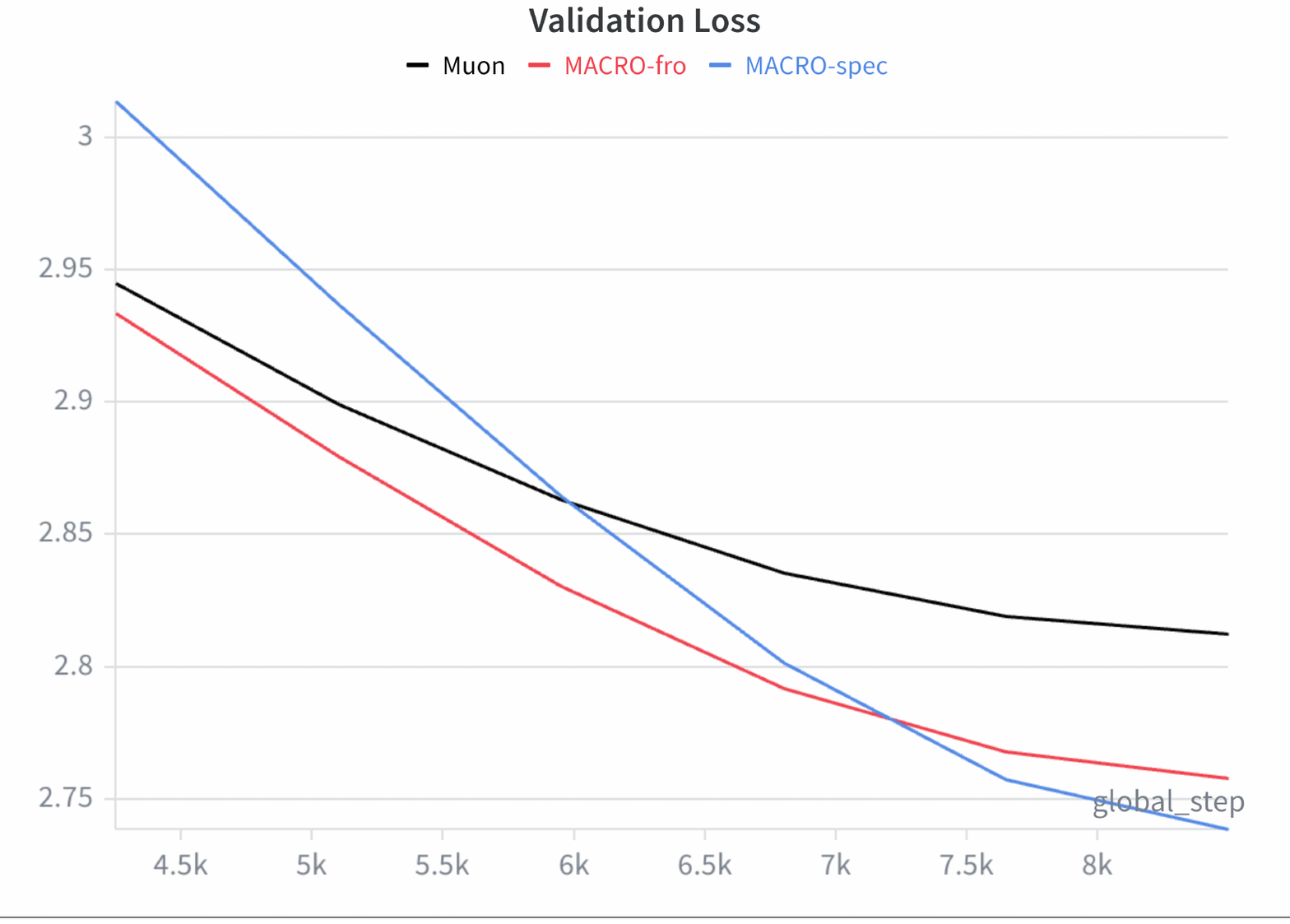}
        \vspace{0.1cm}
        {\small (b) Validation Loss.}
    \end{minipage}
    \vspace{0.2cm}
    \caption{Training and validation loss for Muon, \ourname{}-fro, and \ourname{}-spec on the 330M QWEN3-like model with all learnable RMSNorm layers removed, at the chosen learning rate $\eta_t = 10^{-2}$ (Table~\ref{tab:norm_free_chosen_hp}).}
    \label{fig:norm_free_loss_curves}
\end{figure}

\paragraph{Ablation Study.} To validate the architecture design in Section~\ref{sec:rmsnorm_interplay}, we run \ourname{}-spec while toggling three architectural choices: the activation (standard SwiGLU, Norm-Gated SwiGLU, or ReLU), the parameter-free RMSNorm after the attention block (Att-Norm), and the QK-norm. Each configuration is trained at three constraint radii $r \in \{1.1, 1.2, 1.3\}$. Table~\ref{tab:norm_free_ablation} reports the resulting validation loss. Standard SwiGLU diverges to \texttt{NaN} at every tested radius, even with both Att-Norm and QK-norm enabled. The Norm-Gated SwiGLU activation is the component that recovers stability, and adding Att-Norm and QK-norm on top yields the lowest validation loss across all three radii. ReLU is stable but underperforms Norm-Gated SwiGLU with both Att-Norm and QK-norm enabled.

\begin{table}[htbp]
  \centering
  \renewcommand{\arraystretch}{1.2}
  \setlength{\tabcolsep}{10pt}
  \begin{tabular}{cccccc}
    \toprule
    \textbf{Activation} & \textbf{Att-Norm} & \textbf{QK-norm} & \textbf{$r=1.1$} & \textbf{$r=1.2$} & \textbf{$r=1.3$} \\
    \midrule

    SwiGLU & True & True & \texttt{NaN} & \texttt{NaN} & \texttt{NaN} \\
    Norm-Gated SwiGLU & False & False & 2.76 & 2.762 & 2.77  \\
    Norm-Gated SwiGLU & True & False & 2.747 & 2.746 & 2.749 \\
    Norm-Gated SwiGLU & True & True & 2.744 & 2.742 & 2.743  \\
    ReLU & True & True & 2.762 & 2.755 & 2.757 \\
    \bottomrule
  \end{tabular}
  \vspace{0.2cm}
  \caption{Ablation Study on RMSNorm-free training}
  \label{tab:norm_free_ablation}
\end{table}

\subsection{Numerical Experiments Details in Section \ref{sec:effects_on_wd}}
\label{app:section 4.3 numerical details}

This subsection supplies the empirical evidence for the three claims of Section~\ref{sec:effects_on_wd}. First, the locked relative learning rate $\eta_{\text{rel},t} = c\,\eta_t$ produces a monotonically decaying global gradient norm under the cosine learning-rate schedule. Second, manifold constraints subsume the regularization role of decoupled weight decay. Third, the locked $\eta_{\text{rel},t}$ enables zero-shot Maximal Update Parametrization ($\mu$P) transfer across model widths. All experiments below reuse the QWEN3-like configurations summarized in Table~\ref{tab:section5_setup}.

\paragraph{Gradient Norm Decay.} Equation~\eqref{eq:update} couples the relative update magnitude strictly to $\eta_t$, so the cosine decay of $\eta_t$ translates directly into a decay of $\eta_{\text{rel},t}$. Figure~\ref{fig:grad_dynamics} plots the global gradient norm across the full training schedule for both \ourname{}-spec and \ourname{}-fro at the 120M and 330M scales. In each panel, both curves drop from roughly $3\times 10^{-4}$ at the start of training to roughly $10^{-5}$ at the end, without late-stage amplification. The figure confirms the main-body claim that locking $\eta_{\text{rel},t}$ avoids the late-stage gradient amplification that arises under heuristic weight-decay schedules~\citep{defazio2025gradientsrapidlyincreasenear}.

\begin{figure}[htbp]
    \centering
    \begin{subfigure}{0.48\textwidth}
        \centering
        \includegraphics[width=\linewidth]{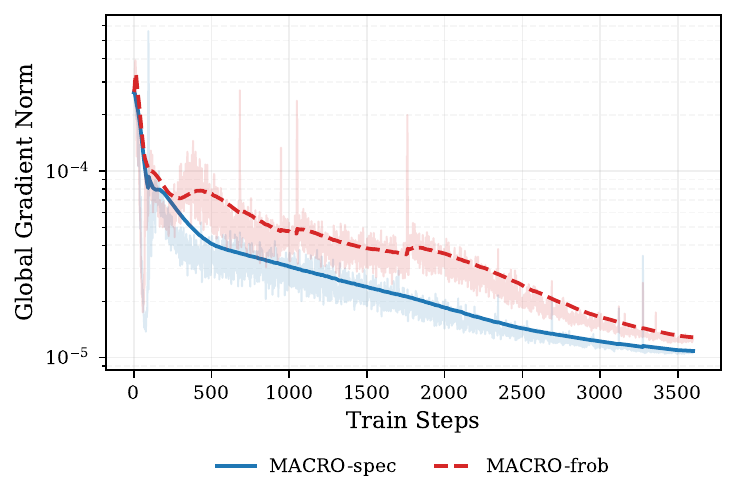}
        \caption{120M model, 3600-step schedule.}
        \label{fig:grad_120M}
    \end{subfigure}\hfill
    \begin{subfigure}{0.48\textwidth}
        \centering
        \includegraphics[width=\linewidth]{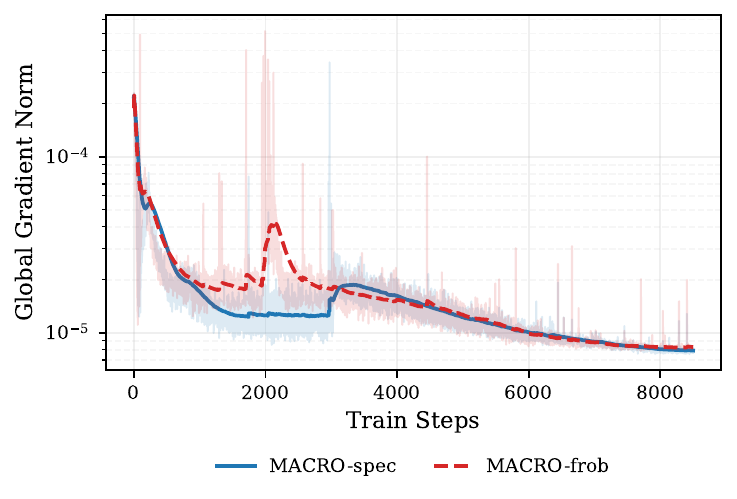}
        \caption{330M model, 8500-step schedule.}
        \label{fig:grad_330M}
    \end{subfigure}
    \caption{Global gradient norm across training for \ourname{}-spec (solid) and \ourname{}-fro (dashed) on the 120M and 330M QWEN3-like models from Table~\ref{tab:section5_setup}. Both curves decay monotonically from $\sim 3\times 10^{-4}$ to $\sim 10^{-5}$, consistent with the locked relative learning rate $\eta_{\text{rel},t} = c\,\eta_t$ tracking the cosine learning-rate schedule.}
    \label{fig:grad_dynamics}
\end{figure}

\paragraph{Weight Decay Ablation.} Section~\ref{sec:effects_on_wd} argues that manifold constraints subsume the regularization effect of decoupled weight decay. To verify this empirically, we train the 330M QWEN3-like model with each of MuonH-spec, MuonH-fro, \ourname{}-spec, and \ourname{}-fro under two settings: $\lambda = 0.0$ (no weight decay) and $\lambda = 0.1$. We sweep the learning rate over $\eta_t \in \{0.007,\, 0.01,\, 0.03\}$ and hold every other hyperparameter fixed at the values in Table~\ref{tab:section5_chosen_hp}. Table~\ref{tab:wd_ablation} reports the resulting validation loss. For every optimizer and every learning rate, the two $\lambda$ settings agree within $0.001$. Adding weight decay therefore has no measurable effect once a manifold constraint is active, which confirms the substitution claim.

\begin{table}[htbp]
  \centering
  \renewcommand{\arraystretch}{1.2}
  \setlength{\tabcolsep}{10pt}
  \begin{tabular}{lccc}
    \toprule
    \textbf{Optimizer} & $\eta_t = 0.007$ & $\eta_t = 0.01$ & $\eta_t = 0.03$ \\
    \midrule
    MuonH-spec ($\lambda = 0.0$) & 2.728 & 2.715 & 2.766 \\
    MuonH-spec ($\lambda = 0.1$) & 2.729 & 2.715 & 2.766 \\
    \midrule
    MuonH-fro  ($\lambda = 0.0$) & 2.740 & 2.722 & 2.717 \\
    MuonH-fro  ($\lambda = 0.1$) & 2.740 & 2.722 & 2.717 \\
    \midrule
    \ourname{}-spec ($\lambda = 0.0$) & 2.728 & 2.715 & 2.783 \\
    \ourname{}-spec ($\lambda = 0.1$) & 2.727 & 2.715 & 2.783 \\
    \midrule
    \ourname{}-fro  ($\lambda = 0.0$) & 2.742 & 2.730 & 2.718 \\
    \ourname{}-fro  ($\lambda = 0.1$) & 2.742 & 2.729 & 2.718 \\
    \bottomrule
  \end{tabular}
  \vspace{0.2cm}
  \caption{Weight-decay ablation on the 330M QWEN3-like model. Each manifold-constrained optimizer is trained with weight-decay coefficient $\lambda \in \{0.0,\, 0.1\}$ at three learning rates $\eta_t$. Across every cell, the two $\lambda$ settings differ by at most $0.001$, which confirms that manifold constraints subsume the regularization role of decoupled weight decay. This phenomenon contradicts the Muon which benefits from weight decay widely reported in the literature. }
  \label{tab:wd_ablation}
\end{table}

\paragraph{$\mu$P Transfer.} The locked relative learning rate also enables zero-shot $\mu$P transfer across model widths. To make the experiment compatible with $\mu$P~\citep{yang2023spectral}, we initialize the linear-layer weights from $\mathcal{N}(0, \sigma^2)$ with $\sigma = \min\!\left\{\sqrt{\dout/\din},\, 1\right\}/\sqrt{\din}$ and set the Frobenius radius to $R = r\,\min\!\left\{\sqrt{\dout/\din},\, 1\right\}\sqrt{\dout}$, which absorbs the architecture-dependent $\Theta(1)$ constant into the tunable hyperparameter $r$ (see Section~\ref{sec:activation_control}). For Spectral Sphere $\mathcal{M}_R$ the radius is same with the radii in Section~\ref{sec:activation_control}. Holding the rest of the 330M training configuration of Table~\ref{tab:section5_setup} fixed, we vary the model width across $\{256,\, 512,\, 1024,\, 2048\}$ and the learning rate across $\{0.005,\, 0.007,\, 0.009,\, 0.01,\, 0.02,\, 0.03\}$. Figure~\ref{fig:muP} reports the resulting validation loss for both \ourname{}-spec and \ourname{}-fro. Within each width, the validation-loss curve is nearly flat across the learning-rate sweep, and the minimum sits at the same learning rate ($\eta_t \approx 0.01$ for \ourname{}-spec and $\eta_t \approx 0.02$ for \ourname{}-fro) across all four widths. The locked $\eta_{\text{rel},t}$ therefore delivers zero-shot $\mu$P transfer in practice.

\begin{figure}[htbp]
    \centering
    \begin{subfigure}{0.48\textwidth}
        \centering
        \includegraphics[width=\linewidth]{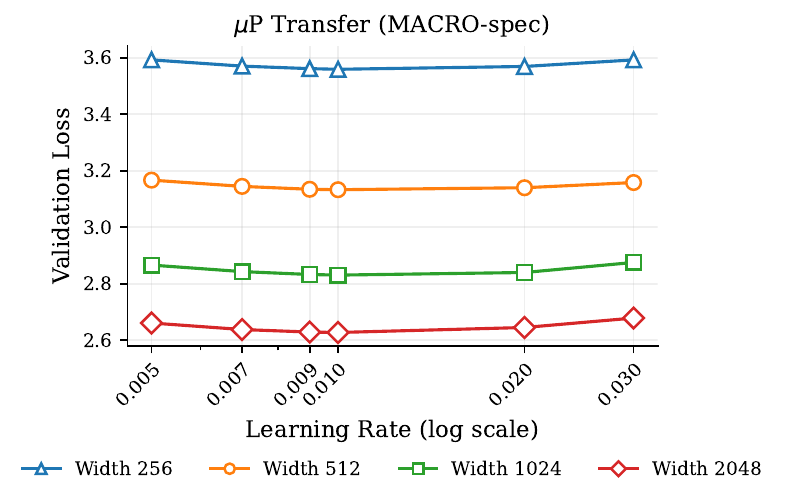}
        \caption{\ourname{}-spec.}
        \label{fig:muP_spec}
    \end{subfigure}\hfill
    \begin{subfigure}{0.48\textwidth}
        \centering
        \includegraphics[width=\linewidth]{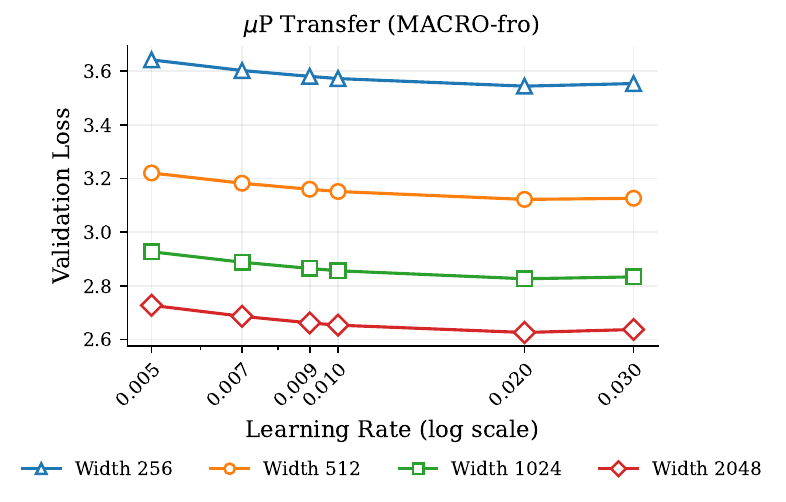}
        \caption{\ourname{}-fro.}
        \label{fig:muP_fro}
    \end{subfigure}
    \caption{Validation loss versus learning rate at four model widths $\{256,\, 512,\, 1024,\, 2048\}$ for the 330M QWEN3-like configuration of Table~\ref{tab:section5_setup}. Each width is trained with the same per-width radius $r$ and the same alignment constant $c$. The optimum learning rate is consistent across widths, which empirically verifies zero-shot $\mu$P transfer for both \ourname{}-spec and \ourname{}-fro.}
    \label{fig:muP}
\end{figure}

\subsection{Numerical Experiments Details in Section \ref{sec:justification}}
\label{app:section 5 numerical details}

\paragraph{Experimental Setup.}
Section~\ref{sec:justification} evaluates \ourname{} on QWEN3-like architectures at three scales: 120M, 330M, and 1B parameters. Each model uses RoPE, Grouped-Query Attention (GQA), and a SwiGLU activation, with pre-normalization RMSNorm placed before each residual block. We train every model on OpenWebText through the NanoChat~\cite{nanochat} codebase following the setup of Section~\ref{sec:justification}. Across all scales, the 1D parameters and the embedding layer are trained with AdamW under the fixed configuration described at the start of this appendix. The remaining parameters are trained with the optimizer specified by each run. To ensure a fair comparison, we fix the random seed across runs. Table~\ref{tab:section5_setup} reports the full model and training configuration.
\begin{table}[htbp]
  \centering
  \renewcommand{\arraystretch}{1.2}
  \setlength{\tabcolsep}{8pt} 
  \begin{tabular}{llccc}
    \toprule
    \textbf{Category} & \textbf{Parameter} & 120M & 330M & 1B  \\
    \midrule
    \multirow{7}{*}{\textbf{Model Config}} 
    & dimension & 768 & 1024 & 2048  \\
    & number of layers & 12 & 24 & 20 \\
    & heads & 6 & 16 & 16 \\
    & kv\_heads & 3 & 8 & 8 \\
    & sequence length & 1024 & 1024 & 1024 \\
    & vocabulary size & 50304 & 50304 & 50304 \\
    & Embedding Tie & True & True & True \\
    
    \midrule

    \multirow{7}{*}{\textbf{Train Config}} 
    & batch size & 128 & 64 & 16 \\ 
    & gradient accumulation step & 1 & 2 & 16 \\
    & warmup steps & 360 & 850 & 2400 \\
    & cosine decay lr & $10^{-3} \times lr$ & $10^{-3} \times lr$ & $10^{-3} \times lr$ \\
    & Train Steps & 3600 & 8500 & 24000 \\
    & \# GPUs & 8 & 8 & 8  \\
    & \# Tokens & 3.7B & 8.9B & 50B \\
    \bottomrule 
  \end{tabular}
  \vspace{0.2cm}
  \caption{Model architecture and training configuration for the 120M, 330M, and 1B QWEN3-like experiments in Section~\ref{sec:justification}.}
  \label{tab:section5_setup}
\end{table}
\paragraph{Hyperparameter Sweep and Selection.} We fix $\beta_1 = 0.9$ for every optimizer. For the SSO and FSO double-loop solvers, we set the subproblem tolerance to $10^{-4}$ and cap the inner-loop iterations at $10$. For Muon, \ourname{}-spec, \ourname{}-fro, MuonH-spec, and MuonH-fro, we sweep the learning rate $\eta_t$ over $\{5\times 10^{-3},\, 7\times 10^{-3},\, 1\times 10^{-2},\, 3\times 10^{-2},\, 5\times 10^{-2}\}$. For the four manifold-constrained variants, we additionally sweep the constraint radius $r$ over $\{0.5,\, 1.0,\, 2.0,\, 5.0,\, 10.0\}$. For \ourname{}-spec and MuonH-spec, we also sweep the qkv-split flag following~\cite{xie2026controlledllmtrainingspectral}. The double-loop algorithms SSO and FSO are sensitive to compute cost, so we sweep only the learning rate and inherit the radius and qkv-split from the corresponding MuonH variant. Table~\ref{tab:section5_sweep} summarizes the search ranges, and Table~\ref{tab:section5_chosen_hp} reports the chosen hyperparameters that produce the bold entries of Table~\ref{tab:optimizer_comparison} in the main text.
\begin{table}[htbp]
  \centering
  \renewcommand{\arraystretch}{1.2}
  \setlength{\tabcolsep}{9pt}
  \begin{tabular}{lccc}
    \toprule
    \textbf{Optimizer} & \textbf{learning rate $\eta_t$} & \textbf{radius $r$} & \textbf{qkv-split} \\
    \midrule

    Muon & $[5 \times 10^{-3}- 5 \times 10^{-2}]$ & - & - \\
    \midrule
    
    \ourname{}-spec & $[5 \times 10^{-3}- 5 \times 10^{-2}]$ & $\{0.5, 1.0, 2.0, 5.0,10.0\}$ & [False, True] \\
    \ourname{}-fro & $[5 \times 10^{-3}- 5 \times 10^{-2}]$ & $\{0.5, 1.0, 2.0, 5.0,10.0\}$ & -  \\

    \midrule 
    MuonH-spec & $[5 \times 10^{-3}- 5 \times 10^{-2}]$ & $\{0.5, 1.0, 2.0, 5.0,10.0\}$ & [False, True] \\
    MuonH-fro & $[5 \times 10^{-3}- 5 \times 10^{-2}]$ & $\{0.5, 1.0, 2.0, 5.0,10.0\}$ & - \\

    \midrule
    SSO & $[5 \times 10^{-3}- 5 \times 10^{-2}]$ & - & - \\
    FSO & $[5 \times 10^{-3}- 5 \times 10^{-2}]$ & - & - \\
    \bottomrule
  \end{tabular}
  \vspace{0.2cm}
  \caption{Hyperparameter search ranges for each optimizer in Section~\ref{sec:justification}. The qkv-split sweep applies only to spectral-sphere variants, since the Frobenius constraint is invariant under the split. For SSO we follow~\cite{xie2026controlledllmtrainingspectral} and fix qkv-split to True.}
  \label{tab:section5_sweep}
\end{table}

\begin{table}[htbp]
  \centering
  \renewcommand{\arraystretch}{1.2}
  \setlength{\tabcolsep}{10pt}
  \begin{tabular}{llcccc}
    \toprule
    \textbf{Model} &\textbf{Optimizer} & \textbf{learning rate} & \textbf{weight decay} & \textbf{r}  & \textbf{qkv-split} \\
    \midrule
    \multirow{7}{*}{\textbf{120M}} 
    & Muon & 0.03 & 0.1 & - & - \\
    & MuonH-spec & 0.01 & 0.0 & 2.0 & False \\
    & MuonH-fro & 0.03 & 0.0 & 1.0 & - \\
    & SSO & 0.01 & 0.0 & 2.0 & True \\
    & FSO & 0.03 & 0.0 & 1.0 & - \\
    & \ourname{}-spec & 0.01 & 0.0 & 2.0 & False \\
    & \ourname{}-fro & 0.03 & 0.0 & 1.0 & - \\
    
    \midrule
    \multirow{5}{*}{\textbf{330M}} 
    & Muon & 0.03 & 0.1 & - & - \\
    & MuonH-spec & 0.01 & 0.0 & 2.0 & False \\
    & MuonH-fro & 0.03 & 0.0 & 1.0 & - \\
    & SSO & 0.01 & 0.0 & 2.0 & True  \\
    & FSO & 0.03 & 0.0 & 2.0 & - \\
    & \ourname{}-spec & 0.01 & 0.0 & 2.0 & False \\
    & \ourname{}-fro & 0.03 & 0.0 & 1.0 & - \\
    \midrule
    \multirow{5}{*}{\textbf{1B}} 
    & Muon & 0.01 & 0.1 & - & - \\
    & MuonH-spec & 0.01 & 0.0 & 2.0 & True \\
    & MuonH-fro & 0.01 & 0.0 & 1.0 & - \\
    & \ourname{}-spec & 0.007 & 0.0 & 2.0 & True \\
    & \ourname{}-fro & 0.01 & 0.0 & 1.0 & - \\

    \bottomrule
  \end{tabular}
  \vspace{0.2cm}
  \caption{Selected hyperparameters per optimizer per model scale, used to produce the bold entries of Table~\ref{tab:optimizer_comparison} in the main text.}
  \label{tab:section5_chosen_hp}
\end{table}

\paragraph{Per-Step Training and Validation Loss Curves.} Figure~\ref{fig:section5_loss_curves} plots the training and validation loss across the full training schedule for each model scale. The curves give the per-step trajectory behind the final numbers reported in Table~\ref{tab:optimizer_comparison}. All manifold-constrained optimizers (\ourname{}-spec, \ourname{}-fro, MuonH-spec, MuonH-fro, SSO, FSO) track the unconstrained Muon baseline closely throughout training, and they are superior than Muon in the late stage of training. Across the three scales, \ourname{}-spec and \ourname{}-fro reach validation losses comparable to the heuristic MuonH counterparts and to the exact double-loop SSO and FSO baselines.

\begin{figure}[htbp]
  \centering

  \begin{subfigure}{1.0\textwidth}
    \centering
    \includegraphics[width=\linewidth]{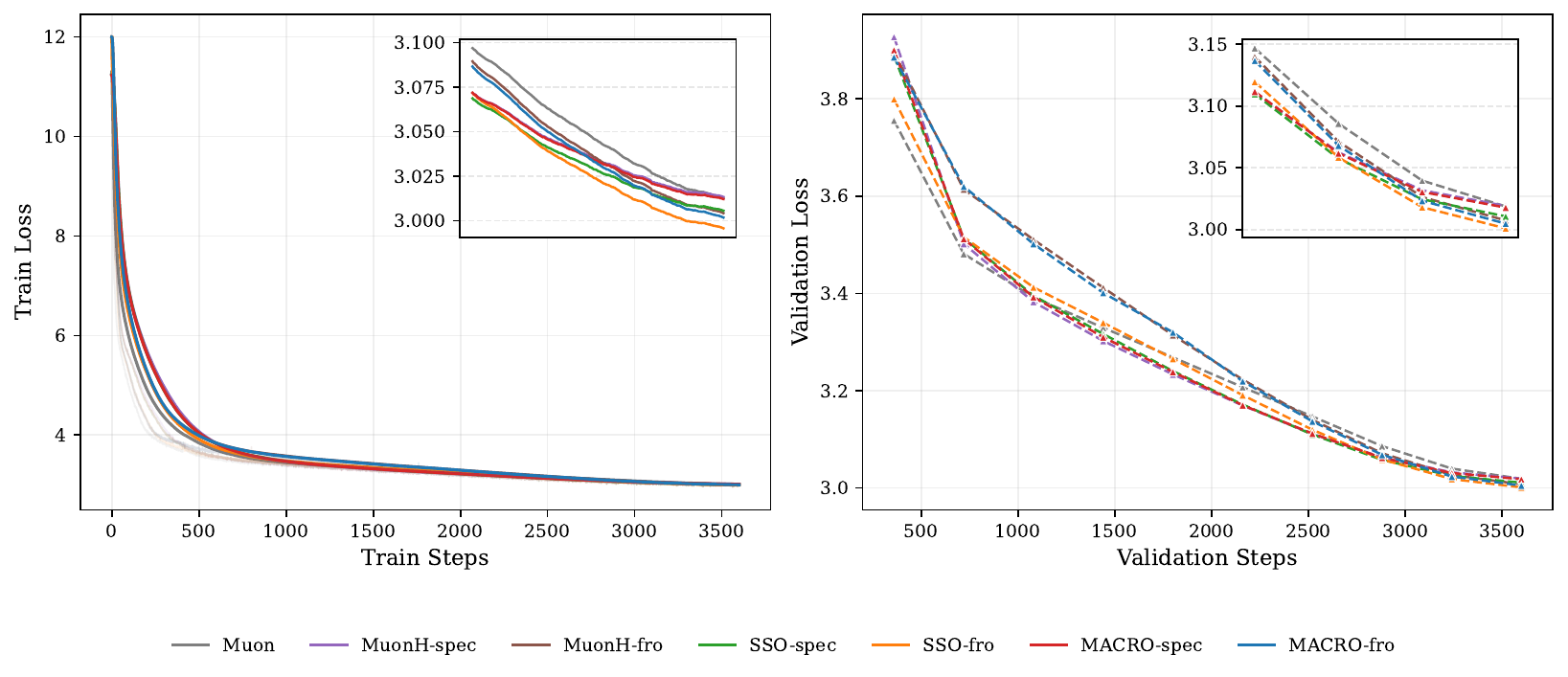}
    \caption{Train and Validation Loss for 120M training.}
    \label{fig:loss_120M}
  \end{subfigure}
  \\[2ex]

  \begin{subfigure}{1.0\textwidth}
    \centering
    \includegraphics[width=\linewidth]{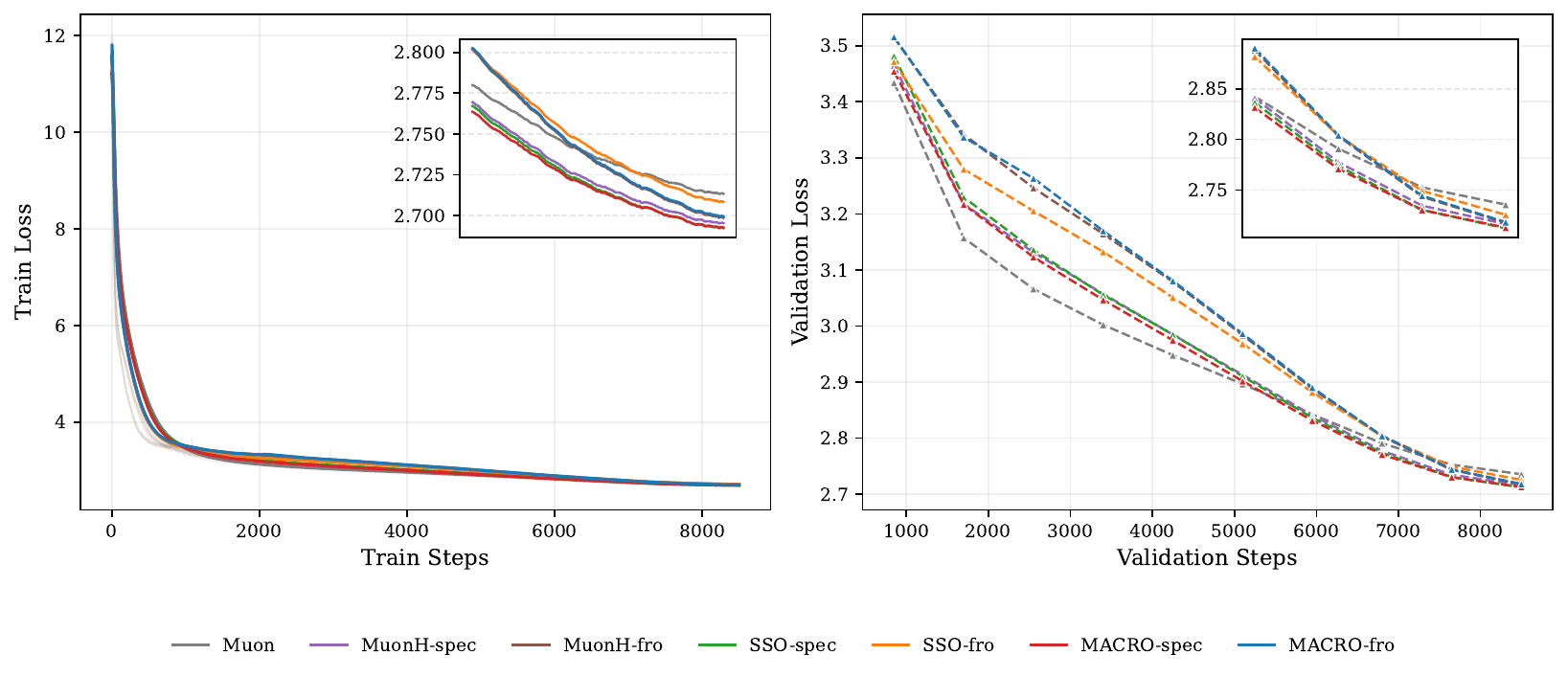}
    \caption{Train and Validation Loss for 330M training.}
    \label{fig:loss_330M}
  \end{subfigure}
  \\[2ex]

  \begin{subfigure}{1.0\textwidth}
    \centering
    \includegraphics[width=\linewidth]{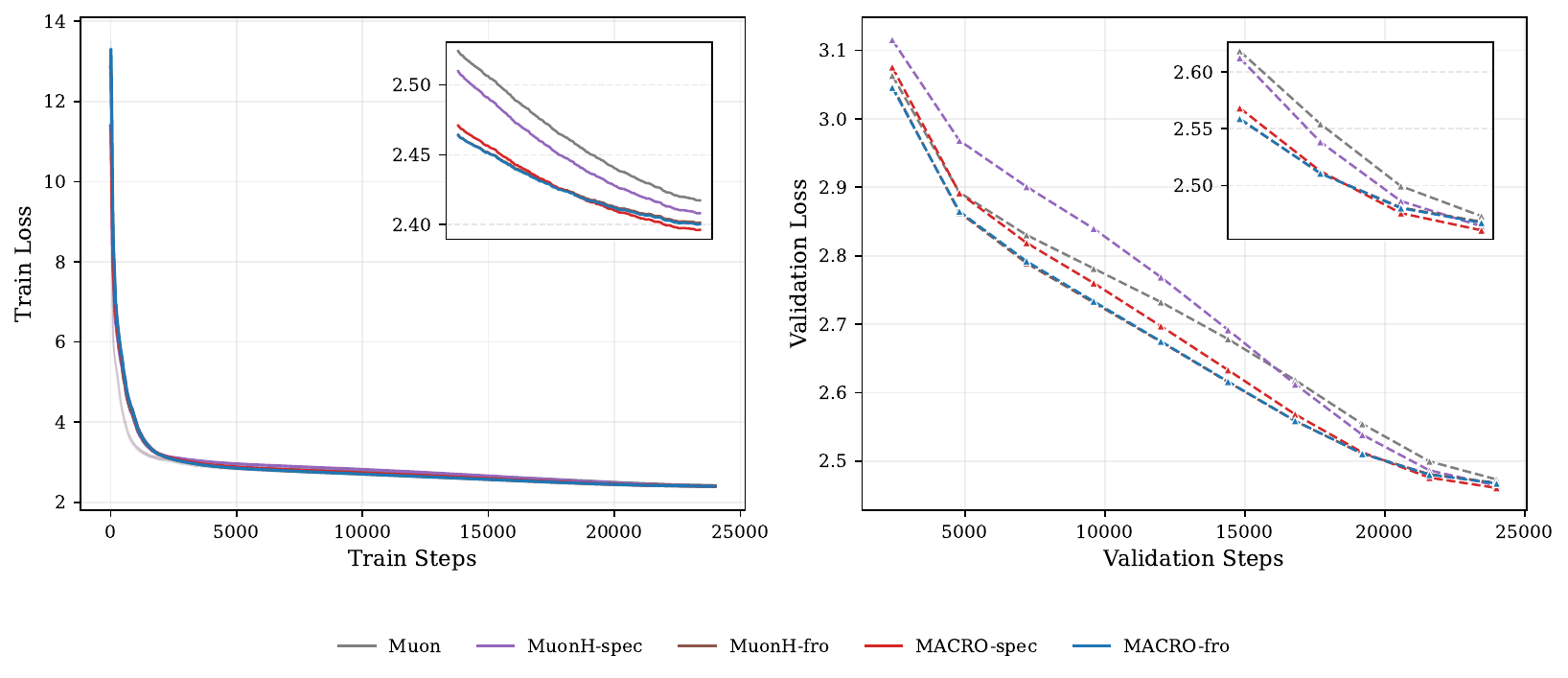}
    \caption{Train and Validation Loss for 1B training. SSO and FSO are omitted at this scale because of their double-loop computational cost (see Table~\ref{tab:optimizer_comparison}).}
    \label{fig:loss_1B}
  \end{subfigure}

  \caption{Training (left) and validation (right) loss across the full training schedule for the 120M, 330M, and 1B QWEN3-like models. Each subfigure compares Muon, MuonH-spec, MuonH-fro, SSO, FSO, \ourname{}-spec, and \ourname{}-fro under the chosen hyperparameters from Table~\ref{tab:section5_chosen_hp}. These curves are the per-step trajectories underlying Table~\ref{tab:optimizer_comparison}.}
  \label{fig:section5_loss_curves}
\end{figure}

\paragraph{Tangent Space Violation Ablation.} \ourname{} replaces the inner bisection of SSO and FSO with a single-step tangent space projection (Line~6 of Algorithm~\ref{alg:macro}). To quantify the resulting trade-off, we measure the per-step tangent space violation, defined as the residual norm $\mathrm{vio}(\bW_t) := |\langle \Gamma_t, \widetilde{\nabla}_t\rangle|$ between the proposed update direction $\widetilde{\nabla}_t$ and the normal vector to the constraint surface $\Gamma_t$ defined in Appendix \ref{app:Tangent Space Projection and Retraction}. Figure~\ref{fig:section5_violation} reports the average of $\mathrm{vio}(\bW_t)$ across the full 8500-step training schedule of the 330M model, separated by linear-layer type (FFN-Up, Attn-Out, Attn-Value, Attn-Query, Attn-Key) and by manifold ($\cM_S$ on the left, $\cM_F$ on the right). Solid curves report \ourname{}; dashed curves report the corresponding exact double-loop solver (SSO for $\cM_S$, FSO for $\cM_F$). Two conclusions follow.

First, \ourname{} keeps the absolute tangent space violation small. The solid curves stabilize at roughly $10^{-3}$ to $10^{-2}$ across all five layer types and both manifolds. The single-step projection therefore lands near, but not exactly on, the tangent space. The double-loop dashed curves stabilize at roughly $10^{-4}$, one to two orders of magnitude lower. Second, this larger residual does not translate into worse validation loss. Table~\ref{tab:optimizer_comparison} shows that \ourname{}-spec and \ourname{}-fro reach validation losses comparable to SSO and FSO on 120M model and 330M model. 
Removing the inner bisection therefore eliminates a large per-step computation overhead at no measurable cost in validation loss, and \ourname{} attains a favorable balance between tangent space residual and per-step computation.

\begin{figure}[htbp]
  \centering
  \includegraphics[width=\linewidth]{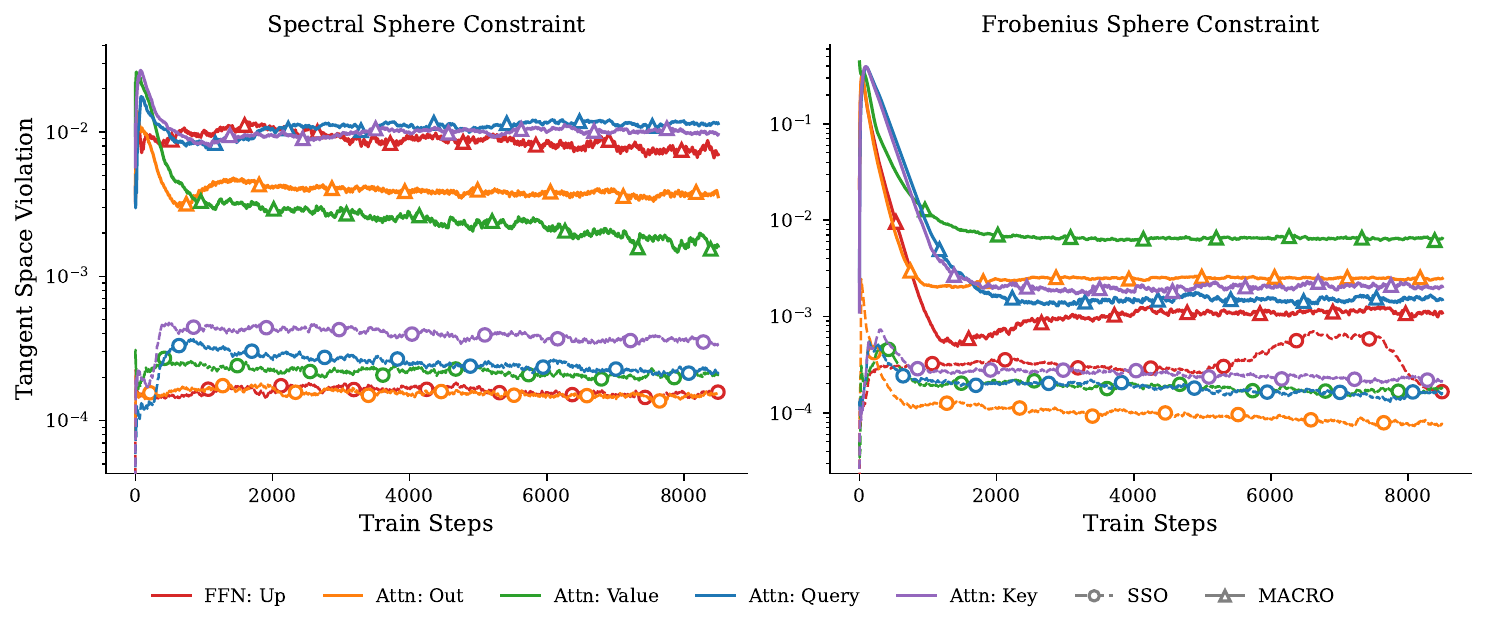}
  \caption{The average of Per-step tangent space violation $\mathrm{vio}(\bW_t) := |\langle \Gamma_t, \widetilde{\nabla}_t\rangle|$, where $\widetilde{\nabla}_t$ is the update direction and $\Gamma_t$ is the normal vector to the constraint surface, across the 8500-step training schedule of the 330M QWEN3-like model. The left panel reports the spectral sphere ($\cM = \cM_S$); the right panel reports the Frobenius sphere ($\cM = \cM_F$). Each colored curve corresponds to a linear-layer type (FFN-Up, Attn-Out, Attn-Value, Attn-Query, Attn-Key); solid curves report \ourname{} (single-step projection) and dashed curves report the corresponding exact double-loop solver (SSO for $\cM_S$, FSO for $\cM_F$). Across all five layer types and both manifolds, \ourname{} stabilizes at residuals on the order of $10^{-3}$ to $10^{-2}$, one to two orders of magnitude above the double-loop baseline. Table~\ref{tab:optimizer_comparison} shows that this larger residual does not degrade validation loss.}
  \label{fig:section5_violation}
\end{figure}

\subsection{Empirical Equivalence of Oblique and Frobenius Constraints in Section~\ref{sec:background}}
\label{app:oblique_redundancy}

In Section~\ref{sec:background}, we introduced Oblique manifolds as candidates for row-wise or column-wise constraints. In this appendix, we provide the detailed empirical justification for omitting these Oblique constraints in favor of the Frobenius Sphere.
\paragraph{Generalization Performance Advantage.}We first compare the generalization performance of the global Frobenius constraint ($\mathcal{M}_F$) against the Input and Output Oblique constraints ($\mathcal{M}_{O_{\textrm{in}}}$ and $\mathcal{M}_{O_{\textrm{out}}}$). Figure~\ref{fig:frobvsbblique_ablation} (a) plots the validation loss across various learning rates. The results show that the Frobenius constraint consistently achieves lower validation loss than both Oblique constraints. This clear performance gap provides the primary motivation for adopting the Frobenius sphere.
\paragraph{Equivalence of Internal Training Dynamics.}To understand why Oblique constraints do not improve performance, we analyze the internal optimization dynamics. We specifically compare the global Frobenius constraint against the Input Oblique constraint. Figure~\ref{fig:frobvsbblique_ablation} (b) and Figure~\ref{fig:frobvsbblique_ablation} (c) track the spectral norm and the maximum row norm ($2 \to \infty$ norm), respectively. For most learning rates, the trajectories under the Frobenius constraint closely match the trajectories under the explicitly normalized Input Oblique constraint. This matching behavior indicates that a Frobenius naturally regulates individual feature dimensions. The neural network implicitly maintains a uniform norm distribution without requiring strict row-wise or column-wise limitations.

\paragraph{Adaptive Capacity at Extreme Learning Rates.} While the dynamics are generally similar, Figure Figure~\ref{fig:frobvsbblique_ablation} (d) illustrates the maximum column norm ($1 \to \infty$ norm) dynamics. At lower and standard learning rates, both constraints produce similar flat trajectories. However, at the highest learning rate ($\eta = 3 \times 10^{-2}$), the trajectories separate. The Input Oblique constraint strictly limits the column norm, keeping the trajectory flat. In contrast, the Frobenius constraint allows this specific norm to adaptively increase. This flexibility suggests that the Frobenius sphere provides a better adaptive balance. It regularizes the overall weight matrix but allows specific feature dimensions to adapt when necessary. Because Oblique constraints offer no performance benefits and restrict this adaptive capacity, we omit them from our core theoretical and empirical analyses.

\begin{figure}[htbp]
    \centering
    \begin{minipage}{0.48\textwidth}
        \centering
        \includegraphics[width=\linewidth]{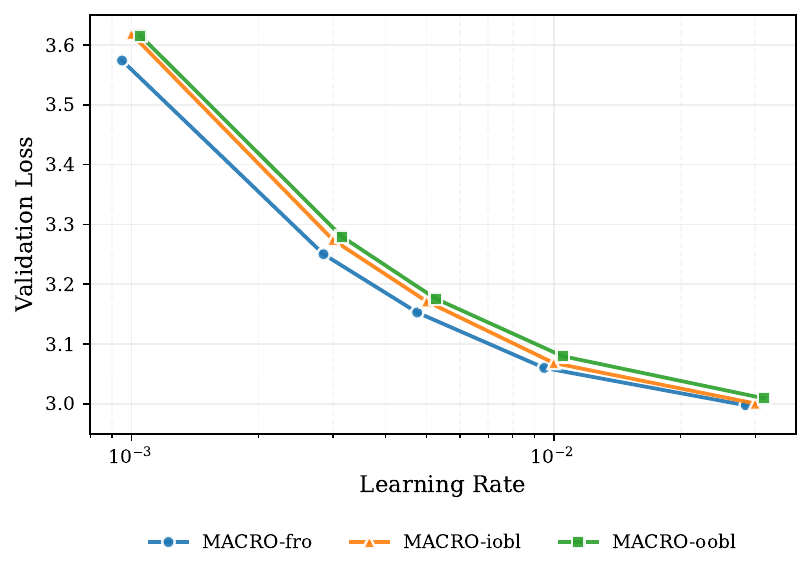}
        {\small(a) \textbf{Validation loss across different learning rates}. We compare the performance of MACRO-fro, MACRO-iobl, and MACRO-oobl. The x-axis represents the learning rate on a logarithmic scale. The global Frobenius constraint (MACRO-fro) consistently achieves the lowest validation loss across all evaluated learning rates compared to the Oblique constraints.}
    \end{minipage}\hfill
    \begin{minipage}{0.48\textwidth}
        \centering
        \includegraphics[width=\linewidth]{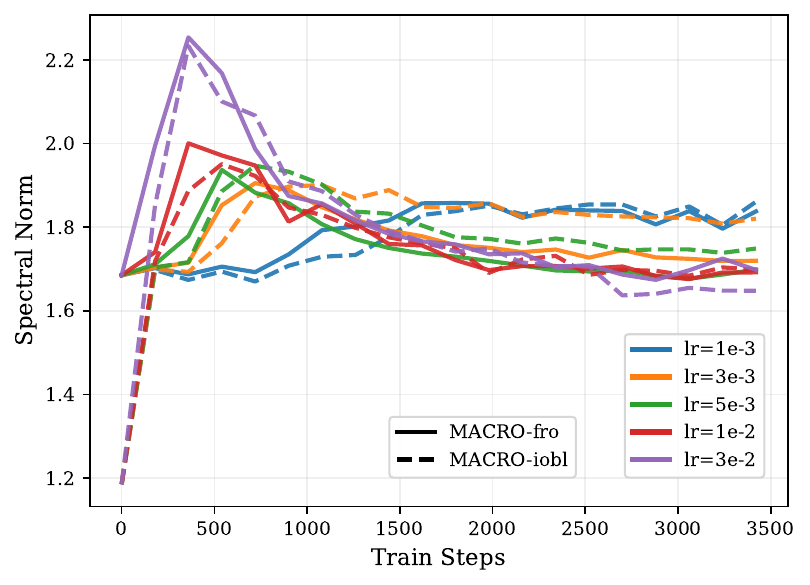}
        {\small (b) \textbf{Spectral norm dynamics during training.} We track the spectral norm for MACRO-fro (solid lines) and MACRO-iobl (dashed lines) across five different learning rates. Despite the architectural differences between the constraints, the spectral norm trajectories remain highly similar throughout the training process.}
    \end{minipage}
    
    \vspace{0.5cm}
    
    \begin{minipage}{0.48\textwidth}
        \centering
        \includegraphics[width=\linewidth]{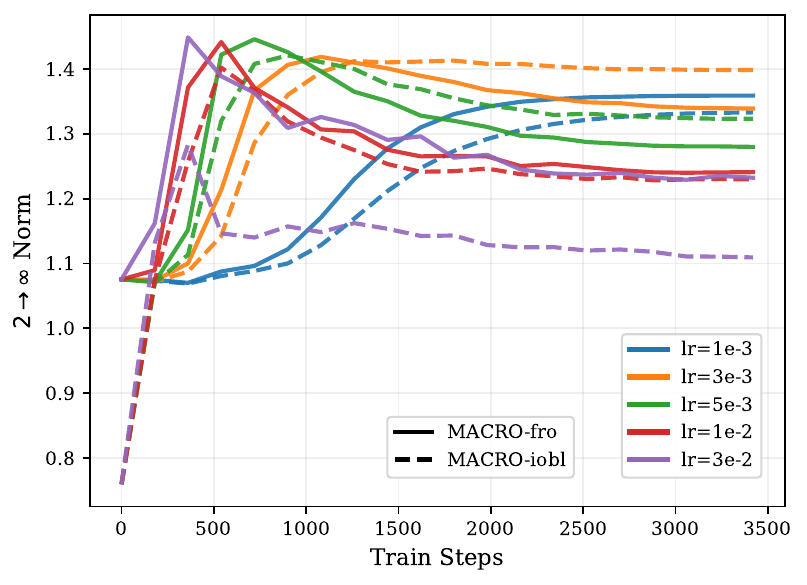}
        {\small (c) \textbf{Maximum $2 \to \infty$ Norm dynamics.} We compare the $2 \to \infty$ norm under the MACRO-fro and MACRO-iobl constraints. Similar to the spectral norm behavior, both constraints produce closely matching trajectories across the entire training period.}
    \end{minipage}\hfill
    \begin{minipage}{0.48\textwidth}
        \centering
        \includegraphics[width=\linewidth]{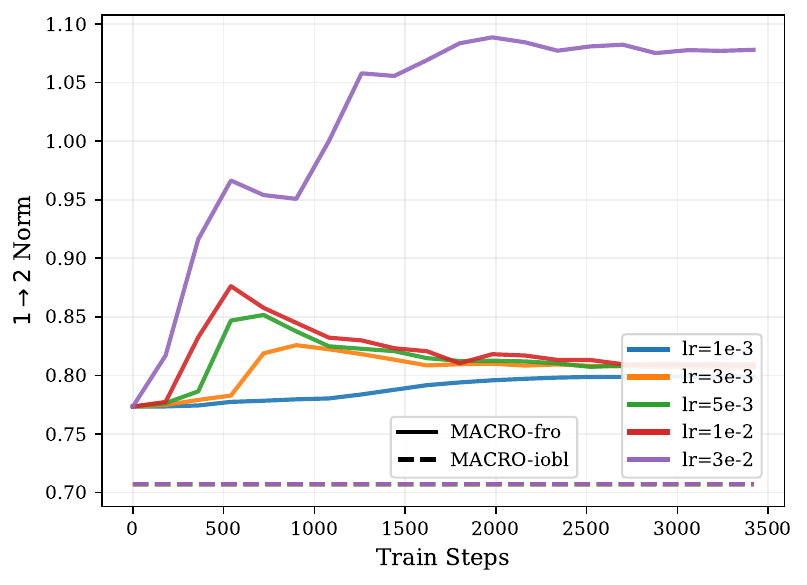}
        \vspace{0.1cm}
        {\small (d) \textbf{Maximum $1 \to 2$ Norm dynamics.} We analyze the $1 \to 2$ norm under the two constraints. For most learning rates, the trajectories match closely. However, at the learning rate $\eta = 3 \times 10^{-2}$, the Frobenius constraint allows the norm to increase, whereas the Input Oblique constraint strictly limits this growth.}
    \end{minipage}
    \vspace{0.2cm}
    \caption{Ablation study on geometric constraints.}
    \label{fig:frobvsbblique_ablation}
\end{figure}

\end{document}